\renewcommand*{\backrefalt}[4]{%
    \ifcase #1 \footnotesize{(Not cited.)}%
    \or        \footnotesize{(Cited on page~#2.)}%
    \else      \footnotesize{(Cited on pages~#2.)}%
    \fi}
\newtheorem{corcou}{Corollary}
\newtheorem{cor}[corcou]{Corollary}
\newtheorem{lemcou}{Lemma}
\newtheorem{lem}[lemcou]{Lemma}
\newtheorem{defncou}{Definition}
\newtheorem{defn}[defncou]{Definition}
\newtheorem{thmcou}{Theorem}
\newtheorem{thm}[thmcou]{Theorem}
\newtheorem{remcou}{Remark}
\newtheorem{rem}[remcou]{Remark}
\numberwithin{equation}{section}
\title{\bf{\LARGE{GBM-based Bregman Proximal Algorithms for Constrained Learning}}}
\author{Zhenwei Lin\thanks{zhenweilin@163.sufe.edu.cn, Shanghai University of Finance and Economics}$\qquad$   $\qquad$  Qi Deng \thanks{qideng@sufe.edu.cn, Shanghai University of Finance and Economics}
}
\global\long\def\apd{\textup{ABPP}}%
\global\long\def\ilcp{\textup{CBPR}}%
\global\long\def\bell{\bm{\ell}}%
\global\long\def\inprod#1#2{\left\langle #1,#2\right\rangle }%
\global\long\def\binner#1#2{\big\langle#1,#2\big\rangle}%
\global\long\def\norm#1{\Vert#1\Vert}%
\global\long\def\brbra#1{\big(#1\big)}%
\global\long\def\sbra#1{[#1]}%
\global\long\def\abs#1{\vert#1\vert}%
\global\long\def\bcbra#1{\big\{#1\big\}}%
\global\long\def\vertiii#1{\left\vert \kern-0.25ex  \left\vert \kern-0.25ex  \left\vert #1\right\vert \kern-0.25ex  \right\vert \kern-0.25ex  \right\vert }%
\global\long\def\mcal#1{\mathcal{#1}}%
\global\long\def\mbb#1{\mathbb{#1}}%
\global\long\def\onebf{\mathbf{1}}%
\global\long\def\zerobf{\mathbf{0}}%
\global\long\def\dist{\operatornamewithlimits{dist}}%
\global\long\def\argmin{\operatornamewithlimits{argmin}}%
\global\long\def\st{\operatornamewithlimits{s.t.}}%
\global\long\def\dom{\mathrm{dom}}%
\global\long\def\and{\mathrm{and}}%
\global\long\def\vep{\varepsilon}%
\global\long\def\dom{\operatornamewithlimits{dom}}%
\global\long\def\Rbb{\mathbb{R}}%
\global\long\def\Ocal{\mathcal{O}}%
\global\long\def\Xcal{\mathcal{X}}%
\global\long\def\abf{\mathbf{a}}%
\global\long\def\bbf{\mathbf{b}}%
\global\long\def\qbf{\mathbf{q}}%
\global\long\def\gbf{\mathbf{g}}%
\global\long\def\ebf{\mathbf{e}}%
\global\long\def\xbf{\mathbf{x}}%
\global\long\def\ybf{\mathbf{y}}%
\global\long\def\zbf{\mathbf{z}}%
\global\long\def\Wbf{\mathbf{W}}%
\global\long\def\Abf{\mathbf{A}}%
\global\long\def\Ibf{\mathbf{I}}%
\global\long\def\Dbf{\mathbf{D}}%
\begin{document}
\maketitle

\begin{abstract}
As the complexity of learning tasks surges, modern machine learning encounters a new constrained learning paradigm characterized by more intricate and data-driven function constraints. Prominent applications include Neyman-Pearson classification (NPC) and fairness classification, which entail specific risk constraints that render standard projection-based training algorithms unsuitable. Gradient boosting machines (GBMs) are among the most popular algorithms for supervised learning; however, they are generally limited to unconstrained settings. In this paper, we adapt the GBM for constrained learning tasks within the framework of Bregman proximal algorithms.  We introduce a new Bregman primal-dual method with a global optimality guarantee when the learning objective and constraint functions are convex. In cases of nonconvex functions, we demonstrate how our algorithm remains effective under a Bregman proximal point framework. Distinct from existing constrained learning algorithms, ours possess a unique advantage in their ability to seamlessly integrate with publicly available GBM implementations such as XGBoost (Chen and Guestrin, 2016) and LightGBM (Ke et al., 2017), exclusively relying on their public interfaces. We provide substantial experimental evidence to showcase the effectiveness of the Bregman algorithm framework. While our primary focus is on NPC and fairness ML, our framework holds significant potential for a broader range of constrained learning applications. The source code is currently freely available at~\href{https://github.com/zhenweilin/ConstrainedGBM}{https://github.com/zhenweilin/ConstrainedGBM}.
\end{abstract}

\section{Introduction}
The primary objective of conventional supervised learning is to generate a predictive model that captures the association between input and output by optimizing empirical errors. However, several challenges arise in training models when dealing with real-world applications.
For instance, in cancer diagnosis and fraud detection, since different classification errors produce varying loss, it is imperative to assign different weights to different categories to obtain a more realistic classification model, which is also referred to as NPC~\citep{rigollet2011neyman,tong2016survey}. 
Reports have surfaced about the risk of bias and discriminatory decisions affecting people based on race, gender, age, and other sensitive attributes. This awareness led to the rise of Fair ML~\citep{zafar2017fairness,agarwal2018reductions,cotter2019optimization}, a research area focused on measuring and mitigating the risks of bias in ML systems. 
These constrained learning problems require additional constraints to be taken into account during optimization, such as diversity in the recommendation system~\citep{castells2021novelty}, the misclassification rate of some classes in NPC,  fairness in Fair ML, monotonicity or convexity in shape-restricted regression~\citep{sen2017testing}, and smoothness constraint in semi-supervised learning~\citep{ouali2020overview}. 
Due to these constraints, conventional ML methods like logistic regression, Gradient Boosting Machine (GBM), and neural networks are no longer applicable.
To resolve this issue, we focus on developing a framework incorporating existing solvers to solve a wide range of constrained learning problems. More specifically, we focus on two important applications: NPC and Fair ML.

\textbf{NPC.} NPC is beyond the scope of standard machine learning solvers for unconstrained optimization.
Recently, NPC has attracted a growing research interest~\citep{cannon2002learning,scott2005neyman,tong2020neyman,mossman1999three,dreiseitl2000comparing,landgrebe2005neyman,tong2016survey}.
Due to the difficulty in directly minimizing classification
error rates, \citet{rigollet2011neyman,han2008analysis} proposed
replacing the 0-1 loss function with convex surrogate functions. \citet{tong2018neyman}
presented an umbrella algorithm adaptable to popular methods
such as logistic regression, SVM, and random forests. 
Aside from NPC, cost-sensitive learning~\citep{elkan2001foundations,zadrozny2003cost,li2020bridging} is another important paradigm to address the asymmetric error control, wherein practitioners manually assign specific weights to misclassification errors for each category.

\textbf{Fair Machine Learning.} Fair ML techniques are typically categorized into three groups: pre-processing, in-processing, and post-processing~\citep{mehrabi2021survey}.
Pre-processing methods strive to achieve an unbiased representation of the training data, but they may not guarantee fairness in the classifier~\citep{edwards2015censoring}. In contrast, post-processing methods require access to sensitive attributes after training and may be sub-optimal~\citep{woodworth2017learning}. 
In-processing Fair ML is an approach that modifies the learning procedure~\citep{zafar2017fairness,agarwal2018reductions,cotter2019optimization}.
Recently, \citet{cruz2022fairgbm} proposed the FairGBM framework that applies gradient boosted decision trees to the fairness classification problem. Nonetheless, FairGBM adopts a straightforward  gradient-descent-ascent approach for the associated min-max (nonconvex-concave) problem. Unfortunately, this approach lacks theoretical convergence guarantees, leaving room for potential improvement or further investigation.

To build a theoretically justified framework for constrained learning, we closely investigate the optimization methods applied to constrained learning.
By utilizing convex surrogate loss functions, NPC can be formulated as a convex problem and Fair ML  as a nonconvex problem with DC constraints.
There is a vast literature on convex function constrained optimization~\citep{yang2017richer,xu2021firstordera,bayandina2018mirror,lan2016iterationcomplexity,lin2018levelset,lan2013iterationcomplexitya}.
The iterative format of the Accelerated Primal-dual algorithm (APD)~\citep{hamedani2021primal} is relatively simple, comprising just three steps: gradient extrapolation, dual ascent using an explicit expression, and proximal gradient descent for the primal variable. 
For nonconvex constrained optimization, penalty methods~\citep{cartis2014complexity,wang2017penalty,cartis2011evaluation} and inexact proximal point method~\citep{boob2023stochastic,ma2020quadratically,lin2019inexact,boob2022level} are popular approaches. 
Note that all the mentioned  works on nonconvex constrained optimization require certain constraint qualification (CQ) to ensure convergence to the stationary conditions. This requirement was recently circumvented in a study by~\citep{jia2022first}, which demonstrated that the inexact proximal point method can still guarantee convergence to the Fritz John condition even when CQ fails.

In view of those recent advancements in constrained optimization,
our goal is to develop new  algorithm frameworks for constrained learning. We place particular emphasis on NPC and fairness ML. Furthermore, considering the proliferation of open-source software in machine learning, it is highly desirable that the new algorithmic framework can seamlessly integrate with standard ML packages, such as Sklearn and GBM, serving as the foundational tools.

\textbf{Contributions.} To solve the constrained learning, we propose two Bregman proximal point algorithms, Accelerated Bregman primal-dual Proximal Point algorithm ({\apd}) for convex constrained learning and Constraint Proximal Point Regularized algorithm ({\ilcp}) for nonconvex constrained learning. Our contributions are summarized as follows:

1. We provide {\apd} for solving constrained convex learning problems. In the iterative process, we adapt the objective function of an existing classifier to address the inherent sub-problems effectively. A key benefit of {\apd} is its seamless compatibility with prevalent classification models like GBM. Furthermore, we provide conclusive convergence proofs for {\apd}. To broaden compatibility across classifiers, we substitute the conventional $\ell_2$ norm with the Bregman distance as the penalty term for sub-problem resolution.

2. We propose {\ilcp} for non-convex learning problems. {\ilcp} solves the original problem iteratively by calling {\apd} to solve a sequence of convex subproblems. Therefore, {\ilcp} also works seamlessly with existing manual classifiers. Additionally, we have established that {\ilcp} is capable of converging to the FJ point, which is in line with the findings of~\citep{jia2022first}.

3. We show that the gradient boosting tree model is applicable to our proposed algorithms and give an efficient implementation of XGBoost~\citep{chen2016xgboost} and LightGBM~\citep{ke2017LightGBM} for solving NPC and fairness classification. Furthermore, we have extensively tested our algorithms on numerous publicly accessible datasets to confirm their efficacy.

\section{\label{sec:Background-and-Preliminary}Background and preliminaries}

Throughout the paper, we use bold letters to represent vectors and
matrices. Notation $[m]$ is short
for the set $\{1,\ldots,m\}$. An indicator function $\onebf(\cdot)$
takes on a value of $1$ if its argument is true and 0 otherwise.
For any vector $\xbf$, we define $[\xbf]_{+}$ as element-wise application
of $\max\{0,x_{(i)}\}$, where $x_{(i)}$ is the $i$-th coordinate
of $\xbf$. Let $\{\Abf,\bbf\}=\{\abf_{i},b_{i}\}_{i=1}^{n}$ be the
training set. We denote $F(\abf_i;\xbf)$ as the mapping of the feature to the classification score, where $\xbf$ is the classifier parameter. Denote the maximum and minimum singular values of matrix $\Dbf$ as $\sigma_{\max}(\Wbf)$ and $\sigma_{\min}(\Wbf)$, respectively. We define the domain of function $f$ as  $\operatorname{dom}f:=\{\xbf\mid f(\xbf)<\infty\}$. 
We assume that the loss function is convex with respect to the classifier parameter $\xbf$, which is valid for several common classifiers, including logistic regression, SVM and GBM~\citep{friedman2001greedy}.
Next, we give some basic definitions.
\begin{defn}[$\mu$-$\mathbf{D}_{\mathbf{W}}$ relative strong convexity]\label{defn:rel_strong_convex}
A function $f(\xbf)$ is $\mu$-$\mathbf{D}_{\mathbf{W}}$ relatively strongly convex in $\operatorname{dom}f$ if there exists a constant $\mu>0$ such that
\begin{align*}
f(t\protect\xbf+(1-t)\protect\xbf^{+}) \protect\leq tf(\protect\xbf)+(1-t)f(\protect\xbf^{+})\protect
-\mu t(1-t) \mathbf{D}_{\mathbf{W}}(\protect\xbf,\protect\xbf^{+}),\forall\protect\xbf,\protect\xbf^{+}\in \operatorname{dom} f, \label{}\vspace{-0.5em}
\end{align*}
where $\mathbf{D}_{\mathbf{W}}(\xbf,\xbf^{+})=\tfrac{1}{2}\left\langle \xbf-\xbf^{+},\mathbf{W}^{\top}\mathbf{W}(\xbf-\xbf^{+})\right\rangle $
is a Bregman distance induced by a matrix $\mathbf{W}$.
\end{defn}
\begin{defn}[$L_g$-$\mathbf{D}_{\mathbf{W}}$ relative Lipschitz continuity]\label{defn:rel_lip_con}
We say that a map $\gbf:\mathbb{R}^d\to\mathbb{R}^m$ is $L_g$-$\mathbf{D}_{\mathbf{W}}$ relatively Lipschitz continuous in $\operatorname{dom} f$ if there
exists $L_{g}>0$ such that $\|\mathbf{g}(\xbf)-\mathbf{g}(\xbf^{+})\|\leq L_{g}\sqrt{2\mathbf{D}_{\mathbf{W}}(\xbf,\xbf^{+})}$
holds for $\forall\mathbf{x},\mathbf{x}^{+}\in \dom f$.
\end{defn}

\begin{defn}[$L_h$-$\mathbf{D}_{\mathbf{W}}$ relative smoothness]\label{defn:relative_smooth}
    We say that a function $h:\mathbb{R}^d\to \mathbb{R}$ is $L_h$-$\mathbf{D}_{\mathbf{W}}$ relatively smooth if there exists $L_h>0$ such that $|h(\xbf^+)$ $-\,h(\xbf) -  (\nabla h(\xbf))^{\top}(\xbf^+   - \xbf)| \leq   L_h\mathbf{D}_{\mathbf{W}}(\xbf^+,\xbf)$ holds for any $\xbf,\xbf^+ \in   \dom f.$
\end{defn}

The concepts of strong convexity, Lipschitz continuity, and Lipschitz smoothness in the relative scale have been explored in several recent works~\citep{lu2019relative,zhou2020regret}. These studies extended the standard definitions in convex analysis,  which are grounded in the Euclidean metric (where $\Wbf=\Ibf$), by adapting them to accommodate more general distance metrics. A notable example can be found in~\citet{dhillon2008matrix}. By setting $\Wbf^{\top}\Wbf$ as the inverse covariance matrix of multivariate Gaussian distributions, the divergence serves as a measure of the distance between points generated by the Gaussian distribution. Moreover, in the context of our proposed GBM-based algorithms, we set $\Wbf$ as a scoring matrix of the training data determined by individual weak learners. This transformation enables the conversion of the standard Euclidean space-based distance measure into a more tractable, performance-based measure, which has proven particularly effective for ensemble models.

\section{\label{sec:Algorithm-and-Convergence}
The Accelerated Bregman Primal-dual Proximal Point Method
}
We propose the {\apd} method and develop its convergence analysis for optimizing convex-constrained models. A prominent instance of learning under convex constraints is given by NPC. For multi-classification, NPC employing logistic loss can be expressed as:
\begin{gather}
\begin{aligned}\vspace{-1.5em}
\raisetag{2\baselineskip}
\min_{\xbf\in \mcal{X}} &~ \ell_{0}(\xbf)=\sum_{i=1}^{n} w_i \ell (F(\abf_{i};\xbf),b_{i})\\
\st&~ \ell_{j}(\xbf)=\sum_{i=1}^{n}\ell(F(\abf_{i};\xbf),b_{i})\onebf(b_{i}=j)\leq\alpha_{j}, j\in[J],
\end{aligned}
\label{eq:NPMC}\vspace{-0.5em}
\end{gather}
where $w_i$ is the weight for the $i$-th data point, $\ell(\hat{\ybf}_i,b)=-b_{(j)}\log \brbra{\exp(\hat{\ybf}_{i(j)})/{\sum_{j=1}^{J}\exp(\hat{\ybf}_{i(j)})}}$, $\hat{\ybf}_i = F(\abf_i;\xbf)$, $b_{(j)} = 1$ means the data point is labeled as $j$, $J$ is the total number of classes, $\alpha_j$ is a predefined NPC violation level and $\mcal X\subseteq \dom \ell_0$ is a closed convex set.
Without loss of generality, we develop the algorithm  and its convergence analysis all based on the following general formulation:
\begin{equation}
\min_{\xbf\in\mcal X}~f(\xbf)\ \ \st~ g_{i}(\xbf)\leq0,i=1,2,\ldots,m,
\label{eq:np_prob_general}
\vspace{-0.2em}
\end{equation}
where $f(\xbf)$ and each $g_{i}(\xbf)$ are convex continuous functions. Moreover, we write $\mathbf{g}(\xbf)=\sbra{g_{1}(\xbf),\cdots,g_{m}(\xbf)}^{\top}$.

Algorithm~\ref{alg:APDinexact} describes the {\apd} method
for solving problem~(\ref{eq:np_prob_general}). Within
each iteration, {\apd} performs a dual update by leveraging the extrapolated
gradient, denoted as $\gbf(\xbf)$.
For the primal variables, {\apd} requires an external solver to approximately solve the proximal sub-problem.

\paragraph{Why the proximal point method?} 
The key distinction between Algorithm~\ref{alg:APDinexact} and the gradient-based algorithm lies in line~\ref{lst:line:minphi_inexact} of {\apd} for updating the primal variable:
Instead of performing a proximal gradient step, {\apd} solves a proximal point problem associated with the Lagrangian function (i.e., $f(\xbf)+\ybf^\top \gbf(\xbf)$) and the Bregman divergence. The reason can be summarized as two folds. One is that, theoretically, the proximal point method can handle the non-smooth problem, thus improving the applicability of the algorithm. However, when the constraint function $\gbf$ or the objective function $f$ is non-smooth, it will not be possible to take the proximal gradient step. The other is that in practice, the proximal point method allows us to leverage the off-the-shelf machine learning libraries with minimal adjustments. However, the gradient method requires careful examination of the classifier framework's source code to ensure efficient implementation, a process that incurs lengthy development time and cannot be readily extended to highly integrated and mature classifiers.

We develop the complexity of Algorithm~\ref{alg:APDinexact}.  To this end, we need to  define some optimality criteria:

\begin{defn}\label{def:approx_stochastic}
    We say $\xbf$ is a $(\delta,\nu)$-approximate solution of $\psi$ at $\qbf$ if 
    \begin{equation}
        \psi(\xbf)-\psi(\xbf^*)\leq \delta \Dbf_{\Wbf}(\xbf,\qbf)+\nu,
    \end{equation}
    where $\xbf^*$ is the exact solution of $\min_{\xbf\in \mcal X} \psi(\xbf)$.
\end{defn}

The above definition is highly flexible and can be adapted
to many popular training algorithms. For example, when $f$ and $g_i$
are smooth, and the sub-problem is solved by gradient-based solvers, we will
have $\nu=0$. When $f$ and $g_i$ are nonsmooth, often the error is dominated by the
$\Ocal(\nu)$ term. In view of Definition~\ref{def:approx_stochastic},
we give {\apd} in Algorithm~\ref{alg:APDinexact}.
\vspace{-0.5em}
\begin{algorithm}[htpb]
\caption{\label{alg:APDinexact}The Accelerated Bregman primal-dual Proximal Point method ({\apd})}
    \begin{algorithmic}[1]
        \REQUIRE{$\tau_0>0, \sigma_{-1}=\sigma_0>0$, $\xbf_{-1}=\mathbf{x}_0,\ybf_{-1}=\mathbf{y}_0,\mu$}
        \STATE{\textbf{Initialize:} $\gamma_0=\sigma_0/\tau_0$,$k=0$}
        \WHILE{$k<K$}
		\STATE{$\sigma_k \leftarrow \gamma_k \tau_k$,$\theta_k \leftarrow \frac{\sigma_{k-1}}{\sigma_k}$}
        \STATE{$\mathbf{z}_{k}\leftarrow (1+\theta_k)\mathbf{g}(\mathbf{x}_k) -  \theta_k \mathbf{g}(\mathbf{x}_{k-1})$}
\STATE{
$\mathbf{y}_{k+1}\leftarrow [\ybf_k + \sigma_k \zbf_k]_+$
}
\STATE{\label{lst:line:minphi_inexact}Find $\xbf_{k+1}$, a $(\delta_{k+1},\nu_{k+1})$-approximate solution of $\psi_k(\xbf):=f(\xbf)+\inprod{\ybf_{k+1}}{\gbf(\xbf)}+\tau_k^{-1}\Dbf_{\Wbf}(\xbf,\xbf_{k})$ at $\xbf_{k}$ .}
\STATE{$\gamma_{k+1}\leftarrow \gamma_k(1+\mu \tau_k)$,$\tau_{k+1}\leftarrow \tau_k \sqrt{\frac{\gamma_k}{\gamma_{k+1}}}$,$k\leftarrow k + 1$}
        \ENDWHILE
    \STATE{\textbf{Output: $\mathbf{x}_{K}$}}
        \end{algorithmic}
\end{algorithm}

Let $f^{*}=f(\xbf^{*})$
where $\xbf^{*}$ is an optimal solution of problem~\eqref{eq:np_prob_general}.
We say that a point $\xbf$ is an \textit{$\vep$-optimal solution}
if $f(\xbf)-f^{*}\leq\vep$ and $\|[\gbf(\xbf)]_{+}\|\leq\vep.$
Let $\bar{\xbf}_{K}:=\frac{1}{T_{K}}\sum_{k=0}^{K-1}t_{k}\xbf_{k+1},\bar{\ybf}_{K}=\frac{1}{T_{K}}\sum_{k=0}^{K-1}t_{k}\ybf_{k+1}$
be the weighted average of the ergodic sequence $\{\xbf_{k},\ybf_{k}\}_{k=1}^{K}$,
where $T_{K}=\sum_{k=0}^{K-1}t_{k}$. We present the main convergence property as follows.
Due to space limits, we leave all the proof detail in the appendix.

\begin{thm}
\label{thm:inexact_thm_expected}
Suppose $f$ is $\mu$-$\mathbf{D}_{\mathbf{W}}$ relatively strongly convex in $\mcal X$, $\gbf$ is $L_g$-$\mathbf{D}_{\mathbf{W}}$ relatively Lipschitz continuous in $\mcal X$, 
and there exists a constant $\delta>0$ such that the sequence $\{\tau_{k},\sigma_{k},t_{k}\}_{k\geq0}$
satisfies 
\begin{equation*}
\begin{split}t_{k+1}\tau_{k+1}^{-1}\leq t_{k}(\tau_{k}^{-1}+\mu),& t_{k+1}\theta_{k+1}=t_{k}, \theta_{k}\sigma_{k-1}=\sigma_{k}, \\
L_{g}\sigma_{k}/\delta\leq(\tau_{k})^{-1} ,\theta_{k}\delta\sigma_{k}& \leq\sigma_{k-1},t_{k+1}\sigma_{k+1}^{-1}\leq t_{k}\sigma_{k}^{-1}.
\end{split}
\end{equation*}
Then,  for the sequence $\{\xbf_{k},\ybf_{k},\bar{\xbf}_{k},\bar{\ybf}_{k}\},$
generated by Algorithm~\ref{alg:APDinexact}, we have
\begin{equation*}
\protect\begin{split}T_{K}[\protect\mcal L(\bar{\protect\xbf}_{K,}\protect\ybf)-\protect\mcal L(\protect\xbf,\bar{\protect\ybf}_{K})+t_{K-1}\tau_{K-1}^{-1}\mathbf{D}_{\mathbf{W}}(\protect\xbf,\protect\xbf_{K})]\protect
\protect\leq\tfrac{t_{0}\mathbf{D}_{\mathbf{W}}}{\tau_{0}}(\protect\xbf,\protect\xbf_{0})+\tfrac{t_{0}\sigma_{0}^{-1}}{2}\protect\norm{\protect\ybf-\protect\ybf_{0}}^{2}+\protect\mbb{\sum}_{k=0}^{K-1}t_{k}\eta_{k},
\protect\end{split}
\end{equation*}
where $\eta_{k}:=2\sqrt{\delta_{k+1}\mathbf{D}_{\mathbf{W}}(\hat{\xbf}_{k+1},\xbf_{k})+\nu_{k+1}}\allowbreak\sqrt{\tfrac{1}{\tau_{k}}+\mu}\allowbreak\sqrt{\mathbf{D}_{\mathbf{W}}(\xbf,\xbf_{k+1})}$.

\end{thm}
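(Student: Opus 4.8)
The plan is to run the standard primal-dual telescoping argument à la~\citet{hamedani2021primal}, but carried out entirely with respect to the Bregman distance $\Dbf_\Wbf$ and carefully tracking the inexactness in line~\ref{lst:line:minphi_inexact}. First I would write down the Lagrangian $\mcal L(\xbf,\ybf)=f(\xbf)+\inprod{\ybf}{\gbf(\xbf)}$ and record the two ``building block'' inequalities produced in each iteration. The dual update $\ybf_{k+1}=[\ybf_k+\sigma_k\zbf_k]_+$ is the maximizer of the $\sigma_k^{-1}$-strongly-concave (in the Euclidean norm) quadratic $\ybf\mapsto\inprod{\ybf}{\zbf_k}-\tfrac{1}{2\sigma_k}\norm{\ybf-\ybf_k}^2$ over the nonnegative orthant, so for every feasible $\ybf$ one gets
\[
\inprod{\ybf-\ybf_{k+1}}{\zbf_k}\le \tfrac{1}{2\sigma_k}\big(\norm{\ybf-\ybf_k}^2-\norm{\ybf-\ybf_{k+1}}^2-\norm{\ybf_{k+1}-\ybf_k}^2\big).
\]
For the primal update, $\xbf_{k+1}$ is a $(\delta_{k+1},\nu_{k+1})$-approximate minimizer of $\psi_k$, which is $(\tau_k^{-1}+\mu)$-$\Dbf_\Wbf$ relatively strongly convex (using relative strong convexity of $f$ and the three-point identity for $\Dbf_\Wbf$). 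From Definition~\ref{def:approx_stochastic} applied at $\qbf=\xbf_k$, together with the relative strong convexity of $\psi_k$, I would derive, for every $\xbf\in\mcal X$,
\[
\psi_k(\xbf_{k+1})-\psi_k(\xbf)\le \delta_{k+1}\Dbf_\Wbf(\hat\xbf_{k+1},\xbf_k)+\nu_{k+1}-(\tau_k^{-1}+\mu)\Dbf_\Wbf(\xbf,\xbf_{k+1}),
\]
where $\hat\xbf_{k+1}$ denotes the exact minimizer; expanding $\psi_k$ and using the three-point identity on $\tau_k^{-1}\Dbf_\Wbf(\xbf,\xbf_k)$ turns this into the familiar descent inequality relating $f(\xbf_{k+1})+\inprod{\ybf_{k+1}}{\gbf(\xbf_{k+1})}$, $\Dbf_\Wbf(\xbf,\xbf_k)$ and $\Dbf_\Wbf(\xbf,\xbf_{k+1})$, plus the error term $\eta_k$ after bounding $\sqrt{a+b}$-type quantities with the relative-strong-convexity coefficient $\sqrt{\tau_k^{-1}+\mu}$ and $\sqrt{\Dbf_\Wbf(\xbf,\xbf_{k+1})}$.

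Next I would combine the two inequalities and insert the extrapolation term $\zbf_k=(1+\theta_k)\gbf(\xbf_k)-\theta_k\gbf(\xbf_{k-1})$. The cross terms $\inprod{\ybf_{k+1}-\ybf}{\gbf(\xbf_{k+1})-\zbf_k}$ should be reorganized into a telescoping part $\inprod{\ybf_{k+1}-\ybf}{\gbf(\xbf_{k+1})}-\theta_k\inprod{\ybf_k-\ybf}{\gbf(\xbf_k)}$ (this is exactly where the conditions $\theta_k\sigma_{k-1}=\sigma_k$ and $t_{k+1}\theta_{k+1}=t_k$ are used to make the weighted sum telescope) plus a residual $\theta_k\inprod{\ybf_{k+1}-\ybf_k}{\gbf(\xbf_k)-\gbf(\xbf_{k-1})}$. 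The residual is controlled by Cauchy--Schwarz, the relative Lipschitz continuity of $\gbf$ (Definition~\ref{defn:rel_lip_con}), i.e. $\norm{\gbf(\xbf_k)-\gbf(\xbf_{k-1})}\le L_g\sqrt{2\Dbf_\Wbf(\xbf_k,\xbf_{k-1})}$, and Young's inequality, splitting it against the $\tfrac{1}{2\sigma_k}\norm{\ybf_{k+1}-\ybf_k}^2$ term from the dual step and against a $\Dbf_\Wbf(\xbf_k,\xbf_{k-1})$ term carried over from the primal step; the conditions $L_g\sigma_k/\delta\le\tau_k^{-1}$ and $\theta_k\delta\sigma_k\le\sigma_{k-1}$ are precisely what makes these two absorptions consistent across consecutive iterations.

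Then I would multiply the per-iteration inequality by $t_k$ and sum over $k=0,\dots,K-1$. The conditions $t_{k+1}\tau_{k+1}^{-1}\le t_k(\tau_k^{-1}+\mu)$ and $t_{k+1}\sigma_{k+1}^{-1}\le t_k\sigma_k^{-1}$ ensure the Bregman and the $\norm{\cdot-\ybf}^2$ terms telescope with nonnegative leftover coefficients, leaving exactly $t_{K-1}\tau_{K-1}^{-1}\Dbf_\Wbf(\xbf,\xbf_K)$ on the left and $\tfrac{t_0}{\tau_0}\Dbf_\Wbf(\xbf,\xbf_0)+\tfrac{t_0\sigma_0^{-1}}{2}\norm{\ybf-\ybf_0}^2$ on the right, with the accumulated errors $\sum_{k=0}^{K-1}t_k\eta_k$. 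Using convexity of $\mcal L(\cdot,\ybf)$ in $\xbf$ and concavity (linearity) of $\mcal L(\xbf,\cdot)$ in $\ybf$ to pass from the sum of $t_k[\mcal L(\xbf_{k+1},\ybf)-\mcal L(\xbf,\ybf_{k+1})]$ to $T_K[\mcal L(\bar\xbf_K,\ybf)-\mcal L(\xbf,\bar\ybf_K)]$ via Jensen gives the claimed bound. I expect the main obstacle to be the bookkeeping of the residual extrapolation term: one must choose how to split $\theta_k\inprod{\ybf_{k+1}-\ybf_k}{\gbf(\xbf_k)-\gbf(\xbf_{k-1})}$ so that the ``$\Dbf_\Wbf(\xbf_k,\xbf_{k-1})$'' debt incurred at step $k$ is repaid by a credit generated at step $k-1$, and verify that the six stated step-size conditions are exactly the inequalities needed for every such absorption and telescoping to go through simultaneously with the right signs; the inexactness only adds the clean additive $\eta_k$ terms once the $\sqrt{\delta_{k+1}\Dbf_\Wbf+\nu_{k+1}}$ bound is combined with the relative strong convexity constant.
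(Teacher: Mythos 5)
Your proposal is correct and follows essentially the same route as the paper's proof: the three-point inequality for the dual step, an inexact-primal descent inequality (the paper's Lemma~\ref{lem:Approximate_sol}, which converts the $(\delta_{k+1},\nu_{k+1})$-approximation into the additive error $\eta_k$ exactly as you describe), the $\zbf_k$-decomposition into a telescoping part plus a residual controlled by Young's inequality and relative Lipschitz continuity, weighted summation, and Jensen. The only cosmetic slip is that your intermediate display should carry $-(\tau_k^{-1}+\mu)\mathbf{D}_{\mathbf{W}}(\xbf,\hat\xbf_{k+1})$ rather than $\mathbf{D}_{\mathbf{W}}(\xbf,\xbf_{k+1})$ at that stage, but your next sentence makes clear you perform precisely the substitution that generates $\eta_k$, so the argument is sound.
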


We develop the convergence rate results for more specific parameter settings.

\begin{cor}[Informal]\label{cor:inexact_cor_expected}
    Suppose assumptions of Theorem~\ref{thm:inexact_thm_expected} hold.
    Set sequence $t_k=\sigma_k/\sigma_0$ and $\tau_0\sigma_0\leq \delta/L_g$. Then we have $\max \{f(\bar{\xbf}_k)-f(\xbf^*),\norm{[\gbf(\bar{\xbf}_{K})]_+}\}=\mcal O(T_K^{-1})$. Furthermore, take $\delta_{k+1}=\nu_{k+1}=\Ocal(k^{-7})$ when $\mu>0$, then we get an $\vep$-optimal solution at $\Ocal(1/\sqrt{\vep})$ iterations. Take $\delta_{k+1}=\nu_{k+1}=\Ocal(k^{-4})$ when $\mu=0$, then we get an $\vep$-optimal solution at $\Ocal (1/\vep)$ iterations.
\end{cor}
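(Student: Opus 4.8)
The plan is to specialize the master inequality of Theorem~\ref{thm:inexact_thm_expected} to the proposed schedule, convert the resulting saddle-point gap into bounds on $f(\bar{\xbf}_{K})-f^{*}$ and $\norm{[\gbf(\bar{\xbf}_{K})]_{+}}$, and finally estimate the growth of $T_{K}$ in the two regimes $\mu=0$ and $\mu>0$. I would begin by checking that $t_{k}=\sigma_{k}/\sigma_{0}$ together with $\tau_{0}\sigma_{0}\leq\delta/L_{g}$ meets every stepsize requirement of the theorem. The key invariants following from the updates in Algorithm~\ref{alg:APDinexact} are $\gamma_{k}\tau_{k}^{2}=\gamma_{0}\tau_{0}^{2}=\sigma_{0}\tau_{0}$ and $\sigma_{k}=\gamma_{k}\tau_{k}$, with $\sigma_{k+1}/\sigma_{k}=\sqrt{1+\mu\tau_{k}}\geq1$; using these, the identities $t_{k+1}\theta_{k+1}=t_{k}$, $\theta_{k}\sigma_{k}=\sigma_{k-1}$, $t_{k+1}\sigma_{k+1}^{-1}=t_{k}\sigma_{k}^{-1}$ and $t_{k+1}\tau_{k+1}^{-1}=t_{k}(\tau_{k}^{-1}+\mu)$ all hold with equality, while $L_{g}\sigma_{k}\tau_{k}\leq\delta$ and $\delta\,\theta_{k}\sigma_{k}\leq\sigma_{k-1}$ reduce to the hypothesis $L_{g}\sigma_{0}\tau_{0}\leq\delta$ and to $\delta\leq1$ (assumed without loss of generality).

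For the gap-to-optimality conversion I would plug $\xbf=\xbf^{*}$ and $\ybf=\rho\,[\gbf(\bar{\xbf}_{K})]_{+}/\norm{[\gbf(\bar{\xbf}_{K})]_{+}}$ (or $\ybf=\zero$ if $\bar{\xbf}_{K}$ is feasible) into the master inequality, with $\rho=1+2\norm{\lambf^{*}}$ and $\lambf^{*}$ a Lagrange multiplier at $\xbf^{*}$. Since $\gbf(\xbf^{*})\leq\zero$ and $\bar{\ybf}_{K}\geq\zero$, one has $\mathcal{L}(\bar{\xbf}_{K},\ybf)-\mathcal{L}(\xbf^{*},\bar{\ybf}_{K})\geq f(\bar{\xbf}_{K})-f^{*}+\rho\norm{[\gbf(\bar{\xbf}_{K})]_{+}}$, so dropping the nonnegative Bregman term on the left yields $T_{K}\bsbra{f(\bar{\xbf}_{K})-f^{*}+\rho\norm{[\gbf(\bar{\xbf}_{K})]_{+}}}\leq C_{0}+\sum_{k=0}^{K-1}t_{k}\eta_{k}$ with $C_{0}=\tfrac{t_{0}}{\tau_{0}}\mathbf{D}_{\mathbf{W}}(\xbf^{*},\xbf_{0})+\tfrac{t_{0}}{2\sigma_{0}}\norm{\ybf-\ybf_{0}}^{2}$ constant. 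Weak duality gives $f(\bar{\xbf}_{K})-f^{*}\geq-\norm{\lambf^{*}}\norm{[\gbf(\bar{\xbf}_{K})]_{+}}$, so the bracket is $\geq(\rho-\norm{\lambf^{*}})\norm{[\gbf(\bar{\xbf}_{K})]_{+}}\geq0$; combining both directions bounds $\norm{[\gbf(\bar{\xbf}_{K})]_{+}}$ and $\abs{f(\bar{\xbf}_{K})-f^{*}}$ by $\Ocal\brbra{(C_{0}+\sum_{k}t_{k}\eta_{k})/T_{K}}$. I also need $\mathbf{D}_{\mathbf{W}}(\hat{\xbf}_{k+1},\xbf_{k})$ and $\mathbf{D}_{\mathbf{W}}(\xbf^{*},\xbf_{k+1})$ bounded uniformly in $k$ (from compactness of $\mcal X$, or from a short bootstrap applying the master inequality with horizon $k+1$), after which $\eta_{k}=\Ocal\brbra{\sqrt{\delta_{k+1}+\nu_{k+1}}\,\sqrt{\tau_{k}^{-1}+\mu}}$.

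It remains to track the schedule. If $\mu=0$ then $\gamma_{k}\equiv\gamma_{0}$, hence $\tau_{k}\equiv\tau_{0}$, $\sigma_{k}\equiv\sigma_{0}$, $t_{k}\equiv1$, so $T_{K}=K$ and $\eta_{k}=\Ocal(\sqrt{\delta_{k+1}+\nu_{k+1}})$; taking $\delta_{k+1}=\nu_{k+1}=\Ocal(k^{-4})$ makes $\sum_{k}t_{k}\eta_{k}=\Ocal(1)$ and gives $\max\{f(\bar{\xbf}_{K})-f^{*},\norm{[\gbf(\bar{\xbf}_{K})]_{+}}\}=\Ocal(1/K)$, i.e.\ an $\vep$-optimal point in $\Ocal(1/\vep)$ iterations. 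If $\mu>0$ the invariant gives $\tau_{k}=\tau_{0}\sqrt{\gamma_{0}/\gamma_{k}}$, $\sigma_{k}=\tau_{0}\sqrt{\gamma_{0}\gamma_{k}}$, so $\gamma_{k+1}=\gamma_{k}+\mu\tau_{0}\sqrt{\gamma_{0}\gamma_{k}}$; the substitution $s_{k}=\sqrt{\gamma_{k}}$ turns this into $s_{k+1}=s_{k}+\Theta(1)$, whence $\gamma_{k}=\Theta(k^{2})$, $\sigma_{k}=\Theta(k)$, $\tau_{k}^{-1}=\Theta(k)$, $t_{k}=\Theta(k)$ and $T_{K}=\Theta(K^{2})$. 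Then $t_{k}\eta_{k}=\Theta(k^{3/2})\cdot\Ocal(\sqrt{\delta_{k+1}+\nu_{k+1}})$, so $\delta_{k+1}=\nu_{k+1}=\Ocal(k^{-7})$ makes the series summable and yields $\max\{\cdots\}=\Ocal(1/K^{2})$, i.e.\ an $\vep$-optimal point in $\Ocal(1/\sqrt{\vep})$ iterations.

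The step I expect to be the main obstacle is the mild circularity above: $\sum_{k}t_{k}\eta_{k}$ itself contains $\sqrt{\mathbf{D}_{\mathbf{W}}(\xbf^{*},\xbf_{k+1})}$, which is only controlled by the same inequality, so a uniform bound on the Bregman terms must be secured (via compactness of $\mcal X$ or a bootstrap over horizons) before the summability estimate; once it is in place, the exponents $k^{-7}$ and $k^{-4}$ must be matched precisely against $t_{k}\sqrt{\tau_{k}^{-1}+\mu}=\Theta(k^{3/2})$, resp.\ $\Theta(1)$. Everything else is bookkeeping with the recursions of Algorithm~\ref{alg:APDinexact}.
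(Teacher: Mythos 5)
Your proposal is correct and follows essentially the same route as the paper's proof: verify the stepsize relations from the invariant $\tau_k\sigma_k=\tau_0\sigma_0$, test the master inequality at $\xbf=\xbf^*$ and a multiplier aligned with $[\gbf(\bar{\xbf}_K)]_+$ of norm exceeding $\norm{\ybf^*}$, bound $\sum_k t_k\eta_k$ uniformly via compactness of $\mcal X$ (the paper's $D_X$), and match $t_k\sqrt{\tau_k^{-1}+\mu}=\Theta(k^{3/2})$ (resp.\ $\Theta(1)$) against the decay $k^{-7}$ (resp.\ $k^{-4}$) with $T_K=\Theta(K^2)$ (resp.\ $K$). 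The only differences are cosmetic (your $\rho=1+2\norm{\lambf^*}$ versus the paper's $(\norm{\ybf^*}_1+1)/\sqrt{m}$ scaling), and the circularity you flag is resolved exactly as you suggest, by the diameter bound on the compact feasible set.
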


\begin{rem}
\label{rem:overall_iteration}Corollary~\ref{cor:inexact_cor_expected} illustrates that \apd{} requires $\mathcal{O}(\vep^{-1/2})$ iterations
to attain an $\vep$-optimal solution when $\mu>0$, and $\mathcal{O}(\vep^{-1})$ when $\mu=0$. Note that he total complexity should also incorporate the time complexity associated with solving each subproblem. 
Consider the case when  $\Wbf$
is an identity matrix, and both $f(\xbf)$ and $g_{i}(\xbf)$ are smooth convex, employing variance-reduced methods such as SAGA~\citep{defazio2014saga}
and SVRG~\citep{xiao2014proximal} can yield a linear convergence rate. This introduces an additional 
$\mathcal{O}(\log \vep^{-1})$ term in computing the total complexity.
\end{rem}

\begin{rem}
    In Section~\ref{sec:gbm_abpd}, we show that when $\Wbf$ is the score matrix of weak learners, the sub-problem can be solved by using GBM.
    Furthermore, there exists a substantial body of literature on the convergence analysis of GBM for subproblems. Some of the well-known works in this field include~\citep{mukherjee2013rate,bickel2006some,telgarsky2012primal,freund2017new}. \citep{lu2020accelerating} proposes a novel algorithm named Accelerated Gradient Boosting Machine with Restart~(AGBMR), which can achieve $\mathcal{O}(\log({\vep^{-1}}))$. Additionally, \citet{lu2020randomized} conduct a convergence analysis of the Randomized GBM~(RGBM) and obtains a similarly impressive linear convergence rate.
\end{rem}

\section{\label{sec:alg_fair} The Constrained Bregman Proxiaml Regularized Method}

This section presents the {\ilcp} algorithm and its convergence analysis for non-convex constrained optimization. A notable application is fairness classification, which can be formulated as follows: 
$\min_{\xbf\in\mcal X} \sum_{i=1}^{n}\ell(F(\abf_{i};\xbf),b_{i})~\st  |\xi(j)-\xi(l)|\leq\alpha_{jl},~ j < l,\forall j, l\in [|\mcal S|]$
where $\xi(k):=\tfrac{1}{n_{k}}\sum_{i=1}^{n}\ell(F(\abf_{i};\xbf), b_i \mid s_i=k)$, $\abs{\mcal{S}}$ is the cardinality of the sensitive attribute set $\mcal{S}$,  $n_{j}=\sum_{i=1}^{n}\onebf(s_{i}=j)$ is the number
of samples in the $j$-th sensitive attribute and $\alpha_{jl}$ is the pre-specified fairness violation level. The above objective 
is a general loss function such as logistic
loss  and the constraints to restrict ``average loss difference'' is
less than the threshold $\alpha_{sl}$ among sensitive attribute datasets.
Without loss
of generality, we give a compact formulation as follows:
\begin{gather} \raisetag{2\baselineskip}
\begin{split}\min_{\xbf\in\mcal X} &~ \phi_{0}(\xbf):=f(\xbf)\\
\st &~\bar{\phi}(\xbf):= \max_{i\in[m]}\{g_{i}(\xbf)-h_{i}(\xbf)-\eta_{i}\}\leq0,
\end{split}
\label{eq:fair_prob}
\end{gather}
where $f(\xbf):\mcal X\to\mbb R\cup\{+\infty\}$ is a convex lower-semicontinuous
function defined in the compact set $\mcal X$, $g_{i}(\xbf),h_{i}(\xbf)$
are proper convex functions. $h_{i}(\xbf)$ is $L_{h_{i}}-\mathbf{D}_{\mathbf{W}}$
relatively Lipschitz smooth. Next, we give the algorithm
description and convergence based on~\eqref{eq:fair_prob}. First,
we give Inexact Constrained Bregman Proximal method ({\ilcp}) in Algorithm~\ref{alg:LCP}. Given that $h_i(\xbf)$ exhibits relative smoothness, we can obtain a relatively strongly convex constraint by adding a Bregman proximal term to the constraint function, where the modulus of the proximal term is  greater than the smooth coefficients of $h_i(\xbf)$ in $\bar{\phi}(\xbf)$.
Within each iteration, {\ilcp} solves a relatively strongly convex functional constrained sub-problem approximately (see Definition~\ref{defn:optimal} for the approximation criteria).
The sub-problem, which is constructed by adding a large enough proximal term depending on the last iteration to the original problem~\eqref{eq:fair_prob},
 can be solved by {\apd} proposed in Section~\ref{sec:Algorithm-and-Convergence}. 
\begin{algorithm}
\caption{\label{alg:LCP} Constrained Bregman Proximal Regularized method ({\ilcp})}
\begin{algorithmic}[1]
\REQUIRE{A feasible point $\xbf^0$ for problem~\eqref{eq:fair_prob}, $L>\max\bcbra{\max_{i}\bcbra{L_{h_{i}}},\sigma_{\max}(\mathbf{W}^{\top} \mathbf{W})^{-1}}$}
        \FOR{$t=0,1,\ldots, T-1$ } 
        \STATE{Call {\apd} to find an $(\vep_3,\vep_4)$ optimal solution $\xbf^{t+1}$ (Definition~\ref{defn:optimal}) for the following problem
\begin{equation}\label{eq:lcp_subprob} \begin{split}\min_{\xbf\in\mcal X} &~~ \phi_t^0(\xbf):=f(\xbf) + L\mathbf{D}_{\mathbf{W}}(\xbf,\xbf^{t})\\ \st &~~ \bar{\phi}_t(\xbf):=\bar{\phi}(\xbf)+L\mathbf{D}_{\mathbf{W}}(\xbf,\xbf^{t})\leq 0\end{split} \end{equation}
}
\ENDFOR
\STATE{\textbf{Output:} $\xbf^{T}$}
\end{algorithmic}
\end{algorithm}

\begin{defn}[$(\vep_3,\vep_4)$-optimal solution]\label{defn:optimal}
We say $\xbf$ is an $(\vep_{3},\vep_{4})$
optimal solution for problem~\eqref{eq:lcp_subprob} if $\phi_{t}^{0}(\xbf)-\phi_{t}^{0}(\xbf^{*})\leq\vep_{3}$
and $\bar{\phi}_{t}(\xbf)\leq\vep_{4}$, where $\xbf^{*}$ is an
optimal solution.
\end{defn}

Before providing the main convergence analysis, we 
introduce the Fritz-John (FJ) condition.
\begin{defn}[FJ point]\label{def:optimal_fj}
 We say a feasible $\xbf^{*}$ is a FJ point of~\eqref{eq:fair_prob}
if there exists subgradient $d_{\phi0}\in\partial\phi_{0}(\xbf^*)$, $d_{\phi_i}\in\partial (g_i(\xbf^*) - h_i(\xbf^*))$ and nonnegative multipliers $y_{0}^{*}\in\mbb R_{+}$
and $\ybf^{*} = (y_1^*, y_2^*,\cdots, y_m^*)^{\top}\in\mbb R_{+}^m$ such that
\begin{align}
y_i^{*} (g_i(\xbf^*)-h_i(\xbf^*)-\eta_i) & =0,\forall i \in [m]\label{eq:fj_point_1}\\
y_{0}^{*}d_{\phi_{0}}+ \sum_{i=1}^{m} y_i^{*}d_{\phi_i} & \in-\mcal N_{\mcal X}(\xbf^{*}).\label{eq:fj_point_2}
\end{align}
\end{defn}

In particular, \citet{jia2022first} gives a unified algorithm that guarantees convergence to the KKT condition when the CQ holds, and converges to the FJ condition when CQ fails. 

Next, we propose the following approximate FJ condition to measure the algorithm performance.
\begin{defn}
[Approximate FJ point]\label{def:fjpoint} We say a point $\xbf$ is an
$\vep$-FJ point for problem~\eqref{eq:fair_prob} if $\bar{\phi}(\xbf)\leq0$
and there exists $d_{\phi0}\in\partial\phi_{0}(\xbf)$ and $d_{\phi_i}\in\partial (g_i(\xbf) - h_i(\xbf))$
and $y_{0}\geq0, \ybf = [y_1, y_2,\cdots, y_m]^{\top}\in\mbb R_{+}^m$, $\sum_{i=0}^{m} y_{i}=1$ such that 
\[\dist(y_{0}d_{\phi0}+\sum_{i=1}^{m}y_i d_{\phi_i},-\mcal N_{\mcal X}(\xbf)),\sqrt{\abs{y_i d_{\phi_i}}} \leq\vep,\forall i \in [m].\]
 We say a point $\xbf$ is an $(\vep_{1},\vep_{2})$-FJ
point for problem~\eqref{eq:fair_prob} if there exists an $\vep_{1}$-FJ
point $\hat{\xbf}$ for problem~\eqref{eq:fair_prob} with $\norm{\xbf-\hat{\xbf}}\leq\vep_{2}$.
\end{defn}

We state the main convergence property of {\ilcp} in the following theorem.  Proof details
are deferred to the Appendix. 
\begin{thm}
\label{thm:3.2}
Suppose $\inf_{\xbf\in\mcal X}f(\xbf)>-\infty$ and taking $\vep_{3}=\vep_{4}=\tfrac{(L-\rho)\vep^{2}\sigma_{\min}(\mathbf{W}^{\top} \mathbf{W})}{4L^{2}\sigma_{\max}(\mathbf{W}^{\top} \mathbf{W})^{2}}$, where $\rho = \max_{i\in [m]}\bcbra{L_{h_i}}$. Then, {\ilcp} return an $(\vep, \vep)$-FJ point within at most $
T:=\tfrac{4L^{2}\sigma_{\max}(\mathbf{W}^{\top} \mathbf{W})^{2}(f(\xbf^{0})-\inf_{\xbf\in\mcal X}f(\xbf))}{3(L-\rho)\sigma_{\min}(\mathbf{W}^{\top} \mathbf{W})\vep^{2}}-1$
outer iterations, and the overall iteration number is $\mcal{O}(1/\vep^3)$ when using {\apd} to solve~\eqref{eq:lcp_subprob}. 
\end{thm}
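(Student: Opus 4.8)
The plan is to follow the standard inexact proximal-point-for-nonconvex-constrained-optimization argument in the relative (Bregman) setting, tracking the Bregman version of the ``sufficient decrease'' and ``stationarity from the subproblem'' lemmas. First I would establish the outer loop decrease. Since $\xbf^{t+1}$ is an $(\vep_3,\vep_4)$-optimal solution of~\eqref{eq:lcp_subprob} and $\xbf^t$ is feasible for that subproblem (indeed $\bar\phi_t(\xbf^t)=\bar\phi(\xbf^t)\le 0$ because $\xbf^t$ is feasible for~\eqref{eq:fair_prob}, using $\Dbf_\Wbf(\xbf^t,\xbf^t)=0$), we get $\phi_t^0(\xbf^{t+1})\le\phi_t^0(\xbf^t)+\vep_3 = f(\xbf^t)+\vep_3$, i.e. $f(\xbf^{t+1})+L\Dbf_\Wbf(\xbf^{t+1},\xbf^t)\le f(\xbf^t)+\vep_3$. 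Telescoping over $t=0,\dots,T-1$ and using $\inf_\xbf f(\xbf)>-\infty$ yields $\sum_{t=0}^{T-1} L\,\Dbf_\Wbf(\xbf^{t+1},\xbf^t)\le f(\xbf^0)-\inf_\xbf f(\xbf)+T\vep_3$. Since $\Dbf_\Wbf(\xbf^{t+1},\xbf^t)\ge\tfrac12\sigma_{\min}(\Wbf^\top\Wbf)\norm{\xbf^{t+1}-\xbf^t}^2$, at least one index $t$ has $\norm{\xbf^{t+1}-\xbf^t}^2$ small; plugging in the prescribed $\vep_3$ and the stated $T$ makes this bound exactly $\le$ the $\vep^2$-scale needed, which is where the constant $T=\tfrac{4L^2\sigma_{\max}(\Wbf^\top\Wbf)^2(f(\xbf^0)-\inf f)}{3(L-\rho)\sigma_{\min}(\Wbf^\top\Wbf)\vep^2}-1$ comes from. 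One has to be a little careful that $\vep_3$ itself appears on the right-hand side, so the arithmetic closing the bound (the factor $3$ vs $4$, absorbing $T\vep_3$) is a small but real computation.

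Second I would convert approximate optimality of the subproblem into an approximate FJ condition for the \emph{original} problem at the good iterate $\xbf^{t+1}$. Write the (approximate) KKT/FJ system of the strongly convex subproblem~\eqref{eq:lcp_subprob}: there exist multipliers $\hat y_0,\hat\ybf\ge0$ and subgradients such that $\hat y_0(\nabla f(\xbf^{t+1})+L\Wbf^\top\Wbf(\xbf^{t+1}-\xbf^t))+\sum_i \hat y_i(d_{\phi_i}+L\Wbf^\top\Wbf(\xbf^{t+1}-\xbf^t))\in -\Ncal_\Xcal(\xbf^{t+1})$ (up to the $\vep_3,\vep_4$ slack), together with approximate complementarity. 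After normalizing so that $y_0+\sum_i y_i=1$, the only discrepancy between this and the $\vep$-FJ condition of Definition~\ref{def:fjpoint} for problem~\eqref{eq:fair_prob} is the extra proximal term $L\Wbf^\top\Wbf(\xbf^{t+1}-\xbf^t)$, whose norm is $\le L\sigma_{\max}(\Wbf^\top\Wbf)\norm{\xbf^{t+1}-\xbf^t}$, plus the $\sqrt{\vep_4}$-type slack from inexact feasibility. So the choice $\vep_3=\vep_4=\tfrac{(L-\rho)\vep^2\sigma_{\min}(\Wbf^\top\Wbf)}{4L^2\sigma_{\max}(\Wbf^\top\Wbf)^2}$ is calibrated precisely so that both the proximal-term error and the optimality slack are $\le\vep$, giving an $\vep$-FJ point (or an $(\vep,\vep)$-FJ point once we note $\norm{\xbf^{t+1}-\hat\xbf}\le\vep$ for a nearby exact $\vep$-FJ point $\hat\xbf$). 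I also need that the subproblem's constraint $\bar\phi_t$ is relatively strongly convex — this is exactly why $L>\rho=\max_i L_{h_i}$ is assumed, so that $g_i-h_i+L\Dbf_\Wbf(\cdot,\xbf^t)$ is relatively strongly convex and {\apd}'s hypotheses (Theorem~\ref{thm:inexact_thm_expected}, Corollary~\ref{cor:inexact_cor_expected}) apply.

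Third, for the overall iteration count: by Corollary~\ref{cor:inexact_cor_expected}, each call to {\apd} on the strongly convex subproblem~\eqref{eq:lcp_subprob} reaches an $(\vep_3,\vep_4)$-optimal solution in $\Ocal(1/\sqrt{\min\{\vep_3,\vep_4\}})$ inner iterations (the $\mu>0$ rate, with $\mu$ here on the order of $L-\rho$ absorbed into constants). Since $\vep_3=\vep_4=\Theta(\vep^2)$, that is $\Ocal(1/\vep)$ inner iterations per outer step; multiplying by $T=\Ocal(1/\vep^2)$ outer steps gives the claimed $\Ocal(1/\vep^3)$ total. I would state this last part at the level of matching exponents rather than chasing constants.

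The main obstacle I anticipate is the second step — cleanly extracting a genuine $\vep$-FJ certificate (matching Definition~\ref{def:fjpoint}, including the $\sqrt{\abs{y_i d_{\phi_i}}}\le\vep$ requirement on near-complementarity and the normalization $\sum_{i=0}^m y_i=1$) from only an $(\vep_3,\vep_4)$-\emph{value}-approximate solution of the subproblem, since Definition~\ref{defn:optimal} gives objective/feasibility gaps, not directly a dual certificate. Bridging that gap presumably requires either strong convexity of the subproblem (to turn the value gap into an iterate gap via $\tfrac{L-\rho}{2}\sigma_{\min}(\Wbf^\top\Wbf)\norm{\xbf^{t+1}-\xbf^*_t}^2\le\vep_3$) and then a perturbation/stability argument for the optimality system, or invoking the corresponding lemma from~\citet{jia2022first} adapted to the Bregman geometry. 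Getting the complementarity slacks and the normalization to land inside the same $\vep$ is the delicate bookkeeping that the chosen values of $\vep_3,\vep_4$ are designed to make work out.
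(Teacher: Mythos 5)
Your overall strategy (inexact Bregman proximal point, sufficient decrease, terminate when consecutive iterates are close, read an FJ certificate off the subproblem) is the right one, but the proposal has two genuine gaps. First, you assert that $\xbf^t$ is feasible for~\eqref{eq:fair_prob} at every outer iteration, and you use this both to get $\phi_t^0(\hat\xbf^{t+1})\le\phi_t^0(\xbf^t)$ in the decrease step and (implicitly) in the complementarity bookkeeping. Only $\xbf^0$ is assumed feasible. For $t\ge1$ the iterate merely satisfies $\bar\phi_t(\xbf^{t+1})\le\vep_4$, so $\bar\phi(\xbf^{t+1})=\bar\phi_t(\xbf^{t+1})-L\Dbf_{\Wbf}(\xbf^{t+1},\xbf^t)$ could be positive whenever $\xbf^{t+1}$ is close to $\xbf^t$. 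The paper devotes a separate induction (Lemma~\ref{lem:3.4}) to this: before termination $\Dbf_{\Wbf}(\hat\xbf^{t+1},\xbf^t)$ is bounded below (else Lemma~\ref{lem:2.5} would have fired), the weighted FJ combination $y_{t0}\phi_t^0+y_t\bar\phi_t$ is $(L-\rho)$-relatively strongly convex so $(L-\rho)\Dbf_{\Wbf}(\xbf^{t+1},\hat\xbf^{t+1})\le y_{t0}\vep_3+y_t\vep_4$, and combining the two forces $L\Dbf_{\Wbf}(\xbf^{t+1},\xbf^t)>\vep_4$, hence $\bar\phi(\xbf^{t+1})<0$. Note also that this strong-convexity-to-iterate-gap step cannot be done from the objective gap $\vep_3$ alone as you suggest, because $\xbf^{t+1}$ may be infeasible for the subproblem; the paper needs the weighted FJ stationarity at $\hat\xbf^{t+1}$ together with \emph{both} $\vep_3$ and $\vep_4$.

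Second, the obstacle you flag as the hard part --- extracting a dual certificate at the inexact iterate $\xbf^{t+1}$ and perturbing the optimality system --- is a difficulty the paper never faces. Its Lemma~\ref{lem:2.5} certifies the \emph{exact} subproblem minimizer $\hat\xbf^{t+1}$: that point satisfies the subproblem's FJ system exactly, the only discrepancy with the FJ system of~\eqref{eq:fair_prob} is the term $L\Wbf^{\top}\Wbf(\hat\xbf^{t+1}-\xbf^t)$ of norm at most $\vep$ at termination, and complementarity holds up to $\abs{\bar\phi(\hat\xbf^{t+1})}=L\Dbf_{\Wbf}(\hat\xbf^{t+1},\xbf^t)\le\vep^2$. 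The algorithm then returns $\xbf^t$ (not $\xbf^{t+1}$), which is an $(\vep,\vep)$-FJ point purely because it lies within $\vep$ of the $\vep$-FJ point $\hat\xbf^{t+1}$ under Definition~\ref{def:fjpoint}; no stability analysis at the computed iterate is required. Relatedly, the paper's per-iteration decrease is $f(\xbf^t)-f(\xbf^{t+1})\ge(L-\rho)\Dbf_{\Wbf}(\xbf^t,\hat\xbf^{t+1})-\vep_3$, obtained from FJ stationarity plus relative strong convexity and $y_{t0}\le1$, which is measured against the \emph{exact} solution and therefore plugs directly into the termination test of Lemma~\ref{lem:2.5}; your decrease $L\Dbf_{\Wbf}(\xbf^{t+1},\xbf^t)$ is measured against the inexact iterate and would need an extra triangle-inequality pass to connect to that test. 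Your outer/inner complexity accounting ($T=\Ocal(\vep^{-2})$ times $\Ocal(\vep^{-1})$ inner iterations from Corollary~\ref{cor:inexact_cor_expected} with $\mu=L$ and target accuracy $\Theta(\vep^2)$) matches the paper.
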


The FJ condition is more general than the Karush-Kuhn-Tucker (KKT) condition. Specifically,  A KKT point is a special case of FJ point when the dual variable $y_0^*\neq 0$.
To validate the KKT condition, an additional assumption known as constraint qualification is needed.  See \citep{ma2020quadratically, boob2022level,jia2022first}.

\section{\label{sec:ABPD_NPC}{\apd} and {\ilcp} for Supervised Learning}

In this section, we employ the {\apd} and {\ilcp} frameworks to train classification models for the NPC and fairness ML problems, respectively. Our approach involves using off-the-shelf classifiers (such as linear models and GBM) to solve the sub-problems in {\apd} and {\ilcp}. First, recall the convex relaxation of the NPC formulation in~\eqref{eq:NPMC} (binary classification is a special case) and the fairness formulation in~\eqref{eq:fair_prob}. For notation simplicity, let $\bell(\xbf)=[\ell_{1}(\xbf),\cdots,\ell_{m}(\xbf)]^{\top},\bm{\alpha}=[\alpha_{1},\cdots,\alpha_{m}]^{\top}$.

\paragraph{Linear Classifier-based {\apd}\label{abpd_linear}}
To effectively employ {\apd} in training a linear NPC model represented as $F(\abf;\xbf)=\abf^{\top}\xbf$, it's crucial to ascertain  that the assumptions of Theorem~\ref{thm:inexact_thm_expected} are satisfied, i.e.,  $\ell_0(\xbf)$  and $\bell(\xbf)$ are $\mu$-$\Dbf_{\Wbf}$ relatively strongly convex and  $L_{\bell}$-$\Dbf_{\Wbf}$ relative Lipschitz continuous w.r.t. $\xbf$, respectively.  Suppose we use the cross-entropy loss as a convex surrogate and $\Wbf = \Ibf$, then we have $L_{\bell} = \sum_{i=1}^{n}\norm{\abf_i}$ and $\mu\geq 0$ ($\mu>0$ for common loss with 2-norm regularizer).  
With this setup, the sub-problem in {\apd} can be written as $\min_{\xbf\in\mathcal{X}}\ell_{0}(\xbf)+\left\langle \ybf_{k+1},\bell(\xbf)\right\rangle +\frac{1}{2\tau_{k}}\|\xbf-\xbf_{k}\|^{2}$,
implying that the objective with respect to  $\xbf$ is strongly convex. Given the maturity of machine learning libraries for regularized learning, implementation becomes straightforward. For instance, an efficient adaptation can be achieved by making minor modifications to the objective function within the Scikit-learn library~\citep{scikit-learn}.

\paragraph{GBM-based {\apd}\label{sec:gbm_abpd}}
When implementing {\apd} with GBM, it is crucial to ensure the relative convexity with appropriate choices of  $\mu$, $\Wbf$, and $L_{\ell}$.
Assuming that there are  $N$ weak learners, the classic GBM can be described as the following additive model:
$
F(\abf;\xbf):=\sum_{i=1}^{N}x_{(i)}f_{i}(\abf;\iota_{i})=\tilde{F}(\abf)\xbf,
$
where $f_{i}(\abf;\iota_{i})$, mapping features to scores in each class, is a weak learner parameterized by
$\iota_{i}$, and $\xbf\in\mbb{R}^N$ represents the weight vector of the $N$ weak learners. In practice, commonly used weak learners include wavelet functions, tree stumps, and classification and regression trees. 
For brevity, we denote the vector $\tilde{F}(\abf)=[f_{1}(\abf;\iota_1),\cdots,f_{N}(\abf;\iota_N)]\in\Rbb^{J\times N}$
as the prediction scores of all the weak learners. 
 Let $F(\Abf;\xbf)=\tilde{F}(\Abf)\xbf$ be the score matrix of dataset $\Abf$,
where $\tilde{F}(\Abf)=[\tilde{F}(\abf_{1})^{\top},\cdots,\tilde{F}(\abf_{n})^{\top}]^{\top} \in \mbb{R}^{n J\times N}$.
Consider the cross-entropy loss as an example, the loss is convex with respect to the parameter $\xbf$ of GBM. Moreover, $\ell_0$ is a $0$-$\Dbf_{\tilde{F}(\Abf)}$ relative strongly convex function, and $\bell$ is $1$-$\Dbf_{\tilde{F}(\Abf)}$ relatively Lipschitz continuous.

\begin{rem}
Our algorithm framework offers versatility beyond the cross-entropy loss function. It can readily accommodate several other common loss functions used in GBM, such as Huber loss with $l_2$ penalty  and the least squares loss. The constants $\mu$, $\Wbf$, and $L_{\bell}$ for these examples can be calcluated accordingly.
\end{rem}

Using the proximal term $\mathbf{D}_{\tilde{F}(\Abf)}$, we write the
sub-problem of GBM-based ABPD as $\min_{\xbf} \allowbreak \psi_{k}(\xbf)\allowbreak :=\ell_{0}(\xbf)\allowbreak+\left\langle \ybf_{k+1},\bell(\xbf)-\boldsymbol{\alpha}\right\rangle +\tau_{k}^{-1}\mathbf{D}_{\tilde{F}(\Abf)}(\xbf,\xbf_{k}),$
where $\xbf_{k}$ is the approximate solution at $(k-1)$-th iteration.
Indeed, $\psi_{k}(\xbf)$ is a weighted sum of different classification losses with Bregman distance penalty
with projection operator $\tilde{F}(\Abf)$, which can be optimized
by widely used GBM, such as AdaBoost~\citep{mukherjee2013rate}, XGBoost~\citep{chen2016xgboost}
and LightGBM~\citep{ke2017LightGBM}. In practice, by customizing the \footnote{https://github.com/dmlc/xgboost}{objective function of XGBoost} and \footnote{https://github.com/microsoft/LightGBM/tree/master/src/objective}{the objective function of LightGBM}, we can easily solve the sub-problem $\psi_{k}(\xbf)$.

\paragraph{\label{sec:class_ilcp}Linear Classifier-Based and GBM-Based {\ilcp}}
The primary concept behind {\ilcp} involves converting the initial non-convex problem into a sequence of relatively strongly convex sub-problems that can be tackled using {\apd}. Therefore, it is natural to embed the linear classifier and GBM into the {\ilcp} framework for solving fairness classification, which will not be repeated here.

\section{\label{sec:Experiments}Numerical Study}
\begin{table}[]
  \centering
    \begin{tabular}{clcccc}
    \toprule
    \multicolumn{1}{l}{Dataset} & Model & \multicolumn{2}{c}{Accuracy/std(\%)} & \multicolumn{2}{c}{Con Vio/std(\%)} \\
    \midrule
    \multirow{4}[4]{*}{Credit} & LGBM(B) & \textbf{99.92} & \textbf{0.04}  & 22.94 & 13.67 \\
          & LGBMNP(O) & 96.77 & 2.72  & \textbf{6.63}  & \textbf{3.18} \\
\cline{2-6}          & XGB(B) & \textbf{99.96} & \textbf{0.01}  & 15.69 & 4.56 \\
          & XGBNP(O) & 99.86 & 0.16  & \textbf{10.01} & \textbf{2.60} \\
    \hline
    \multirow{6}[6]{*}{Drybean} & RandF+CX & \textbf{52.01} & \textbf{3.16}  & 2.66  & 1.64 \\
          & RandF+ER & 49.12 & 4.30  & \textbf{1.27}  &\textbf{ 1.38} \\
\cline{2-6}          & LGBM(B) & \textbf{91.73} & \textbf{0.51}  & 12.87 & 2.22 \\
          & LGBMNP(O) & 86.39 & 1.95  &\textbf{ 3.23 } & \textbf{1.29} \\
\cline{2-6}          & XGB(B) & \textbf{91.79} &\textbf{ 0.65}  & 12.14 & \textbf{2.18} \\
          & XGBNP(O) & 90.74 & 1.44  &\textbf{ 9.32 } & 2.60 \\
    \bottomrule
    \end{tabular}%
    \vspace{-0.5em}
\caption{NPC Results in Test Set. (B: Baseline, O:Ours)}

  \label{tab:NPC}%
\end{table}
In this section, we evaluate our proposed approach through experiments on real-world datasets. We utilize {\apd} and {\ilcp} with convex surrogate loss functions to address NPC and fairness problems. Sub-problems within {\ilcp} are handled by directly invoking {\apd}. We modify XGBoost~\citep{chen2016xgboost} and LightGBM~\citep{ke2017LightGBM} classifiers to incorporate our method.
For comparison, we also evaluate RandF+CX (Convex with random forest) and RandF+ER (Empirical Risk with random forest)  from~\citep{tian2021neyman} for NPC. These models use the random forest and employ convex and empirical loss as surrogate losses for NPC.
We use the random forest as the benchmark to ensure fairness in comparison since it also uses the tree algorithm, similar to XGBoost and LightGBM.
We also compare our approach to FairGBM~\citep{cruz2022fairgbm}, a state-of-the-art fairness classifier compatible with LightGBM. Additionally, we include the vanilla model (general XGBoost or LightGBM without constraints) with the same model parameters as a control group.
We provide dataset descriptions in Table~\ref{tab:summary} and explain how to set $\alpha$ in NPC while introducing performance metrics for the algorithm.\vspace{-0.5em}

\begin{table}[htbp]
  \centering
    \begin{tabular}{l|rrrr|rrrr}
    \toprule
    Dataset & \multicolumn{4}{c|}{Compas}   & \multicolumn{4}{c}{Adult} \\
    Model & \multicolumn{2}{c}{Accuracy/std(\%)} & \multicolumn{2}{c|}{Fairness/std(\%)} & \multicolumn{2}{c}{Accuracy/std(\%)} & \multicolumn{2}{c}{Fairness/std(\%)} \\
    \midrule
    FairGBM & 62.67 & 1.81  & 14.74 & 6.04  & 77.79 & 9.19  & 17.20 & 10.17 \\
    LGBM(B) & 62.72 & 1.84  & 13.71 & 5.39  & 86.79 & 0.57  & 10.06 & 0.39 \\
    LGBMFair(O) & \textbf{62.96} & \textbf{1.27}  & \textbf{10.09} & \textbf{1.61}  & \textbf{87.22} & \textbf{0.21}  &\textbf{ 9.76}  & \textbf{0.15} \\
    \midrule
    XGB(B) & 63.05 & 0.78  & \textbf{26.54} & 1.43  & \textbf{86.96} & \textbf{0.41}  & \textbf{9.89}  & \textbf{0.34} \\
    XGBFair(O) & \textbf{63.46} & \textbf{0.33}  & 26.69 & \textbf{0.09}  & 84.54 & 0.95  & 10.90 & 0.71 \\
    \bottomrule
    \end{tabular}%
    \vspace{-0.5em}
    \caption{\label{tab:fair}Fairness Result in Test Set. (B: Baseline, O:Ours)}
\end{table}%

\paragraph{How to set $\alpha$ in NPC?} Initially, we calculate the loss value by assuming $1/K$ as the predicted probability for estimating $\alpha$. After a certain number of iterations, we estimate $\alpha$ based on the current overall predicted probability. The binary NPC problem is subject to the constraint $-\sum_{i=1}^{n}(1-y_i)\log(1-\hat{y}_i)\leq \alpha$, where $\hat{y}_i = F(\abf_i;\xbf)$. Hence, in the heuristic strategy, we first set $\alpha = n_0\cdot e \cdot \log (2)$, where $e$ is the expected error rate, $n_0$ is the dataset size of class $0$. After running half of the iterations, we modify $\alpha$ to $-e\sum_{i=1}^n (1-y_i)\log(1-\hat{y}_i)$. Similarly, in the case of multi-class NPC, we initially set $\alpha_k = n_k\cdot e_k\cdot \log K$, where $e_k$ is the expected error rate of class $k$ and $n_k$ is the dataset size of class $k$. After running half of the iterations, we adjust the $\alpha$ to $-e_k\sum_{i=1}^n \sum_{j=1}^{K} y_{ij}\log\hat{y}_{ij}$, where $\hat{y}_{ij} = \hat{\ybf}_{i(j)}$ and $\hat{\ybf}_{i} = F(\abf_i;\xbf)$. 

\begin{table}
\centering
\setlength{\tabcolsep}{1mm}
{\vspace{-0.5em}
\begin{tabular}{ccccc}
\toprule
{Dataset} & {SampleSize} & {Feature} & {Task} & {Source} \\\midrule
Credit                   & 284807                      & 30                       & NPC                   & {kaggle}                \\
Drybean                  & 13611                       & 15                       & NPC                   & {UCI}                   \\
Adult                    & 48842                       & 14                       & Fairness              & {UCI}                   \\
Compas                   & 6172                        & 6                        & Fairness              & {propublica}            \\ \bottomrule
\end{tabular}
}
\caption{\label{tab:summary}Dataset Description and Optimization Objective}
\end{table}

\paragraph{Evaluation metrics.}
We randomly select 100 sets of hyperparameters, including step size, iteration numbers, and other classifier hyperparameters, to compare different models' performance. Ensuring consistent parameters for the same classifier is crucial for reliable and comparable results.
For NPC, we evaluate constraint violation in the NPC problem using the term $\sum_{i=1}^{K}[\Pr_{\abf\mid b = i}(\phi(\abf)\neq i) - \ebf]_+$, where $\phi(\abf)$ maps data features to predicted labels. We plot Figure~\ref{fig:NPC} with this term as the horizontal axis and classification accuracy as the vertical axis.
To address fairness concerns, we calculate the misclassification rate for various sensitive attributes. We then plot the difference for datasets with two sensitive attributes or the standard deviation for datasets with multiple sensitive attributes on the horizontal axis. On the vertical axis, we represent the classification accuracy. This approach allows us to evaluate the model's fairness across different sensitive attributes and datasets. The fairness results are shown in Figure~\ref{fig:fair}. Subsequently, we will delve into a comprehensive analysis of the algorithm's effectiveness across four distinct tasks.

\begin{figure}[htp]
\begin{centering}
\begin{minipage}[t]{0.25\columnwidth}%
\includegraphics[width=4.25cm]{./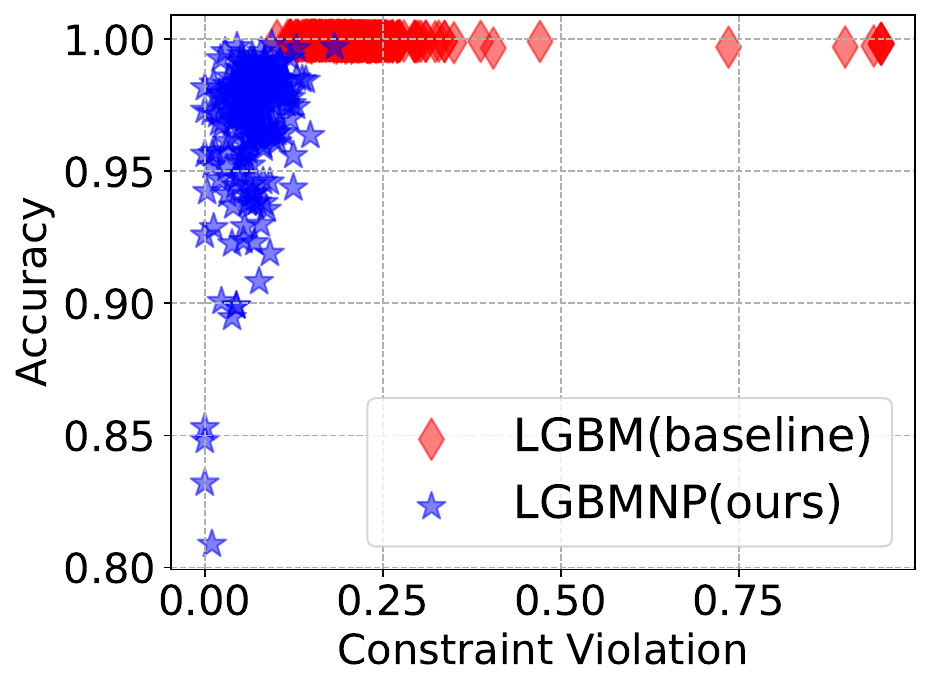}%
\end{minipage}%
\begin{minipage}[t]{0.25\columnwidth}%
\includegraphics[width=4.25cm]{./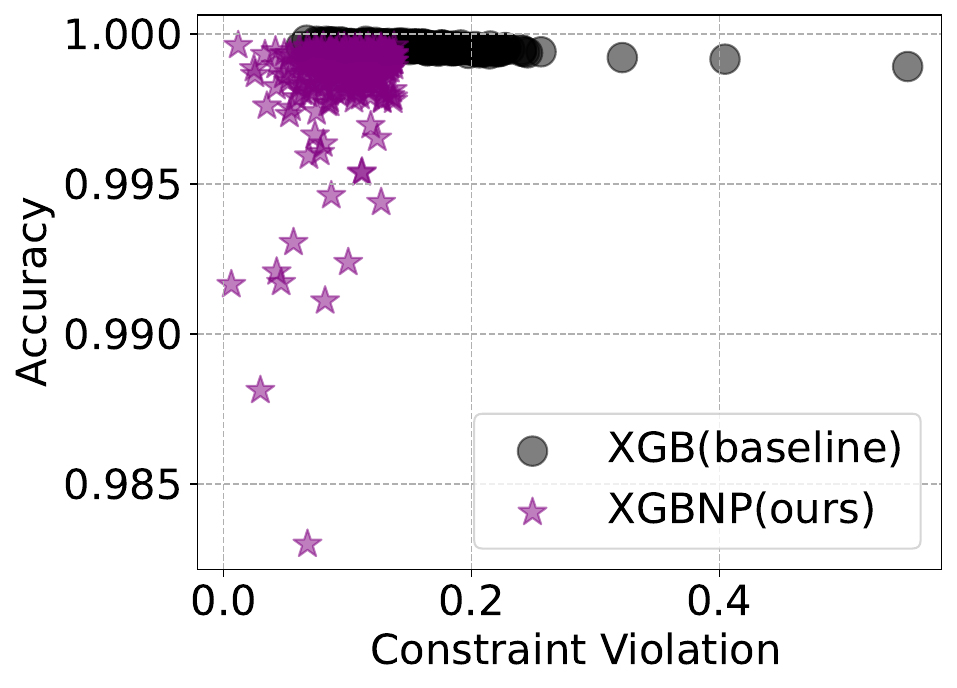}%
\end{minipage}%
\begin{minipage}[t]{0.25\columnwidth}%
\includegraphics[width=4.25cm]{./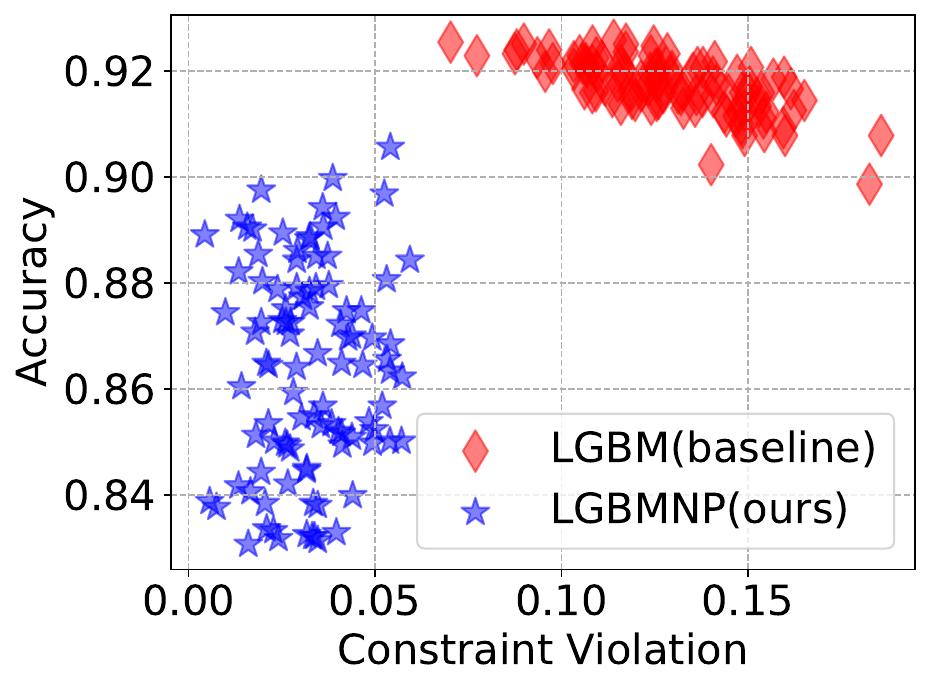}%
\end{minipage}\begin{minipage}[t]{0.25\columnwidth}%
\includegraphics[width=4.25cm]{./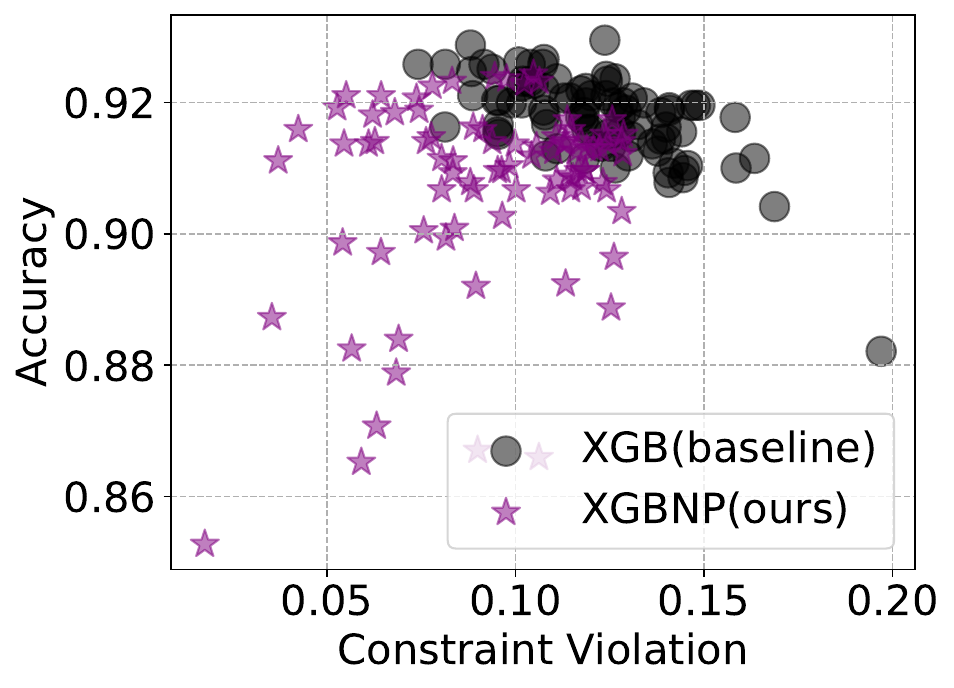}%
\end{minipage}
\end{centering}
\caption{\label{fig:NPC}Result in test set. From left to right: first and second figures - Credit, third and fourth figures - Drybean. The first and third figures represent LightGBM-based algorithms, while the second and fourth figures represent XGBoost-based algorithms.}
\end{figure}

\begin{figure}[]
\begin{centering}
\begin{minipage}[t]{0.25\columnwidth}%
\includegraphics[width=4.25cm]{./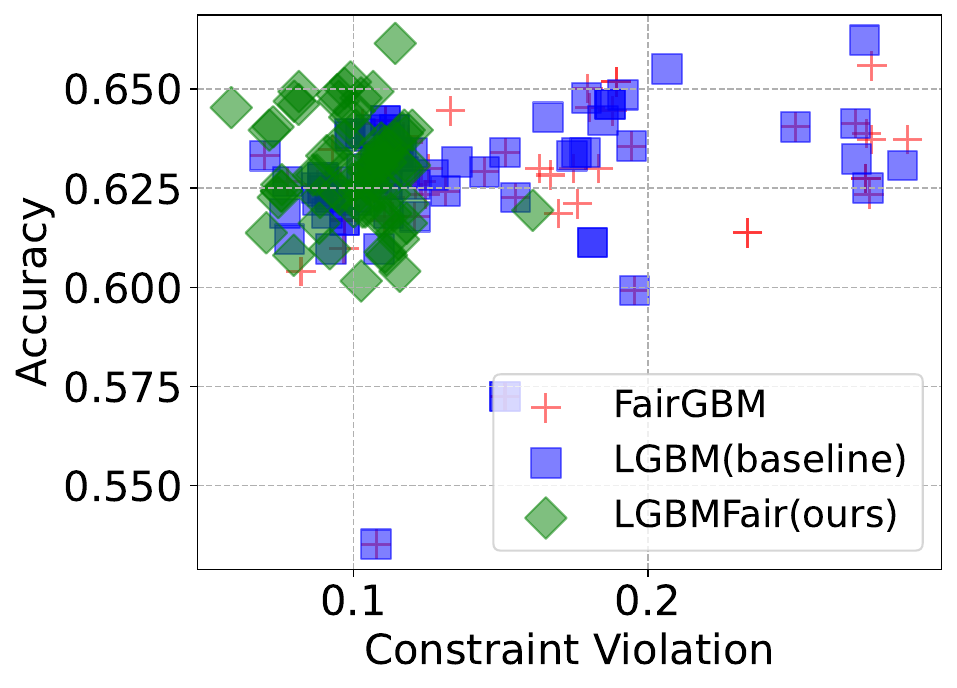}%
\end{minipage}%
\begin{minipage}[t]{0.25\columnwidth}%
\includegraphics[width=4.25cm]{./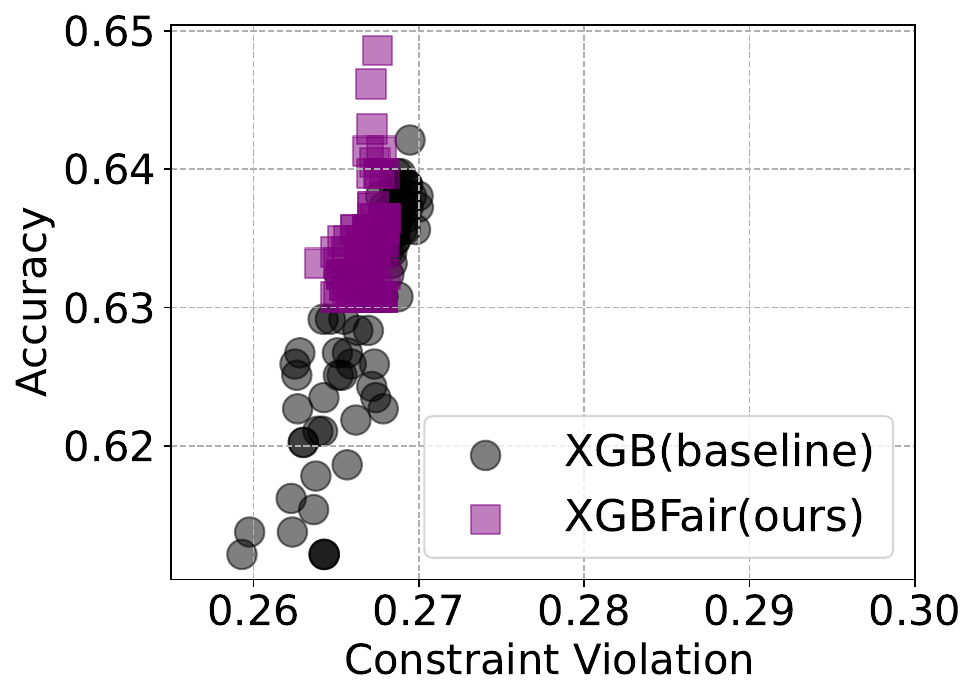}%
\end{minipage}%
\begin{minipage}[t]{0.25\columnwidth}%
\includegraphics[width=4.25cm]{./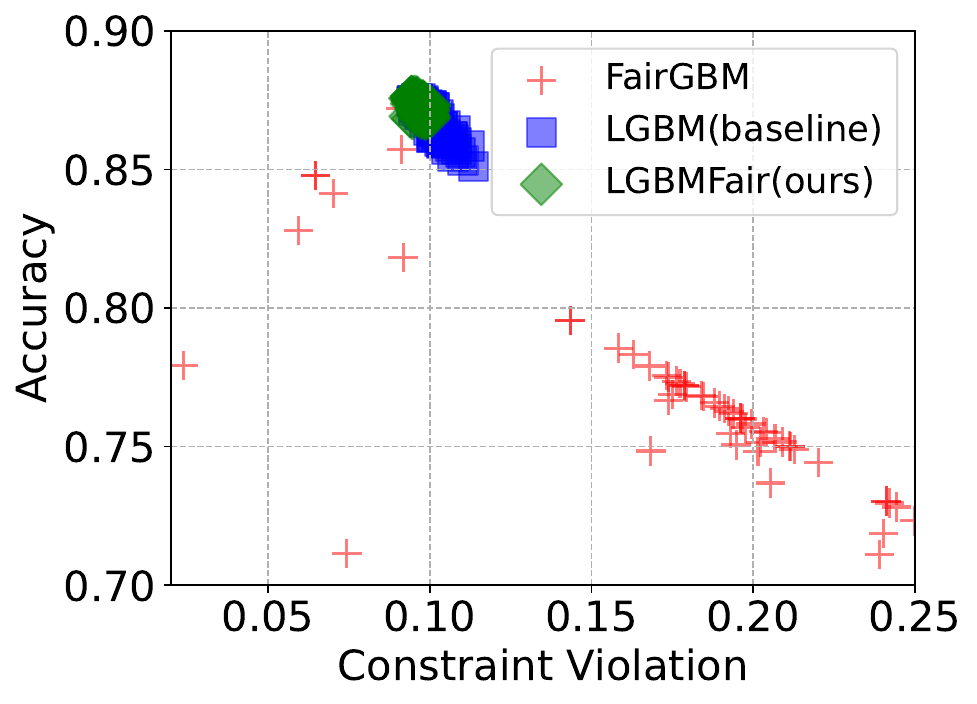}%
\end{minipage}\begin{minipage}[t]{0.25\columnwidth}%
\includegraphics[width=4.25cm]{./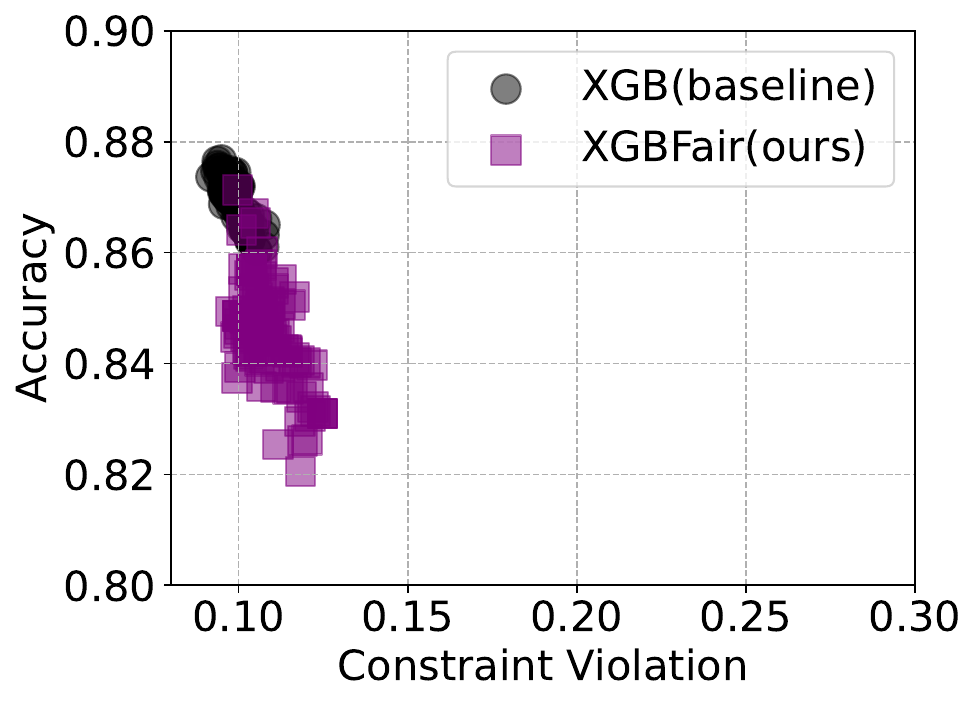}%
\end{minipage}\end{centering}
\caption{\label{fig:fair}Result in test set. From left to right: first and second figures - Compas, third and fourth figures - Adult. The first and third figures represent LightGBM-based algorithms, while the second and fourth figures represent XGBoost-based algorithms.}
\vspace{-1.0em}
\end{figure}

\begin{figure}[]
\begin{center}
\begin{minipage}[t]{0.5\columnwidth}
\centering
\includegraphics[width=6.0cm]{./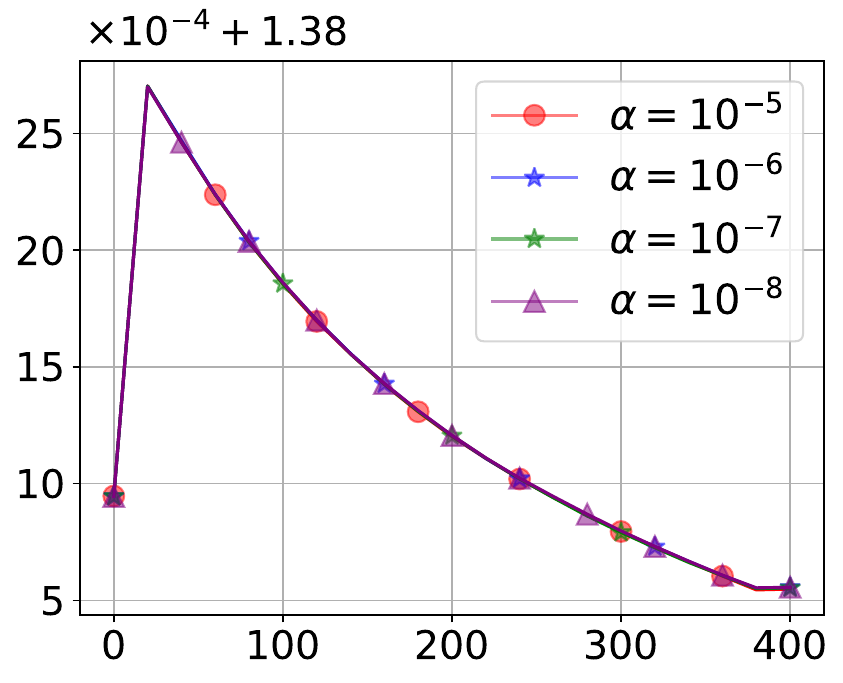}%
\end{minipage}\begin{minipage}[t]{0.5\columnwidth}
\centering
\includegraphics[width=6.0cm]{./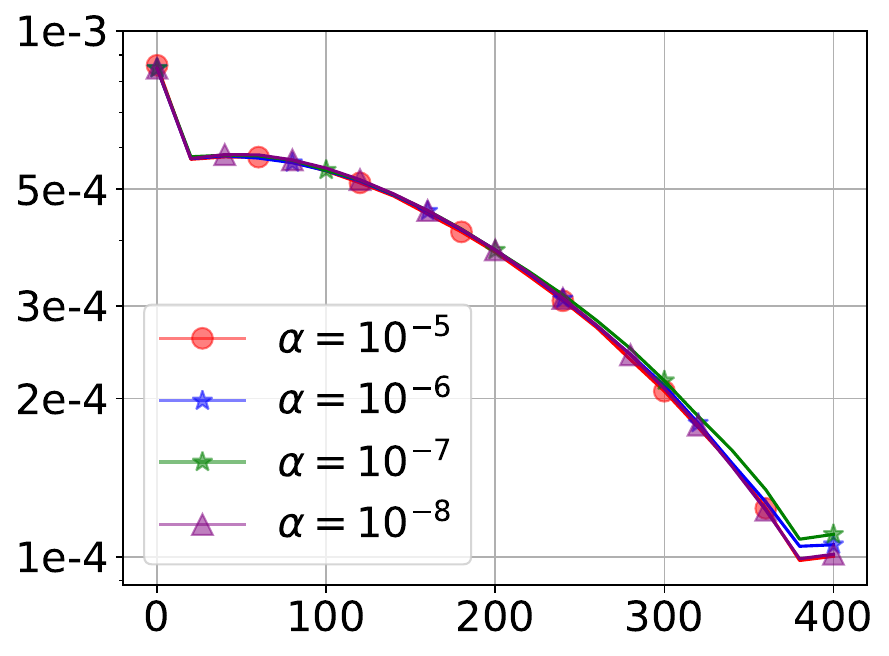}%
\end{minipage}\end{center}
\caption{\label{fig:illustrate} Left (Right): Training loss (loss difference of two groups) vs. Iteration number}
\vspace{-1.5em}
\end{figure}

\paragraph{Credit.} 
The dataset comprises credit card transactions, primarily aimed at detecting fraudulent activities. It's important to highlight that the dataset is significantly imbalanced; out of a total of 284,807 transactions, a mere 492 were found to be fraudulent.
In practice, the costs of misclassifying different classes are highly disparate. In particular, the penalty for incorrectly identifying fraudulent transactions (0) as non-fraudulent (1) is substantially greater. This consideration leads us to the NPC:
\begin{equation*}
    \min_{\phi\in \Phi}\,\, \xi(1)\ \ \st \xi(0)\leq 0.05,
\end{equation*}
where $\Phi$ is the space of classifier, $\xi(i) = \mathbb{\text{\ensuremath{\Pr}}}_{\abf\mid b=i}(\phi(\abf)\neq i)$. It's worth noting that the results from RandF+CX and RandF+ER are excluded from our discussion as these two algorithms yield infeasible outcomes for this particular dataset.

\paragraph{Drybean.} This dataset focuses on a classification problem involving seven bean categories, represented by classes 0 through 6. In each run, the data were randomly divided into 80\% training and 20\% testing for each class. In the task, we consider the following NPC:
\begin{equation*}
\min_{\phi\in \Phi}\sum_{i=0}^{6} \xi(i),\ \ \st [\xi(i)]_{i=1}^4\preceq[1,3,2,2]\cdot 10^{-2}.
\end{equation*} 

Due to the large  data volume in the Credit problem, the algorithms RandF+CX and RandF+ER encounter memory overflow issues, preventing their successful execution. Consequently, their performance on the Credit dataset is not presented in this study.
Figure~\ref{fig:NPC} and Table~\ref{tab:NPC} demonstrate the efficacy of two algorithms, LGBMNP ({\apd}+LightGBM) and XGBNP ({\apd}+\allowbreak XGBoost), in solving the NPC problem. In comparison to the baseline algorithm, both LGBMNP and XGBNP algorithms exhibit smaller constraint violations at the expense of a marginal decline in classification accuracy. It is worth mentioning that our algorithm performs significantly better in terms of classification accuracy compared to the RandF+CX and RandF+ER algorithms. Although this phenomenon may be attributed to the superior nature of the XGBoost and LightGBM classifiers over the Random Forest itself, such a comparison indirectly reflects the superiority of our methods.

\paragraph{Adult.} The Adult dataset is used for binary classification tasks to predict income levels. Our focus is on optimizing global accuracy while ensuring fairness across gender groups. We encode salary and gender into two categories (0-1) to simplify the presentation. Subsequently,
then we address the following fairness problem:
\begin{equation*}
    \min_{\phi\in \Phi} \sum_{i=0}^{1} \xi(i)\ \  \st \sum_{i=0}^{1}\xi_{0}(i) -  
 \xi_{1}(i)\leq \alpha,
\end{equation*}
where $\xi_j(i) = \text{Pr}_{\abf\mid b=i,s=j}(\phi(\abf)\neq i)$.

\paragraph{Compas.} The dataset includes defendant information from Broward County, Florida, for predicting reoffending risk. Our goal is to balance outcomes across races when optimizing. The task involves addressing a fairness problem:
\begin{equation*}
\min_{\phi\in \Phi}\sum_{i=0}^{2}\xi_j(i) \,
\textnormal{s.t.}~ \sum_{i=0}^{2}\xi_{j_{1}}(i)-\xi_{j_{2}}(i)\leq\alpha,\forall j_{1},j_{2}\in[5].
\end{equation*}

Figure~\ref{fig:fair} and Table~\ref{tab:fair} illustrate the effectiveness of both LGBMFair ({\ilcp}+LightGBM) and XGBFair ({\ilcp}+XGBoost) algorithms for addressing the fairness problem. 
We summarize the experimental results in terms of LightGBM-based and XGBoost-based models.
Among the three models based on LightGBM, LGBMFair consistently outperforms other models in accuracy and fairness. Interestingly, the baseline model exhibits better results than the FairGBM. On the other hand, XGBFair has better accuracy than the XGBoost baseline model in the Compas dataset, and there is no significant loss in the fairness measure from Figure~\ref{fig:fair} and Table~\ref{tab:fair}. However, it appears that there is still room to improve the stability of XGBFair based on its performance on the Adult dataset.
On the Adult dataset, its algorithm performs worse than the baseline in both measures. To investigate this, we explored the changes in the two groups' objective loss and loss differences during the optimization process. By setting different $\alpha$, we obtained Figure~\ref{fig:illustrate}. The experimental results show a significant decrease in the objective function and the loss difference, both of which end up lower than the baseline (i.e., the initial iteration point). However, the classification accuracy and fairness results, shown in Table~\ref{tab:fair}, do not surpass the baseline. We speculate that this phenomenon may be due to the XGBoost model's parameter settings and the problem's non-convex nature.

\section{\label{sec:conclusion}Conclusion}
The main contribution of this paper is the development of several Bregman proximal algorithms for addressing constrained learning, specifically in the context of NPC and fairness in ML. We have demonstrated the compatibility of our algorithm with several mainstream classifiers and have made our LGBMNP and XGBNP for solving the NPC problem, as well as LGBMFair, and XGBFair for the fairness problem, openly available.
In future research, we aim to extend the application range of the Bregman proximal algorithm to encompass a wider array of constrained learning problems, such as shape-restricted regression~\citep{sen2017testing} and constrained semi-supervised learning~\citep{ouali2020overview}.
Additionally, exploring non-convex loss functions, such as the sum of ranked range~\citep{hu2020learning}, would be an interesting avenue for future investigation.

\bibliographystyle{plainnat}
\bibliography{ref}

\setcounter{thmcou}{0}
\setcounter{propcou}{2}
\setcounter{lemcou}{0}
\setcounter{corcou}{0}
\setcounter{defncou}{3}
\appendix
\newpage
\section{Additional Literature Review}
\paragraph{Neyman Pearson Classification} NPC is beyond the scope of standard machine learning solvers such as scikit-learn~\cite{scikit-learn},
GBM~\cite{chen2016xgboost,ke2017LightGBM},
which are primarily designed for unconstrained or projection-friendly learning models. 
NPC has attracted growing research interest~\cite{cannon2002learning,scott2005neyman,tong2020neyman,mossman1999three,dreiseitl2000comparing,landgrebe2005neyman}.
Refer to~\cite{tong2016survey} for a more extensive survey. 
Due to the difficulty in directly minimizing classification
error rates, \citet{rigollet2011neyman,han2008analysis} proposed
replacing the 0-1 loss function with convex surrogate functions. \citet{tong2018neyman}
presented an umbrella algorithm adaptable to popular methods
such as logistic regression, SVM, and random forests. 
\citet{tong2020neyman}
proposed a parametric linear discriminant analysis for NPC and developed
a novel approach to choose the classification threshold value. 
However, although much progress has been made in NP binary classification,there has been far less work on the more challenging Neyman-Pearson multiclass classification (NPMC) problem. 
Recently, \citet{tian2021neyman}
considered a more general multiclass classification (NPMC) problem,
demonstrating that a desired strong duality holds for NPMC under
mild assumptions. 
Consequently, they proposed to perform optimization in the
dual space.
However, since the dual objective implicitly involves a
nontrivial error minimization problem, it remains unclear how efficient this algorithm is. 
It should be noted that, aside from NPC, cost-sensitive learning~\cite{elkan2001foundations,zadrozny2003cost,li2020bridging} is another important paradigm to address the asymmetric error control, wherein practitioners manually assign specific weights to misclassification errors for each category. 
However, the drawback of this method lies in the challenge of determining the right weight for the target cost.
Moreover, the weight assigned to one class may influence other classes through the regularized formulation, further complicating the optimization process.

\paragraph{Function constrained optimization}There is a vast literature on convex function constrained optimization~\cite{yang2017richer,xu2021firstordera,bayandina2018mirror}. Popular methods include  Augmented Lagrangian methods~\cite{lan2016iterationcomplexity}, level-set methods~\cite{lin2018levelset}, penalty  methods~\cite{lan2013iterationcomplexitya} and primal dual method~\cite{hamedani2021primal}.
The iterative format of the Accelerated Primal-dual algorithm (APD)~\cite{hamedani2021primal}  is relatively simple, comprising just three steps: gradient extrapolation, dual ascent using an explicit expression, and proximal gradient descent for the primal variable. 
For nonconvex constrained optimization, penalty methods~\cite{cartis2014complexity,wang2017penalty,cartis2011evaluation} is one popular approach for constrained optimization. Furthermore, \cite{boob2023stochastic,ma2020quadratically,lin2019inexact,boob2022level} presented some new proximal point algorithms that iteratively solve the strongly convex proximal sub-problem inexactly by first-order methods. \citet{jia2022first} employs the technique of directly inserting quadratic terms into the objective function and constraints of the original problem to convert it into a convex problem without requiring linearization. Theoretically, \citet{jia2022first} guarantees convergence to the Fritz John (FJ) point without any constraint qualification.

\section{Preliminaries}
List all corresponding materials here for completeness.
\begin{equation}
\begin{split}\min_{\xbf\in\mcal X} & \ \ \phi_{0}(\xbf):=f(\xbf)\\
\st & \ \ \bar{\phi}(\xbf):=\max_{i\in[m]}\{g_{i}(\xbf)-h_{i}(\xbf)-\eta_{i}\}\leq0,
\end{split}
\label{eq:fair_prob_appendix}
\end{equation}
\begin{algorithm}[htp]
\caption{\label{alg:APDinexact_appendix}Accelerated Bregman primal-dual Proximal Point algorithm ({\apd})}
    \begin{algorithmic}[1]
        \REQUIRE{$\tau_0>0, \sigma_0>0$, $\mathbf{x}_0,\mathbf{y}_0,\mu$}
        \STATE{\textbf{Initialize:} $\mathbf{x}_{-1}\leftarrow \mathbf{x}_0, \mathbf{y}_{-1}\leftarrow \mathbf{y}_{0}$,$\sigma_{-1}\leftarrow\sigma_0$,$\gamma_0=\sigma_0/\tau_0$,$k=0$}
        \WHILE{$k<K$}
		\STATE{$\sigma_k \leftarrow \gamma_k \tau_k$,$\theta_k \leftarrow \frac{\sigma_{k-1}}{\sigma_k}$}

        \STATE{$\mathbf{z}_{k}\leftarrow (1+\theta_k)\mathbf{g}(\mathbf{x}_k) -  \theta_k \mathbf{g}(\mathbf{x}_{k-1})$}
\STATE{
$\mathbf{y}_{k+1}\leftarrow [\ybf_k + \sigma_k \zbf_k]_+$
}

\STATE{Find a $(\delta_{k+1},\nu_{k+1})$-approximate solution $\mathbf{x}_{k+1}$   for sub-problem below $$\mathbf{x}_{k+1}\approx \operatornamewithlimits{argmin}_{\mathbf{x}\in\mathcal{X}}f(\mathbf{x})+\mathbf{y}_{k+1}^T\mathbf{g}(\mathbf{x}) + \frac{1}{\tau_k}\mathbf{D}_{\mathbf{W}}(\mathbf{x},\mathbf{x}_k)$$\label{lst:line:minphi_inexact_appendix}}
\STATE{$\gamma_{k+1}\leftarrow \gamma_k(1+\mu \tau_k)$,$\tau_{k+1}\leftarrow \tau_k \sqrt{\frac{\gamma_k}{\gamma_{k+1}}}$,$k\leftarrow k + 1$}
        \ENDWHILE
    \STATE{\textbf{Output: $\mathbf{x}_{K}$}}
        \end{algorithmic}
\end{algorithm}
\begin{algorithm}[htp]
\caption{\label{alg:LCP_appendix} Constrained Bregman Proximal Regularized method ({\ilcp})}
\begin{algorithmic}
\REQUIRE{A feasible point $\xbf^0$ for problem~\eqref{eq:fair_prob_appendix}, $L>\max\bcbra{\max_{i}\bcbra{L_{h_{i}}},\sigma_{\max}(\mathbf{W}^{\top} \mathbf{W})^{-1}}$}
        \FOR{$t=0,1,\ldots, T-1$ } 
        \STATE{Call {\apd} to find an $(\vep_3,\vep_4)$ optimal solution $\xbf^{t+1}$ (Definition~\ref{defn:optimal_appendix}) for the following problem
\begin{equation}\label{eq:lcp_subprob_appendix} \begin{split}\min_{\xbf\in\mcal X} & \ \ \phi_t^0(\xbf):=f(\xbf) + L\mathbf{D}_{\mathbf{W}}(\xbf,\xbf^t)\\ \st & \ \ \bar{\phi}_t(\xbf):=\bar{\phi}(\xbf)+L\mathbf{D}_{\mathbf{W}}(\xbf,\xbf^{t})\leq 0\end{split} \end{equation}
}
\ENDFOR
\STATE{\textbf{Output:} $\xbf^{T}$}
\end{algorithmic}
\end{algorithm}
\begin{defn}
\label{def:approx_stochastic_appendix}Define a 
$(\delta_{k+1},\nu_{k+1})-$approximate
solution $\xbf_{k+1}$ of line~\ref{lst:line:minphi_inexact_appendix} in {\apd}
as follows:
\begin{equation}
\psi_{k}(\xbf_{k+1})-\psi_{k}(\hat{\xbf}_{k+1})\leq\delta_{k+1}\mathbf{D}_{\mathbf{W}}(\hat{\xbf}_{k+1},\xbf_{k})+\nu_{k+1},\ \ \xbf_{k+1}\in \mcal X,\label{eq:sub-acc_appendix}
\end{equation}
where $\psi_{k}(\xbf)=f(\xbf)+\left\langle \ybf_{k+1},\mathbf{g}(\xbf)\right\rangle +\frac{1}{\tau_{k}}\mathbf{D}_{\mathbf{W}}(\xbf,\text{\ensuremath{\xbf}}_{k})$,
$\hat{\xbf}_{k+1}$ is the exact solution of line~\ref{lst:line:minphi_inexact_appendix}
in {\apd}.
\end{defn}

\begin{defn}[$(\vep_3,\vep_4)$-optimal]\label{defn:optimal_appendix}
We say $\xbf$ is an $(\vep_{3},\vep_{4})$
optimal solution for problem~\eqref{eq:lcp_subprob_appendix} if $\phi_{t}^{0}(\xbf)-\phi_{t}^{0}(\xbf^{*})\leq\vep_{3}$
and $\bar{\phi}_{t}(\xbf)\leq\vep_{4}$, where $\xbf^{*}$ is an
optimal solution.
\end{defn}
\begin{defn}[FJ point]\label{def:optimal_fj_appendix}
 We say a feasible $\xbf^{*}$ is a FJ point of~\eqref{eq:fair_prob_appendix}
if there exists subgradient $d_{\phi0}\in\partial\phi_{0}(\xbf^*)$, $d_{\phi_i}\in\partial (g_i(\xbf^*) - h_i(\xbf^*))$ and nonnegative multipliers $y_{0}^{*}\in\mbb R_{+}$
and $\ybf^{*} = [y_1^*, y_2^*,\cdots, y_m^*]^T\in\mbb R_{+}^m$ such that
\begin{align}
y_i^{*} (g_i(\xbf^*)-h_i(\xbf^*)-\eta_i) & =0,\ \ \forall i \in [m]\label{eq:fj_point_1_appendix}\\
y_{0}^{*}d_{\phi_{0}}+\sum_{i=1}^{m} y_i^{*}d_{\phi_i} & \in-\mcal N_{\mcal X}(\xbf^{*}).\label{eq:fj_point_2_appendix}
\end{align}
\end{defn}
\begin{defn}[KKT point] We say a feasible $\xbf^*$ is a KKT point of~\eqref{eq:fair_prob_appendix} if there exists subgradient $d_{\phi 0}\in \partial \phi_0(\xbf^*)$, $d_{\phi_i}\in \partial (g_i(\xbf^*)-h_i(\xbf^*))$ and $\ybf^*=[y_1^*, y_2^*,\cdots, y_m^*]^T\in\mbb R_{+}^m$ such that
\begin{align}
y_i^{*} (g_i(\xbf^*)-h_i(\xbf^*)-\eta_i) & =0,\ \ \forall i \in [m]\label{eq:fj_point_1_appendix}\\
d_{\phi_{0}}+\sum_{i=1}^{m} y_i^{*}d_{\phi_i} & \in-\mcal N_{\mcal X}(\xbf^{*}).
\end{align}
\end{defn}
\begin{thm}
\label{thm:inexact_thm_expected_appendix}
Suppose $f$ is $\mu$-$\mathbf{D}_{\mathbf{W}}$ relatively strongly convex, $\gbf$ is $L_g$-$\mathbf{D}_{\mathbf{W}}$ relatively Lipschitz continuous, 
and there exists a constant $\delta>0$ such that the sequence $\{\tau_{k},\sigma_{k},t_{k}\}_{k\geq0}$
satisfies
\begin{equation}
\begin{split}t_{k+1}\tau_{k+1}^{-1}\leq t_{k}(\tau_{k}^{-1}+\mu),\ \  & t_{k+1}\sigma_{k+1}^{-1}\leq t_{k}\sigma_{k}^{-1},\ \ t_{k+1}\theta_{k+1}=t_{k},\\
L_{g}\sigma_{k}/\delta\leq(\tau_{k})^{-1} & ,\ \ \theta_{k}\delta/\sigma_{k-1}\leq\sigma_{k}^{-1},\ \ \theta_{k}\sigma_{k-1}=\sigma_{k}.
\end{split}
\label{eq:parameter_setting}
\end{equation}
Then,  for the sequence $\{\xbf_{k},\ybf_{k},\bar{\xbf}_{k},\bar{\ybf}_{k}\}$
generated by {\apd}, we have
\begin{equation*}
T_{K}[\mcal L(\bar{\xbf}_{K,}\ybf)-\mcal L(\xbf,\bar{\ybf}_{K})+t_{K-1}\tau_{K-1}^{-1}\mathbf{D}_{\mathbf{W}}(\xbf,\xbf_{K})]\leq \tfrac{t_{0}\mathbf{D}_{\mathbf{W}}}{\tau_{0}}(\xbf,\xbf_{0})+\tfrac{t_{0}\sigma_{0}^{-1}}{2}\norm{\ybf-\ybf_{0}}^{2}+\mbb \sum_{k=0}^{K-1}t_{k}\eta_{k},\label{eq:thm1_converge}
\end{equation*}
where $\eta_{k}:=2\sqrt{\tfrac{1}{\tau_{k}}+\mu}\sqrt{\mathbf{D}_{\mathbf{W}}(\xbf,\xbf_{k+1})}\sqrt{\delta_{k+1}\mathbf{D}_{\mathbf{W}}(\hat{\xbf}_{k+1},\xbf_{k})+\nu_{k+1}}$.
\end{thm}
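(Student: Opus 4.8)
The plan is to follow the classical accelerated primal--dual template, with the quadratic Bregman distance $\mathbf{D}_{\mathbf{W}}$ in place of the Euclidean proximal term and an extra error track for the inexact subproblem solves. I would first derive a single per-iteration inequality bounding the primal--dual gap $\mcal{L}(\xbf_{k+1},\ybf)-\mcal{L}(\xbf,\ybf_{k+1})$ of the Lagrangian $\mcal{L}(\xbf,\ybf)=f(\xbf)+\inprod{\ybf}{\gbf(\xbf)}$, then take the $t_k$-weighted sum and collapse it using the parameter recursions. The primal and dual parts are glued by the elementary identity $\psi_k(\xbf)-\psi_k(\xbf_{k+1})=\bigl[\mcal{L}(\xbf,\ybf_{k+1})-\mcal{L}(\xbf_{k+1},\ybf_{k+1})\bigr]+\tau_k^{-1}\bigl[\mathbf{D}_{\mathbf{W}}(\xbf,\xbf_k)-\mathbf{D}_{\mathbf{W}}(\xbf_{k+1},\xbf_k)\bigr]$.

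\emph{Primal step.} Since $\mathbf{D}_{\mathbf{W}}$ is a Mahalanobis form, $\mu$-$\mathbf{D}_{\mathbf{W}}$ relative strong convexity of $f$ (Definition~\ref{defn:rel_strong_convex}) makes $\psi_k$ exactly $(\mu+\tau_k^{-1})$-$\mathbf{D}_{\mathbf{W}}$ relatively strongly convex, so for the exact minimizer $\hat{\xbf}_{k+1}$ we have $\psi_k(\xbf)\ge\psi_k(\hat{\xbf}_{k+1})+(\mu+\tau_k^{-1})\mathbf{D}_{\mathbf{W}}(\xbf,\hat{\xbf}_{k+1})$ for all $\xbf\in\mcal{X}$; specialising $\xbf=\xbf_{k+1}$ and invoking the $(\delta_{k+1},\nu_{k+1})$-criterion bounds $(\mu+\tau_k^{-1})\mathbf{D}_{\mathbf{W}}(\xbf_{k+1},\hat{\xbf}_{k+1})$ by $\delta_{k+1}\mathbf{D}_{\mathbf{W}}(\hat{\xbf}_{k+1},\xbf_k)+\nu_{k+1}$. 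I would then pass from $\hat{\xbf}_{k+1}$ to the actual iterate $\xbf_{k+1}$ via the exact three-point identity for the quadratic $\mathbf{D}_{\mathbf{W}}$, namely $\mathbf{D}_{\mathbf{W}}(\xbf,\hat{\xbf}_{k+1})=\mathbf{D}_{\mathbf{W}}(\xbf,\xbf_{k+1})+\mathbf{D}_{\mathbf{W}}(\xbf_{k+1},\hat{\xbf}_{k+1})+\inprod{\mathbf{W}^{\top}\mathbf{W}(\xbf_{k+1}-\hat{\xbf}_{k+1})}{\xbf-\xbf_{k+1}}$, dropping the nonnegative middle term and bounding the cross term by Cauchy--Schwarz together with the previous estimate. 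This produces precisely $\eta_k=2\sqrt{\tfrac{1}{\tau_k}+\mu}\,\sqrt{\mathbf{D}_{\mathbf{W}}(\xbf,\xbf_{k+1})}\,\sqrt{\delta_{k+1}\mathbf{D}_{\mathbf{W}}(\hat{\xbf}_{k+1},\xbf_k)+\nu_{k+1}}$, plus a lower-order residual $\delta_{k+1}\mathbf{D}_{\mathbf{W}}(\hat{\xbf}_{k+1},\xbf_k)+\nu_{k+1}$ that, for $\delta_{k+1}$ small, is absorbed by the available negative term $-\tau_k^{-1}\mathbf{D}_{\mathbf{W}}(\xbf_{k+1},\xbf_k)$ (up to a benign quantity folded into the $\eta_k$ budget). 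Using the glue identity, this gives $\mcal{L}(\xbf_{k+1},\ybf_{k+1})-\mcal{L}(\xbf,\ybf_{k+1})\le\tau_k^{-1}\mathbf{D}_{\mathbf{W}}(\xbf,\xbf_k)-(\mu+\tau_k^{-1})\mathbf{D}_{\mathbf{W}}(\xbf,\xbf_{k+1})-\tau_k^{-1}\mathbf{D}_{\mathbf{W}}(\xbf_{k+1},\xbf_k)+\eta_k+(\text{residual})$.

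\emph{Dual step.} The update $\ybf_{k+1}=[\ybf_k+\sigma_k\zbf_k]_+$ is the Euclidean projection of $\ybf_k+\sigma_k\zbf_k$ onto the nonnegative orthant, so the standard prox inequality gives $\inprod{\ybf-\ybf_{k+1}}{\zbf_k}\le\tfrac{1}{2\sigma_k}\bigl(\norm{\ybf-\ybf_k}^2-\norm{\ybf-\ybf_{k+1}}^2-\norm{\ybf_{k+1}-\ybf_k}^2\bigr)$. Writing $\mcal{L}(\xbf_{k+1},\ybf)-\mcal{L}(\xbf_{k+1},\ybf_{k+1})=\inprod{\ybf-\ybf_{k+1}}{\gbf(\xbf_{k+1})}$ and $\zbf_k=\gbf(\xbf_k)+\theta_k(\gbf(\xbf_k)-\gbf(\xbf_{k-1}))$, I split the dual gap into the prox term above plus the extrapolation discrepancy $\inprod{\ybf-\ybf_{k+1}}{\gbf(\xbf_{k+1})-\gbf(\xbf_k)}-\theta_k\inprod{\ybf-\ybf_{k+1}}{\gbf(\xbf_k)-\gbf(\xbf_{k-1})}$. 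Under $t_{k+1}\theta_{k+1}=t_k$ the $t_k$-weighted sum of these discrepancies telescopes into boundary terms plus residuals of the form $t_k\inprod{\ybf_{k+2}-\ybf_{k+1}}{\gbf(\xbf_{k+1})-\gbf(\xbf_k)}$, each of which I bound by $L_g$-$\mathbf{D}_{\mathbf{W}}$ relative Lipschitz continuity (Definition~\ref{defn:rel_lip_con}), $\norm{\gbf(\xbf_{k+1})-\gbf(\xbf_k)}\le L_g\sqrt{2\mathbf{D}_{\mathbf{W}}(\xbf_{k+1},\xbf_k)}$, followed by a Young split calibrated with the constant $\delta$; the constraints $L_g\sigma_k/\delta\le\tau_k^{-1}$ and $\theta_k\delta\sigma_k\le\sigma_{k-1}$ are exactly what makes these residuals cancel against the negative terms $-\tau_k^{-1}\mathbf{D}_{\mathbf{W}}(\xbf_{k+1},\xbf_k)$ and $-\tfrac{1}{2\sigma_k}\norm{\ybf_{k+1}-\ybf_k}^2$ already present.

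\emph{Assembly and main obstacle.} Adding the primal and dual inequalities, multiplying by $t_k$, and summing over $k=0,\dots,K-1$, the recursions $t_{k+1}\tau_{k+1}^{-1}\le t_k(\tau_k^{-1}+\mu)$ and $t_{k+1}\sigma_{k+1}^{-1}\le t_k\sigma_k^{-1}$ telescope the $\mathbf{D}_{\mathbf{W}}(\xbf,\cdot)$ and $\norm{\ybf-\cdot}^2$ chains down to the initial quantities $\tfrac{t_0}{\tau_0}\mathbf{D}_{\mathbf{W}}(\xbf,\xbf_0)+\tfrac{t_0\sigma_0^{-1}}{2}\norm{\ybf-\ybf_0}^2$, leaving the nonnegative tail $t_{K-1}\tau_{K-1}^{-1}\mathbf{D}_{\mathbf{W}}(\xbf,\xbf_K)$ on the left and the residual $\norm{\ybf-\ybf_K}^2$-type term, being nonnegative, dropped. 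Finally, convexity of $\mcal{L}(\cdot,\ybf)$ for $\ybf\ge0$ (from convexity of $f$ and the $g_i$) and linearity of $\mcal{L}(\xbf,\cdot)$ give, by Jensen, $T_K[\mcal{L}(\bar{\xbf}_K,\ybf)-\mcal{L}(\xbf,\bar{\ybf}_K)]\le\sum_{k=0}^{K-1}t_k[\mcal{L}(\xbf_{k+1},\ybf)-\mcal{L}(\xbf,\ybf_{k+1})]$, and combining with the summed estimate yields the claim with error $\sum_{k=0}^{K-1}t_k\eta_k$. I expect the delicate part to be the inexactness bookkeeping in the primal step: one must migrate the $(\delta_{k+1},\nu_{k+1})$-criterion, which lives at the unavailable exact point $\hat{\xbf}_{k+1}$, to an inequality phrased in $\xbf_{k+1}$ and the comparator $\xbf$ without spoiling the telescoping structure, so that exactly $\eta_k$ appears; a secondary fussy point is tuning the Young constants so that the six listed parameter conditions are precisely -- not merely up to constants -- what is needed to soak up the extrapolation and inexactness residuals.
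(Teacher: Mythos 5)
Your proposal follows essentially the same route as the paper's proof: the three-point/strong-convexity inequality at the exact minimizer $\hat{\xbf}_{k+1}$, migration to $\xbf_{k+1}$ via the quadratic three-point identity and Cauchy--Schwarz (which is exactly how the paper produces $\eta_k$), the projection prox inequality for the dual update, the $\qbf_k$-telescoping of the extrapolation term with Young's inequality calibrated by $\delta$ and relative Lipschitz continuity, and the final weighted telescoping plus Jensen. The one loose end you flag --- the additive residual $\delta_{k+1}\mathbf{D}_{\mathbf{W}}(\hat{\xbf}_{k+1},\xbf_{k})+\nu_{k+1}$ that does not appear in the stated bound --- is present (and silently dropped) in the paper's own derivation as well, so your treatment matches the paper's.
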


\begin{cor}\label{cor:inexact_cor_expected_appendix} Denote $\ybf^{+}:=\frac{1}{\sqrt{m}}(\|\ybf^{*}\|_{1}+1)\frac{[\mathbf{g}(\bar{\xbf}_{K})]_{+}}{\|[\mathbf{g}(\bar{\xbf}_{K})]_{+}\|}$,
$\Delta_{*+}:=\frac{1}{\tau_{0}}\mathbf{D}_{\mathbf{W}}(\mathbf{x}^{*},\mathbf{x}_{0})+\frac{1}{2\sigma_{0}}\|\mathbf{y}^{+}-\mathbf{y}{}_{0}\|^{2}$. For a compact set $\Xcal$, we denote $D_{X}:=\max_{\xbf_{1},\xbf_{2}\in\mathcal{X}}\sqrt{2\mathbf{D}_{\mathbf{W}}(\xbf_{1},\xbf_{2})}<\infty$.
Suppose that $f$ is $\mu$-$\mathbf{D}_{\mathbf{W}}$ relatively strongly convex in $\mcal X$, $\gbf$ is $L_g$-$\mathbf{D}_{\mathbf{W}}$ relatively Lipschitz continuous in $\mcal X$ and sequence $t_{k}=\sigma_{k}/\sigma_{0}$ and $\{\tau_{0},\sigma_{0}\}$
satisfies $\tau_{0}\sigma_{0}\leq\delta/L_{g}.$ Then for  the sequence $\{\xbf_{k},\ybf_{k},\bar{\xbf}_{k},\bar{\ybf}_{k}\}$
generated by {\apd}, we have 
\begin{equation*}
\begin{split} & \max\{f(\bar{\xbf}_{K})-f(\xbf^{*}) ,\|[\mathbf{g}(\bar{\xbf}_{K})]_{+}\|\}\leq\tfrac{\Delta_{*+}}{T_{K}}+\tfrac{\sum_{k=0}^{K-1}t_{k}\eta_{k}}{T_{K}},\end{split}
\label{eq:cor_01_appendix}
\end{equation*}
where $\Delta_{*+}:=\tfrac{1}{\tau_{0}}\mathbf{D}_{\mathbf{W}}(\xbf_{0},\xbf^{*})+\tfrac{1}{2\sigma_{0}}\norm{\ybf^{+}-\ybf_{0}}^{2}$.
Furthermore, take $\delta_{k+1}=\nu_{k+1}=\tfrac{\tau_{0}}{(k+2)^{7}}$
when $\mu>0$, then we can get an $\vep$-optimal solution in at most $K\geq\sqrt{\tfrac{6(\mu\tau_{0})^{-1}(\Delta_{*+}+D_{X}^{2}+\sqrt{2}D_{X})}{\vep}}+1$
iterations. Take $\delta_{k+1}=\nu_{k+1}=\tfrac{\tau_{0}}{(k+2)^{4}}$
when $\mu=0$, then we can get an $\vep-$optimal solution in at most $\tfrac{\Delta_{*+}+D_{X}^{2}+\sqrt{2}D_{X}}{\vep}$
iterations.
\end{cor}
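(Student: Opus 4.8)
The argument rests on the master inequality of Theorem~\ref{thm:inexact_thm_expected_appendix} and splits into three stages. First, I would check that the prescribed sequences---$t_k=\sigma_k/\sigma_0$ together with the $\gamma_k,\tau_k,\sigma_k,\theta_k$ generated inside Algorithm~\ref{alg:APDinexact_appendix}---verify all six relations in~\eqref{eq:parameter_setting} under the hypothesis $\tau_0\sigma_0\le\delta/L_g$ (and the harmless normalization $\delta\le1$). The algebra here is driven by two identities extracted from $\sigma_k=\gamma_k\tau_k$, $\gamma_{k+1}=\gamma_k(1+\mu\tau_k)$, $\tau_{k+1}=\tau_k\sqrt{\gamma_k/\gamma_{k+1}}$, namely $\sigma_k\tau_k\equiv\sigma_0\tau_0$ and $\sigma_{k+1}^{2}=\sigma_k^{2}+\mu\sigma_0\tau_0\sigma_k$; with $t_k=\sigma_k/\sigma_0$ the first, third, and sixth relations then hold with equality, $L_g\sigma_k/\delta\le\tau_k^{-1}$ follows from $\sigma_k\tau_k\equiv\sigma_0\tau_0$ and $\tau_0\sigma_0\le\delta/L_g$, and the remaining two reduce to $\delta\le1$.

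Second, I would turn the Lagrangian gap on the left of Theorem~\ref{thm:inexact_thm_expected_appendix} into simultaneous control of $f(\bar{\xbf}_{K})-f(\xbf^{*})$ and $\norm{[\gbf(\bar{\xbf}_{K})]_{+}}$. Evaluate the inequality at $\xbf=\xbf^{*}$ and $\ybf=\ybf^{+}$ and drop the nonnegative term $t_{K-1}\tau_{K-1}^{-1}\Dbf_{\Wbf}(\xbf^{*},\xbf_{K})$. Each $\ybf_{k+1}=[\ybf_k+\sigma_k\zbf_k]_{+}$ is nonnegative, hence so is $\bar{\ybf}_{K}$, so feasibility of $\xbf^{*}$ gives $\mcal L(\xbf^{*},\bar{\ybf}_{K})=f(\xbf^{*})+\inprod{\bar{\ybf}_{K}}{\gbf(\xbf^{*})}\le f(\xbf^{*})$; and the direction chosen for $\ybf^{+}$ gives $\inprod{\ybf^{+}}{\gbf(\bar{\xbf}_{K})}=\inprod{\ybf^{+}}{[\gbf(\bar{\xbf}_{K})]_{+}}=\tfrac{\norm{\ybf^{*}}_{1}+1}{\sqrt m}\norm{[\gbf(\bar{\xbf}_{K})]_{+}}$. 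Since $t_0=1$, the right side of the master inequality is $(\Delta_{*+}+\sum_{k=0}^{K-1}t_k\eta_k)/T_K$, and because the feasibility term on the left is nonnegative this already yields $f(\bar{\xbf}_{K})-f(\xbf^{*})\le\Delta_{*+}/T_K+T_K^{-1}\sum_k t_k\eta_k$. Pairing it with the weak-duality bound $f(\bar{\xbf}_{K})-f(\xbf^{*})\ge-\inprod{\ybf^{*}}{\gbf(\bar{\xbf}_{K})}\ge-\norm{\ybf^{*}}\,\norm{[\gbf(\bar{\xbf}_{K})]_{+}}$ (valid because $f(\xbf^{*})=\mcal L(\xbf^{*},\ybf^{*})\le\mcal L(\bar{\xbf}_{K},\ybf^{*})$ as $(\xbf^{*},\ybf^{*})$ is a saddle point of $\mcal L$), and using that the scaling of $\ybf^{+}$ is picked so that the net coefficient of $\norm{[\gbf(\bar{\xbf}_{K})]_{+}}$ is at least one, rearranging gives the matching bound on $\norm{[\gbf(\bar{\xbf}_{K})]_{+}}$; together these are the first display of the corollary. (One checks separately the trivial case $[\gbf(\bar{\xbf}_{K})]_{+}=\zerobf$, and notes $\Delta_{*+}$ is $K$-independent since $\norm{\ybf^{+}}$ is the fixed number $(\norm{\ybf^{*}}_{1}+1)/\sqrt m$.)

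Third, I would bound the accumulated error and read off the iteration count. Compactness of $\mcal X$ gives $\Dbf_{\Wbf}(\xbf^{*},\xbf_{k+1})\le\tfrac12D_X^{2}$ and $\Dbf_{\Wbf}(\hat{\xbf}_{k+1},\xbf_k)\le\tfrac12D_X^{2}$, so with $\delta_{k+1}=\nu_{k+1}$ we get $\eta_k\le\sqrt2D_X\sqrt{\tfrac12D_X^{2}+1}\,\sqrt{\tau_k^{-1}+\mu}\,\sqrt{\delta_{k+1}}$. For $\mu=0$ the recursions force $\gamma_k\equiv\gamma_0$, $\tau_k\equiv\tau_0$, $\sigma_k\equiv\sigma_0$, $t_k\equiv1$, $\theta_k\equiv1$, $T_K=K$; choosing $\delta_{k+1}=\tau_0(k+2)^{-4}$ makes $\eta_k\le(D_X^{2}+\sqrt2D_X)(k+2)^{-2}$, so $\sum_{k}\eta_k\le D_X^{2}+\sqrt2D_X$, the rate is $(\Delta_{*+}+D_X^{2}+\sqrt2D_X)/K$, and $K\ge(\Delta_{*+}+D_X^{2}+\sqrt2D_X)/\vep$ suffices. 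For $\mu>0$, telescoping $\sigma_k^{2}-\sigma_0^{2}=\mu\sigma_0\tau_0\sum_{j<k}\sigma_j$ and bootstrapping gives $\sigma_k=\Omega(\mu\sigma_0\tau_0\,k)$, hence $\tau_k^{-1}=\sigma_k/(\sigma_0\tau_0)=\Ocal(\mu k)$ and $T_K=\sum_{k<K}\sigma_k/\sigma_0=\Omega(\mu\tau_0K^{2})$; taking $\delta_{k+1}=\tau_0(k+2)^{-7}$ then makes $t_k\eta_k=\Ocal((\mu\tau_0)^{3/2}(D_X^{2}+D_X)k^{-2})$ summable, the overall bound becomes $\Ocal\big((\Delta_{*+}+D_X^{2}+D_X)/(\mu\tau_0K^{2})\big)$, and requiring this to be $\le\vep$ recovers the stated $K\ge\sqrt{6(\mu\tau_0)^{-1}(\Delta_{*+}+D_X^{2}+\sqrt2D_X)/\vep}+1$.

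\textbf{Main obstacle.} The technical heart is the $\mu>0$ part of stage three: turning the nonlinear recursion $\sigma_{k+1}^{2}=\sigma_k^{2}+\mu\sigma_0\tau_0\sigma_k$ (equivalently $\sigma_k^{2}-\sigma_0^{2}=\mu\sigma_0\tau_0\sum_{j<k}\sigma_j$) into a clean two-sided estimate $\sigma_k=\Theta(\mu\sigma_0\tau_0 k)$---a naive induction $\sigma_k\ge ck$ does not close and one must either bootstrap through intermediate exponents or carry an additive correction---and then verifying that the decay exponents $7$ (for $\mu>0$) and $4$ (for $\mu=0$) are exactly what is needed so that $\sum_k t_k\eta_k$ converges and stays dominated by $T_K$; this coupling of the inexactness schedule with the acceleration rate is what fixes the constants in the final iteration bound. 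A secondary chore is confirming the six relations of~\eqref{eq:parameter_setting} hold for every $k$ under the single assumption $\tau_0\sigma_0\le\delta/L_g$, and pinning down the normalization of $\ybf^{+}$ in stage two so the feasibility bound comes out cleanly.
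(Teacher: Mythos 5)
Your proposal is correct and follows the paper's own proof essentially step for step: stage one is the paper's verification of \eqref{eq:parameter_setting} via the invariant $\tau_k\sigma_k\equiv\tau_0\sigma_0$, stage two is the paper's evaluation of Theorem~\ref{thm:inexact_thm_expected_appendix} at $(\xbf^*,\ybf^+)$ combined with the saddle-point lower bound $f(\bar{\xbf}_K)-f(\xbf^*)\geq-\norm{\ybf^*}\norm{[\gbf(\bar{\xbf}_K)]_+}$, and stage three reproduces the paper's growth lemmas ($T_K\geq 1+\mu\tau_0(K-1)K/6$, $t_{k+1}/\sqrt{\tau_k}\leq(k+2)^{3/2}/\sqrt{\tau_0}$ under $\mu\tau_0\leq 2$) and Lemma~\ref{lem:stochastic_error} to get $\sum_k t_k\eta_k\leq D_X^2+\sqrt{2}D_X$. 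One caveat: if you keep the $1/\sqrt{m}$ factor in $\ybf^+$ as literally defined, the net coefficient of $\norm{[\gbf(\bar{\xbf}_K)]_+}$ is $\tfrac{\norm{\ybf^*}_1+1}{\sqrt{m}}-\norm{\ybf^*}_2$, which can be smaller than one (even negative), so your assertion that it is "at least one" fails; the factor $1/\sqrt{m}$ must be dropped, which is exactly what the paper's own computation silently does when it uses $\norm{\ybf^*}_1\geq\norm{\ybf^*}_2$.
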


\begin{thm}
\label{thm:3.2_appendix}
Suppose $\inf_{\xbf\in\mcal X}f(\xbf)>-\infty$ and taking $\vep_{3}=\vep_{4}=\tfrac{(L-\rho)\vep^{2}\sigma_{\min}(\mathbf{W}^{\top} \mathbf{W})}{4L^{2}\sigma_{\max}(\mathbf{W}^{\top} \mathbf{W})^{2}}$
in {\ilcp}, where $\rho = \max_{i}\bcbra{L_{h_i}}$. Then, {\ilcp} return $(\vep, \vep)$-FJ point at most 
\begin{equation*}
T:=\tfrac{4L^{2}\sigma_{\max}(\mathbf{W}^{\top} \mathbf{W})^{2}(f(\xbf^{0})-\inf_{\xbf\in\mcal X}f(\xbf))}{3(L-\rho)\sigma_{\min}(\mathbf{W}^{\top} \mathbf{W})\vep^{2}}-1
\end{equation*}
outer iterations, and the overall iteration is $\mcal{O}(1/\vep^3)$ when using {\apd} to solve~\eqref{eq:lcp_subprob_appendix}. 
\end{thm}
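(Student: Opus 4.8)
The plan is to read {\ilcp} as an \emph{inexact Bregman proximal point} method on the weakly-convex problem~\eqref{eq:fair_prob_appendix}, bound the number of outer iterations by a descent-and-telescoping argument, and then convert near-optimality of one subproblem into an approximate FJ certificate. First I would check that each subproblem~\eqref{eq:lcp_subprob_appendix} meets the hypotheses of {\apd}: since $f$ is convex and $L\mathbf{D}_{\mathbf{W}}(\cdot,\xbf^t)$ is exactly $L$-$\mathbf{D}_{\mathbf{W}}$ relatively strongly convex (a three-point identity for the quadratic $\mathbf{D}_{\mathbf{W}}$), the objective $\phi_t^0$ is $L$-$\mathbf{D}_{\mathbf{W}}$ relatively strongly convex; and since each $h_i$ is $L_{h_i}$-$\mathbf{D}_{\mathbf{W}}$ relatively smooth with $L>\rho=\max_i L_{h_i}$, the same identity shows $g_i-h_i+L\mathbf{D}_{\mathbf{W}}(\cdot,\xbf^t)$, hence the max $\bar\phi_t$, is $(L-\rho)$-$\mathbf{D}_{\mathbf{W}}$ relatively strongly convex and (on the compact $\mcal X$) relatively Lipschitz. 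Thus Corollary~\ref{cor:inexact_cor_expected_appendix} applies with \emph{positive} modulus, so an $(\vep_3,\vep_4)$-optimal $\xbf^{t+1}$ is obtained in $\Ocal(1/\sqrt{\vep_3\wedge\vep_4})=\Ocal(1/\vep)$ inner iterations for the chosen $\vep_3=\vep_4=\Theta(\vep^2)$.

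The descent step: maintaining by induction that $\xbf^t$ is (essentially) feasible for~\eqref{eq:lcp_subprob_appendix}, one has $\phi_t^0(\xbf^{t+1})\le\phi_t^0(\xbf^*_t)+\vep_3\le\phi_t^0(\xbf^t)+\vep_3=f(\xbf^t)+\vep_3$, which rearranges to $L\mathbf{D}_{\mathbf{W}}(\xbf^{t+1},\xbf^t)\le f(\xbf^t)-f(\xbf^{t+1})+\vep_3$. Summing over $t=0,\dots,T-1$ and using $\inf_{\mcal X}f>-\infty$ telescopes to $L\sum_{t<T}\mathbf{D}_{\mathbf{W}}(\xbf^{t+1},\xbf^t)\le f(\xbf^0)-\inf_{\mcal X}f+T\vep_3$; with the prescribed $T$ and $\vep_3$ the right-hand side is $\Ocal(f(\xbf^0)-\inf_{\mcal X}f)$, so by pigeonhole there is an index $t^\star<T$ with $\mathbf{D}_{\mathbf{W}}(\xbf^{t^\star+1},\xbf^{t^\star})$ bounded by a threshold of order $\vep^2(L-\rho)\sigma_{\min}(\mathbf{W}^\top\mathbf{W})/(L^2\sigma_{\max}(\mathbf{W}^\top\mathbf{W})^2)$; equivalently $\|\xbf^{t^\star+1}-\xbf^{t^\star}\|=\Ocal(\vep/(L\sigma_{\max}(\mathbf{W}^\top\mathbf{W})))$ after using $\mathbf{D}_{\mathbf{W}}(\xbf,\xbf')\ge\tfrac12\sigma_{\min}(\mathbf{W}^\top\mathbf{W})\|\xbf-\xbf'\|^2$.

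Next I would turn subproblem near-optimality into an approximate FJ point of~\eqref{eq:fair_prob_appendix}. Applying the standard approximate-KKT lemma for convex functionally constrained problems to the $(\vep_3,\vep_4)$-optimal $\xbf^{t^\star+1}$ of~\eqref{eq:lcp_subprob_appendix} (whose constraint is $(L-\rho)$-relatively strongly convex) yields normalized multipliers $y_0,y_1\ge0$ with $y_0+y_1=1$ and subgradients of $\phi_{t^\star}^0,\bar\phi_{t^\star}$ at $\xbf^{t^\star+1}$ whose weighted sum is $\Ocal(\sqrt{\vep_3}+\sqrt{\vep_4})$-close to $-\mcal N_{\mcal X}(\xbf^{t^\star+1})$, with $\Ocal(\vep_4)$ complementarity slack. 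Since $\nabla[L\mathbf{D}_{\mathbf{W}}(\cdot,\xbf^{t^\star})](\xbf^{t^\star+1})=L\mathbf{W}^\top\mathbf{W}(\xbf^{t^\star+1}-\xbf^{t^\star})$ and the multipliers sum to one, subtracting these proximal gradients recovers subgradients $d_{\phi_0}\in\partial\phi_0(\xbf^{t^\star+1})$ and $d_{\phi_i}\in\partial(g_i-h_i)(\xbf^{t^\star+1})$ of the original data while introducing only the extra term $L\mathbf{W}^\top\mathbf{W}(\xbf^{t^\star+1}-\xbf^{t^\star})$, of norm $\le L\sigma_{\max}(\mathbf{W}^\top\mathbf{W})\|\xbf^{t^\star+1}-\xbf^{t^\star}\|$. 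Combining with the bound on $\|\xbf^{t^\star+1}-\xbf^{t^\star}\|$ from the telescoping, with the $\mathbf{W}$-to-Euclidean conversions, and with the chosen $\vep_3=\vep_4=\Theta(\vep^2(L-\rho)\sigma_{\min}(\mathbf{W}^\top\mathbf{W})/(L^2\sigma_{\max}(\mathbf{W}^\top\mathbf{W})^2))$, every residual in Definition~\ref{def:fjpoint} is $\le\vep$.

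It remains to deal with exact feasibility, and this is where I expect the main difficulty to lie. From $\bar\phi(\xbf^{t^\star+1})=\bar\phi_{t^\star}(\xbf^{t^\star+1})-L\mathbf{D}_{\mathbf{W}}(\xbf^{t^\star+1},\xbf^{t^\star})\le\vep_4-L\mathbf{D}_{\mathbf{W}}(\xbf^{t^\star+1},\xbf^{t^\star})$: either the proximal term dominates $\vep_4$, so $\xbf^{t^\star+1}$ is genuinely feasible and the previous paragraph already certifies it as an $\vep$-FJ point; or $\|\xbf^{t^\star+1}-\xbf^{t^\star}\|$ is $\Ocal(\vep)$-small, and the feasible $\xbf^{t^\star}$ (or the boundary point on the segment joining the two) serves as the nearby genuine point $\hat\xbf$, making $\xbf^{t^\star+1}$ an $(\vep,\vep)$-FJ point. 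Making this fully rigorous without any constraint qualification — showing that the nearby feasible point still carries approximate-FJ multipliers and keeping all the constants aligned with the stated $T$ and $\vep_3=\vep_4$ — is the delicate part, together with a clean statement of the approximate-KKT lemma through the Bregman proximal terms (including a bound on the dual iterates so the normalization $y_0+y_1=1$ is legitimate). Finally, $T=\Ocal(1/\vep^2)$ outer iterations, each requiring $\Ocal(1/\vep)$ {\apd} iterations (Corollary~\ref{cor:inexact_cor_expected_appendix}, $\mu=L>0$, target accuracy $\Theta(\vep^2)$), gives the claimed $\Ocal(1/\vep^3)$ overall bound.
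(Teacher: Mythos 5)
Your overall architecture matches the paper's: a per-iteration descent inequality from the $(L-\rho)$-relative strong convexity of the regularized subproblem, telescoping against $f(\xbf^0)-\inf_{\mcal X}f$, and converting the subproblem's optimality conditions into an approximate FJ certificate whose only extra residual is the proximal gradient $L\mathbf{W}^{\top}\mathbf{W}(\cdot-\xbf^{t})$. The constants you track and the $\Ocal(1/\vep^2)\times\Ocal(1/\vep)=\Ocal(1/\vep^3)$ accounting are right. However, the two steps you yourself flag as "delicate" are exactly where your route diverges from the paper's and where the gaps are real. First, you try to extract approximate FJ multipliers \emph{at the inexact iterate} $\xbf^{t^\star+1}$ via an unproven "approximate-KKT lemma"; no such lemma is needed in the paper, and proving one without a constraint qualification (and with only an objective-gap/feasibility-gap notion of inexactness) is problematic. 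The paper's key move is to place the certificate at the \emph{exact} subproblem minimizer $\hat{\xbf}^{t+1}$, which carries exact FJ multipliers $y_{t0},y_t$ with $y_{t0}+y_t=1$ by construction; stripping the proximal gradients gives subgradients of the original data at $\hat{\xbf}^{t+1}$ with residual $\|L\mathbf{W}^{\top}\mathbf{W}(\hat{\xbf}^{t+1}-\xbf^{t})\|$, and $\hat{\xbf}^{t+1}$ is automatically feasible for~\eqref{eq:fair_prob_appendix} since $\bar\phi(\hat{\xbf}^{t+1})=\bar\phi_t(\hat{\xbf}^{t+1})-L\mathbf{D}_{\mathbf{W}}(\hat{\xbf}^{t+1},\xbf^{t})\le 0$. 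The returned point is then $\xbf^{t}$ (feasible by induction), which is within $\vep$ of $\hat{\xbf}^{t+1}$ — this is precisely why Definition~\ref{def:fjpoint} introduces the two-parameter $(\vep_1,\vep_2)$-FJ notion, and it dissolves the difficulty you describe. The bridge between inexact and exact iterates is the strong-convexity bound $(L-\rho)\mathbf{D}_{\mathbf{W}}(\xbf^{t+1},\hat{\xbf}^{t+1})\le y_{t0}\vep_3+y_t\vep_4$, not a perturbation of subgradients (which would fail for nonsmooth $f$, $g_i$).

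Second, your feasibility induction is left unresolved and is in genuine tension with your pigeonhole step: at the very index $t^\star$ where $\mathbf{D}_{\mathbf{W}}(\xbf^{t^\star+1},\xbf^{t^\star})$ is small, the bound $\bar\phi(\xbf^{t^\star+1})\le\vep_4-L\mathbf{D}_{\mathbf{W}}(\xbf^{t^\star+1},\xbf^{t^\star})$ no longer forces feasibility, so the induction hypothesis you need for the next descent step may break exactly when you want to stop. The paper's Lemma~\ref{lem:3.4} resolves this by running the argument in contrapositive, first-hitting-time form: \emph{before} an $(\vep,\vep)$-FJ point is reached, $\mathbf{D}_{\mathbf{W}}(\hat{\xbf}^{t+1},\xbf^{t})$ is bounded \emph{below} (Lemma~\ref{lem:2.5}), which together with the $\vep_3,\vep_4$ choice forces $\mathbf{D}_{\mathbf{W}}(\xbf^{t+1},\xbf^{t})$ large enough that $\bar\phi(\xbf^{t+1})<0$ strictly, keeping the induction alive; the same lower bound makes $f$ decrease by at least $\tfrac{3(L-\rho)\sigma_{\min}(\mathbf{W}^{\top}\mathbf{W})\vep^{2}}{4L^{2}\sigma_{\max}(\mathbf{W}^{\top}\mathbf{W})^{2}}$ per iteration, which yields the stated $T$ without any "best iterate" selection. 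Also note the progress measure in the paper is $\mathbf{D}_{\mathbf{W}}(\xbf^{t},\hat{\xbf}^{t+1})$ (distance to the exact solution, which is what controls the FJ residual), not $\mathbf{D}_{\mathbf{W}}(\xbf^{t+1},\xbf^{t})$ as in your telescoping; your version measures a quantity that does not directly certify anything. To repair your write-up, replace the approximate-KKT lemma by the exact-solution certificate plus the strong-convexity bridge, and restructure the induction and termination as in Lemmas~\ref{lem:2.5} and~\ref{lem:3.4}.
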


\section{Auxiliary Lemmas}
\begin{lem}[Property 1 in~\cite{tseng2008accelerated}]
\label{lem:three_point}Let $f:\mathbb{R}^{n}\to\mathbb{R\cup}\{+\infty\}$
be a closed strongly convex function with modulus $\mu\geq0$. Give
$\bar{\xbf}\in\mathcal{X}$ where $\mathcal{X}$ is a compact convex
set and $t\geq0$, let $\xbf^{+}=\argmin_{x\in\mathcal{X}}f(\xbf)+t\mathbf{D}_{\Wbf}(\xbf,\bar{\xbf})$.
Then, for all $\xbf$, the following inequality holds $f(\xbf^{+})+t\mathbf{D}_{\Wbf}(\xbf^{+},\xbf)+t\mathbf{D}_{\Wbf}(\xbf^{+},\bar{\xbf})+\mu\mathbf{D}_{\Wbf}(\xbf,\xbf^{+})\le f(\xbf)+t\mathbf{D}_{\Wbf}(\xbf,\bar{\xbf})$.
\end{lem}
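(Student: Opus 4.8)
The key structural fact is that the penalty $\mathbf{D}_{\mathbf{W}}$ is itself the Bregman divergence generated by the quadratic kernel $\omega(\xbf):=\tfrac12\langle\xbf,\Wbf^\top\Wbf\xbf\rangle$, whose gradient is $\nabla\omega(\xbf)=\Wbf^\top\Wbf\xbf$. Because $\omega$ is quadratic, $\mathbf{D}_{\mathbf{W}}$ is symmetric, i.e.\ $\mathbf{D}_{\mathbf{W}}(\xbf,\xbf^+)=\mathbf{D}_{\mathbf{W}}(\xbf^+,\xbf)$, and it obeys the three-point identity
\begin{equation*}
\mathbf{D}_{\mathbf{W}}(\xbf,\bar{\xbf})=\mathbf{D}_{\mathbf{W}}(\xbf,\xbf^+)+\mathbf{D}_{\mathbf{W}}(\xbf^+,\bar{\xbf})+\langle\Wbf^\top\Wbf(\xbf^+-\bar{\xbf}),\xbf-\xbf^+\rangle,
\end{equation*}
which I would verify in one line by expanding both sides. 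These two elementary properties, together with optimality of $\xbf^+$ and strong convexity of $f$, are all the proof needs.

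\textbf{Optimality and strong convexity.} First I would record the first-order optimality condition for the constrained minimizer $\xbf^+=\argmin_{\xbf\in\mcal X}\{f(\xbf)+t\mathbf{D}_{\mathbf{W}}(\xbf,\bar{\xbf})\}$. Since the objective is convex and $\mcal X$ is convex, there exists a subgradient $\xi\in\partial f(\xbf^+)$ with $-\xi-t\Wbf^\top\Wbf(\xbf^+-\bar{\xbf})\in\mcal N_{\mcal X}(\xbf^+)$, equivalently
\begin{equation*}
\langle\xi,\xbf-\xbf^+\rangle\ \geq\ t\langle\Wbf^\top\Wbf(\bar{\xbf}-\xbf^+),\xbf-\xbf^+\rangle,\qquad\forall\,\xbf\in\mcal X.
\end{equation*}
Next I would pass from the convex-combination form of $\mu$-$\mathbf{D}_{\mathbf{W}}$ relative strong convexity (Definition~\ref{defn:rel_strong_convex}) to its subgradient form: that definition is equivalent to $f-\mu\omega$ being convex, so for the \emph{same} $\xi\in\partial f(\xbf^+)$ one obtains $f(\xbf)\geq f(\xbf^+)+\langle\xi,\xbf-\xbf^+\rangle+\mu\mathbf{D}_{\mathbf{W}}(\xbf,\xbf^+)$ for every $\xbf$.

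\textbf{Combination.} Substituting the optimality inequality into the strong-convexity inequality gives
\begin{equation*}
f(\xbf)\ \geq\ f(\xbf^+)+t\langle\Wbf^\top\Wbf(\bar{\xbf}-\xbf^+),\xbf-\xbf^+\rangle+\mu\mathbf{D}_{\mathbf{W}}(\xbf,\xbf^+).
\end{equation*}
I would then replace the inner product by its value from the three-point identity, namely $\langle\Wbf^\top\Wbf(\bar{\xbf}-\xbf^+),\xbf-\xbf^+\rangle=\mathbf{D}_{\mathbf{W}}(\xbf,\xbf^+)+\mathbf{D}_{\mathbf{W}}(\xbf^+,\bar{\xbf})-\mathbf{D}_{\mathbf{W}}(\xbf,\bar{\xbf})$, and finally use symmetry $\mathbf{D}_{\mathbf{W}}(\xbf,\xbf^+)=\mathbf{D}_{\mathbf{W}}(\xbf^+,\xbf)$ to rearrange into the claimed inequality. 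The $\mu=0$ case is covered verbatim, the term $\mu\mathbf{D}_{\mathbf{W}}(\xbf,\xbf^+)$ simply vanishing.

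\textbf{Main obstacle.} The only genuinely delicate points are bookkeeping ones: ensuring the subgradient $\xi$ appearing in the optimality condition is the same one used in the strong-convexity inequality (handled by $\partial(f-\mu\omega)=\partial f-\mu\nabla\omega$ at $\xbf^+$, valid since $\omega$ is smooth), and correctly translating the convex-combination definition of relative strong convexity into the supporting-subgradient form. Everything else reduces to the quadratic three-point identity and a rearrangement, so I expect no analytical difficulty.
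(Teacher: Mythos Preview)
Your argument is correct and is precisely the standard proof of this three-point property. Note, however, that the paper does not supply its own proof of this lemma: it is stated as an auxiliary result and attributed to \cite{tseng2008accelerated} (Property~1), with no derivation given in the appendix. Hence there is nothing in the paper to compare against beyond the citation; your write-up simply fills in the details that Tseng's argument would furnish. The only small caveat is that the optimality condition you invoke holds for $\xbf\in\mcal X$, so the inequality is established for $\xbf\in\mcal X$ rather than literally ``for all $\xbf$''---but this is how the lemma is used throughout the paper and is consistent with Tseng's formulation.
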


\begin{lem}
\label{lem:Approximate_sol}Suppose that $\xbf_{k+1}$ is a stochastic
$(\delta_{k+1},\nu_{k+1})-$approximate solution defined in Definition~\ref{def:approx_stochastic_appendix},  $f$ is $\mu-\Dbf_W$ Relative Strong Convexity and $\gbf$ is $L_g$ Relative Lipschitz Continuity, then for $\forall \xbf \in \mathcal{X}$, we have
\begin{equation}\label{eq:Approximate_sol}
\begin{split} & f(\xbf_{k+1})-f(\xbf)+\inprod{\ybf_{k+1}}{\gbf(\xbf_{k+1})-\gbf(\xbf)}\\
\leq & \tfrac{1}{\tau_{k}}\brbra{\mathbf{D}_{\Wbf}(\xbf,\xbf_{k})-\mathbf{D}_{\Wbf}(\xbf_{k+1},\xbf_{k})}\\
 & -(\tfrac{1}{\tau_{k}}+\mu)\mathbf{D}_{\Wbf}(\xbf,\xbf_{k+1})+2\sqrt{\tfrac{1}{\tau_{k}}+\mu}\sqrt{\mathbf{D}_{\Wbf}(\xbf,\xbf_{k+1})}\sqrt{\delta_{k+1}\mathbf{D}_{\Wbf}(\hat{\xbf}_{k+1},\xbf_{k})+\nu_{k+1}}\\
 & +\delta_{k+1}\mathbf{D}_{\Wbf}(\hat{\xbf}_{k+1},\xbf_{k})+\nu_{k+1},
\end{split}
\end{equation}where $\hat{\xbf}_{k+1}$ is the exact optimal solution of $\min_{\xbf\in\mcal X}f(\xbf)+\inprod{\ybf_{k+1}}{\gbf(\xbf)}+\tfrac{1}{\tau_{k}}\mathbf{D}_{\Wbf}(\xbf,\xbf_{k})$.
\end{lem}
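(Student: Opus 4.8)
The plan is to reduce the claimed inequality to the three-point property (Lemma~\ref{lem:three_point}) combined with the approximation criterion of Definition~\ref{def:approx_stochastic_appendix}, with the main work being to transfer the proximal term from the exact prox point $\hat{\xbf}_{k+1}$ to the inexact iterate $\xbf_{k+1}$. Write $\tilde{f}(\xbf):=f(\xbf)+\inprod{\ybf_{k+1}}{\gbf(\xbf)}$, so that the left-hand side is exactly $\tilde{f}(\xbf_{k+1})-\tilde{f}(\xbf)$. Because $\ybf_{k+1}=[\ybf_k+\sigma_k\zbf_k]_+\geq\zero$ and each $g_i$ is convex, the map $\inprod{\ybf_{k+1}}{\gbf(\cdot)}$ is convex, so $\tilde{f}$ inherits the $\mu$-$\Dbf_{\Wbf}$ relative strong convexity of $f$ (adding a convex function does not change the relative-strong-convexity modulus). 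I then split $\tilde{f}(\xbf_{k+1})-\tilde{f}(\xbf)=[\tilde{f}(\xbf_{k+1})-\tilde{f}(\hat{\xbf}_{k+1})]+[\tilde{f}(\hat{\xbf}_{k+1})-\tilde{f}(\xbf)]$ and bound the two brackets separately.

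For the second bracket, I apply Lemma~\ref{lem:three_point} to the $\mu$-relatively-strongly-convex $\tilde{f}$ with $t=\tau_k^{-1}$, anchor $\bar{\xbf}=\xbf_k$, and minimizer $\xbf^+=\hat{\xbf}_{k+1}$ (the exact solution of line~\ref{lst:line:minphi_inexact_appendix}), obtaining $\tilde{f}(\hat{\xbf}_{k+1})-\tilde{f}(\xbf)\leq \tau_k^{-1}\Dbf_{\Wbf}(\xbf,\xbf_k)-\tau_k^{-1}\Dbf_{\Wbf}(\hat{\xbf}_{k+1},\xbf_k)-(\tau_k^{-1}+\mu)\Dbf_{\Wbf}(\xbf,\hat{\xbf}_{k+1})$, where the symmetry $\Dbf_{\Wbf}(\hat{\xbf}_{k+1},\xbf)=\Dbf_{\Wbf}(\xbf,\hat{\xbf}_{k+1})$ merges the $\tau_k^{-1}$ and $\mu$ contributions. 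For the first bracket, since $\psi_k(\xbf)=\tilde{f}(\xbf)+\tau_k^{-1}\Dbf_{\Wbf}(\xbf,\xbf_k)$, subtracting the common proximal term in Definition~\ref{def:approx_stochastic_appendix} gives $\tilde{f}(\xbf_{k+1})-\tilde{f}(\hat{\xbf}_{k+1})\leq \delta_{k+1}\Dbf_{\Wbf}(\hat{\xbf}_{k+1},\xbf_k)+\nu_{k+1}-\tau_k^{-1}\Dbf_{\Wbf}(\xbf_{k+1},\xbf_k)+\tau_k^{-1}\Dbf_{\Wbf}(\hat{\xbf}_{k+1},\xbf_k)$. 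Adding the two brackets, the $\pm\tau_k^{-1}\Dbf_{\Wbf}(\hat{\xbf}_{k+1},\xbf_k)$ terms cancel, leaving $\tilde{f}(\xbf_{k+1})-\tilde{f}(\xbf)\leq \tau_k^{-1}[\Dbf_{\Wbf}(\xbf,\xbf_k)-\Dbf_{\Wbf}(\xbf_{k+1},\xbf_k)]-(\tau_k^{-1}+\mu)\Dbf_{\Wbf}(\xbf,\hat{\xbf}_{k+1})+\delta_{k+1}\Dbf_{\Wbf}(\hat{\xbf}_{k+1},\xbf_k)+\nu_{k+1}$, in which the sole remaining $\hat{\xbf}_{k+1}$-anchored term is $-(\tau_k^{-1}+\mu)\Dbf_{\Wbf}(\xbf,\hat{\xbf}_{k+1})$.

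The crux, which I expect to be the main obstacle, is to trade $\Dbf_{\Wbf}(\xbf,\hat{\xbf}_{k+1})$ for $\Dbf_{\Wbf}(\xbf,\xbf_{k+1})$ while producing exactly the stated cross term. First I control the inexactness gap: $\psi_k$ is $(\tau_k^{-1}+\mu)$-relatively strongly convex (the quadratic $\tau_k^{-1}\Dbf_{\Wbf}(\cdot,\xbf_k)$ contributes modulus $\tau_k^{-1}$), and since $\hat{\xbf}_{k+1}$ minimizes $\psi_k$ over the convex set $\mcal X$, instantiating Definition~\ref{defn:rel_strong_convex} with a vanishing convex-combination parameter yields the quadratic-growth bound $\psi_k(\xbf_{k+1})-\psi_k(\hat{\xbf}_{k+1})\geq (\tau_k^{-1}+\mu)\Dbf_{\Wbf}(\xbf_{k+1},\hat{\xbf}_{k+1})$. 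Combining with Definition~\ref{def:approx_stochastic_appendix} gives $\Dbf_{\Wbf}(\xbf_{k+1},\hat{\xbf}_{k+1})\leq (\tau_k^{-1}+\mu)^{-1}(\delta_{k+1}\Dbf_{\Wbf}(\hat{\xbf}_{k+1},\xbf_k)+\nu_{k+1})$. Next, writing $\Dbf_{\Wbf}(\ubf,\vbf)=\tfrac12\norm{\Wbf(\ubf-\vbf)}^2$ and applying Cauchy--Schwarz to $\norm{\Wbf(\xbf-\hat{\xbf}_{k+1})}^2\geq \norm{\Wbf(\xbf-\xbf_{k+1})}^2-2\norm{\Wbf(\xbf-\xbf_{k+1})}\,\norm{\Wbf(\xbf_{k+1}-\hat{\xbf}_{k+1})}$, I get $-(\tau_k^{-1}+\mu)\Dbf_{\Wbf}(\xbf,\hat{\xbf}_{k+1})\leq -(\tau_k^{-1}+\mu)\Dbf_{\Wbf}(\xbf,\xbf_{k+1})+(\tau_k^{-1}+\mu)\sqrt{2\Dbf_{\Wbf}(\xbf,\xbf_{k+1})}\,\sqrt{2\Dbf_{\Wbf}(\xbf_{k+1},\hat{\xbf}_{k+1})}$. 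Substituting the gap bound, the prefactor collapses as $(\tau_k^{-1}+\mu)\cdot\sqrt{2}\cdot\sqrt{2(\tau_k^{-1}+\mu)^{-1}}=2\sqrt{\tau_k^{-1}+\mu}$, producing precisely $2\sqrt{\tfrac{1}{\tau_k}+\mu}\,\sqrt{\Dbf_{\Wbf}(\xbf,\xbf_{k+1})}\,\sqrt{\delta_{k+1}\Dbf_{\Wbf}(\hat{\xbf}_{k+1},\xbf_k)+\nu_{k+1}}$.

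Feeding this back into the summed bound and recalling $\tilde{f}(\xbf_{k+1})-\tilde{f}(\xbf)=f(\xbf_{k+1})-f(\xbf)+\inprod{\ybf_{k+1}}{\gbf(\xbf_{k+1})-\gbf(\xbf)}$ yields exactly~\eqref{eq:Approximate_sol}. The only delicate bookkeeping is keeping the two $\sqrt{2}$ factors aligned so the coefficient is exactly $2\sqrt{\tau_k^{-1}+\mu}$, and justifying the quadratic-growth bound directly from the (nongradient) relative-strong-convexity definition rather than from a first-order optimality condition. I note that the argument uses only relative strong convexity of $f$ together with convexity and nonnegativity of $\ybf_{k+1}$; the relative Lipschitz continuity of $\gbf$ enters only to guarantee that $\tilde{f}$ is proper and closed so that $\hat{\xbf}_{k+1}$ is well defined.
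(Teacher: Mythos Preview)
Your proposal is correct and follows essentially the same route as the paper: apply the three-point property (Lemma~\ref{lem:three_point}) at the exact minimizer $\hat{\xbf}_{k+1}$, combine with Definition~\ref{def:approx_stochastic_appendix}, then transfer $-(\tau_k^{-1}+\mu)\Dbf_{\Wbf}(\xbf,\hat{\xbf}_{k+1})$ to $-(\tau_k^{-1}+\mu)\Dbf_{\Wbf}(\xbf,\xbf_{k+1})$ via the elementary inequality $-\tfrac12\norm{a+b}_{\Wbf^\top\Wbf}^2\le -\tfrac12\norm{a}_{\Wbf^\top\Wbf}^2+\norm{a}_{\Wbf^\top\Wbf}\norm{b}_{\Wbf^\top\Wbf}$ together with the gap bound $(\tau_k^{-1}+\mu)\Dbf_{\Wbf}(\xbf_{k+1},\hat{\xbf}_{k+1})\le \delta_{k+1}\Dbf_{\Wbf}(\hat{\xbf}_{k+1},\xbf_k)+\nu_{k+1}$. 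The only cosmetic difference is how that gap bound is obtained: the paper derives it by specializing the three-point inequality to $\xbf=\xbf_{k+1}$ and adding the approximate-solution inequality, whereas you get it directly from quadratic growth of the $(\tau_k^{-1}+\mu)$-relatively strongly convex $\psi_k$ at its minimizer; both arguments yield the identical estimate.
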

\begin{proof}
    It follows from the definition of $\hat{\xbf}_{k+1}$ and Lemma~\ref{lem:three_point} that
\begin{align}
 & f(\hat{\xbf}_{k+1})-f(\xbf)+\inprod{\ybf_{k+1}}{\gbf(\hat{\xbf}_{k+1})-\gbf(\xbf)}\nonumber \\
\leq & \tfrac{1}{\tau_{k}}\brbra{\mathbf{D}_{\Wbf}(\xbf,\xbf_{k})-\mathbf{D}_{\Wbf}(\xbf,\hat{\xbf}_{k+1})-\mathbf{D}_{\Wbf}(\hat{\xbf}_{k+1},\xbf_{k})}-\mu\mathbf{D}_{\Wbf}(\xbf,\hat{\xbf}_{k+1}),\forall\xbf\in\mcal X.\label{eq:three_point_x-2}
\end{align}

Placing $\xbf=\xbf_{k+1}$ in~\eqref{eq:three_point_x-2}, then we have

\begin{align}
 & f(\hat{\xbf}_{k+1})-f(\xbf_{k+1})+\inprod{\ybf_{k+1}}{\gbf(\hat{\xbf}_{k+1})-\gbf(\xbf_{k+1})}\nonumber \\
\leq & \tfrac{1}{\tau_{k}}\brbra{\mathbf{D}_{\Wbf}(\xbf_{k+1},\xbf_{k})-\mathbf{D}_{\Wbf}(\xbf_{k+1},\hat{\xbf}_{k+1})-\mathbf{D}_{\Wbf}(\hat{\xbf}_{k+1},\xbf_{k})}-\mu\mathbf{D}_{\Wbf}(\xbf_{k+1},\hat{\xbf}_{k+1}),\forall\xbf\in\mcal X.\label{eq:three_point_x-3}
\end{align}

By the definition of $(\delta_{k+1},\nu_{k+1})-$approximate solution,
we have
\begin{equation}
\begin{split} & f(\xbf_{k+1})-f(\hat{\xbf}_{k+1})+\inprod{\ybf_{k+1}}{\gbf(\xbf_{k+1})-\gbf(\hat{\xbf}_{k+1})}\\
\leq & \tfrac{1}{\tau_{k}}\mathbf{D}_{\Wbf}(\hat{\xbf}_{k+1},\xbf_{k})-\tfrac{1}{\tau_{k}}\mathbf{D}_{\Wbf}(\xbf_{k+1},\xbf_{k})\\
 & +\delta_{k+1}\mathbf{D}_{\Wbf}(\hat{\xbf}_{k+1},\xbf_{k})+\nu_{k+1}
\end{split}
\label{eq:approximate}
\end{equation}

Summing~\eqref{eq:three_point_x-3} and~\eqref{eq:approximate},
we have
\begin{equation}
\begin{split}(\tfrac{1}{\tau_{k}}+\mu)\mathbf{D}_{\Wbf}(\xbf_{k+1},\hat{\xbf}_{k+1}) & \leq\end{split}
\delta_{k+1}\mathbf{D}_{\Wbf}(\hat{\xbf}_{k+1},\xbf_{k})+\nu_{k+1}.\label{eq:appro_result}
\end{equation}
Combining~\eqref{eq:approximate} and~\eqref{eq:three_point_x-2}, we obtain
\begin{align}
 & f(\xbf_{k+1})-f(\xbf)+\inprod{\ybf_{k+1}}{\gbf(\xbf_{k+1})-\gbf(\xbf)}\nonumber \\
\leq & \tfrac{1}{\tau_{k}}\brbra{\mathbf{D}_{\Wbf}(\xbf,\xbf_{k})-\mathbf{D}_{\Wbf}(\xbf,\hat{\xbf}_{k+1})-\mathbf{D}_{\Wbf}(\xbf_{k+1},\xbf_{k})}-\mu\mathbf{D}_{\Wbf}(\xbf,\hat{\xbf}_{k+1})\label{eq:three_point_x-4}\\
 & +\delta_{k+1}\mathbf{D}_{\Wbf}(\hat{\xbf}_{k+1},\xbf_{k})+\nu_{k+1},\forall\xbf\in\mcal X.\nonumber 
\end{align}
Since $-\tfrac{1}{2}\norm{a+b}_{\mathbf{W}^{\top} \mathbf{W}}^{2}\leq-\tfrac{1}{2}\norm a_{\mathbf{W}^{\top} \mathbf{W}}^{2}+\norm a_{\mathbf{W}^{\top} \mathbf{W}}\norm b_{\mathbf{W}^{\top} \mathbf{W}}$
with $a=\xbf-\xbf_{k+1}$ and $b=\xbf_{k+1}-\hat{\xbf}_{k+1}$, then
we have
\begin{align}
 & f(\xbf_{k+1})-f(\xbf)+\inprod{\ybf_{k+1}}{\gbf(\xbf_{k+1})-\gbf(\xbf)}\nonumber \\
\leq & \tfrac{1}{\tau_{k}}\brbra{\mathbf{D}_{\Wbf}(\xbf,\xbf_{k})-\mathbf{D}_{\Wbf}(\xbf_{k+1},\xbf_{k})}\label{eq:three_point_x-5}\\
 & -(\tfrac{1}{\tau_{k}}+\mu)\mathbf{D}_{\Wbf}(\xbf,\xbf_{k+1})+2(\tfrac{1}{\tau_{k}}+\mu)\sqrt{\mathbf{D}_{\Wbf}(\xbf,\xbf_{k+1})\mathbf{D}_{\Wbf}(\xbf_{k+1},\hat{\xbf}_{k+1})}\nonumber \\
 & +\delta_{k+1}\mathbf{D}_{\Wbf}(\hat{\xbf}_{k+1},\xbf_{k})+\nu_{k+1},\forall\xbf\in\mcal X.\nonumber 
\end{align}
Combining~\eqref{eq:three_point_x-5} and~\eqref{eq:appro_result},
we obtain the desired result.
\end{proof}

\begin{lem}
Suppose $\mu>0$, $t_{k}=\tfrac{\sigma_{k}}{\sigma_{0}}$ and $T_{k}=\sum_{k=0}^{K-1}t_{k},$
then the sequence $\{t_{k},\tau_{k},T_{k}\}$ satisfies
\begin{equation}\label{eq:lower_T_k}
 T_{k}\geq1+\tfrac{\mu\tau_{0}(1+k)k}{6}.
\end{equation}
Moreover, if $\mu\tau_{0}\leq2$, we have
\begin{equation}\label{eq:upper_bound_t_k_tau_k}
\tfrac{t_{k+1}}{\sqrt{\tau_{k}}}\leq\tfrac{(k+2)^{3/2}}{\sqrt{\tau_{0}}}.
\end{equation}
\end{lem}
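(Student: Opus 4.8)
The plan is to collapse the coupled recursions for $\gamma_k,\tau_k,\sigma_k$ in {\apd} into a single scalar recursion for $t_k$, and then run two elementary inductions.

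\textbf{Reduction.} From $\tau_{k+1}^{2}=\tau_{k}^{2}\gamma_{k}/\gamma_{k+1}$ the quantity $\gamma_{k}\tau_{k}^{2}$ is constant, so $\gamma_{k}\tau_{k}^{2}=\gamma_{0}\tau_{0}^{2}=\sigma_{0}\tau_{0}$; combining this with $\sigma_{k}=\gamma_{k}\tau_{k}$ and $\gamma_{0}=\sigma_{0}/\tau_{0}$ gives the clean identities $t_{k}^{2}=\gamma_{k}/\gamma_{0}$, $\sigma_{k}=\sigma_{0}t_{k}$, and $\tau_{k}=\tau_{0}/t_{k}$. Substituting $\tau_{k}=\tau_{0}/t_{k}$ into $\gamma_{k+1}=\gamma_{k}(1+\mu\tau_{k})$ and dividing by $\gamma_{0}$ yields $t_{k+1}^{2}=t_{k}^{2}+\mu\tau_{0}t_{k}$ with $t_{0}=1$; in particular $\{t_{k}\}$ is nondecreasing with $t_{k}\ge1$. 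Both assertions of the lemma are now statements about this scalar recursion.

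\textbf{First inequality.} The estimate $t_{k}\ge\sqrt{1+\mu\tau_{0}k}$ that falls out of $t_{k}\ge1$ has the wrong growth order, so instead I claim $t_{k}\ge\mu\tau_{0}k/3$ for all $k\ge0$ and prove it by induction. The case $k=0$ is trivial and $k=0\to1$ holds since $t_{1}=\sqrt{1+\mu\tau_{0}}\ge1>\mu\tau_{0}/3$ (using $\mu\tau_{0}\le2$); for $k\ge1$, $t_{k+1}^{2}=t_{k}^{2}+\mu\tau_{0}t_{k}\ge\tfrac{\mu^{2}\tau_{0}^{2}}{9}(k^{2}+3k)\ge\tfrac{\mu^{2}\tau_{0}^{2}}{9}(k+1)^{2}$ since $k^{2}+3k\ge(k+1)^{2}$. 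Summing this bound and peeling off the $t_{0}=1$ term, $T_{k}=\sum_{j=0}^{k}t_{j}\ge1+\sum_{j=1}^{k}\tfrac{\mu\tau_{0}j}{3}=1+\tfrac{\mu\tau_{0}k(k+1)}{6}$, which is \eqref{eq:lower_T_k}.

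\textbf{Second inequality.} Here I use the matching linear \emph{upper} bound $t_{k}\le k+1$, again proved by induction and again using $\mu\tau_{0}\le2$: $t_{k+1}^{2}=t_{k}^{2}+\mu\tau_{0}t_{k}\le(k+1)^{2}+2(k+1)\le(k+2)^{2}$. Then, since $\tau_{k}=\tau_{0}/t_{k}$ and $t_{k}\le t_{k+1}$, we get $\tfrac{t_{k+1}}{\sqrt{\tau_{k}}}=\tfrac{t_{k+1}\sqrt{t_{k}}}{\sqrt{\tau_{0}}}\le\tfrac{t_{k+1}^{3/2}}{\sqrt{\tau_{0}}}\le\tfrac{(k+2)^{3/2}}{\sqrt{\tau_{0}}}$, which is \eqref{eq:upper_bound_t_k_tau_k}.

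\textbf{Main obstacle.} The reduction and the upper bound are routine once the scalar recursion is in hand; the one delicate point is the lower bound $t_{k}\gtrsim\mu\tau_{0}k$, where one must notice that the naive square-root bound cannot deliver a quadratic-in-$k$ estimate, and close the induction by using the additive $+1$ from $t_{0}=1$ to handle small $k$ together with $\mu\tau_{0}\le2$ in the base step (indeed the first inequality genuinely requires $\mu\tau_{0}$ bounded, so that regime is tacitly in force there as well). A small bookkeeping caveat is that the $T_{k}$ of this lemma is read as $\sum_{j=0}^{k}t_{j}$, a one-index shift from the $T_{K}$ used in Theorem~\ref{thm:inexact_thm_expected_appendix}.
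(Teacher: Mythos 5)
Your proposal is correct and follows essentially the same route as the paper: after the change of variables $t_k=\sqrt{\gamma_k/\gamma_{0}}$ your scalar recursion $t_{k+1}^{2}=t_k^{2}+\mu\tau_0 t_k$ is exactly the paper's recursion $\gamma_{k+1}=\gamma_k+\mu\tau_0\sqrt{\gamma_k\gamma_0}$, and your two inductions (lower bound $t_k\geq\mu\tau_0 k/3$, upper bound $t_k\leq k+1$) coincide with the paper's bounds $\gamma_k\geq\tfrac{\mu^2\tau_0^2\gamma_0}{9}k^2+\gamma_0$ and $\gamma_k\leq\gamma_0(k+1)^2$. Your two bookkeeping remarks are both apt: the paper's own base case for the lower-bound induction also tacitly needs $\mu\tau_0$ bounded, and the paper's summation indexing actually yields $(k-1)k$ rather than the stated $(1+k)k$, which your reading of $T_k$ repairs.
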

\begin{proof}
Note that $\tau_{k+1}=\tau_{k}\sqrt{\tfrac{\gamma_{k}}{\gamma_{k+1}}}$
implies that $\tau_{k}=\tau_{0}\sqrt{\tfrac{\gamma_{0}}{\gamma_{k}}}$.
Due to $\gamma_{k+1}=\gamma_{k}(1+\mu\tau_{k})$, we conclude that
$\gamma_{k+1}=\gamma_{k}(1+\mu\tau_{k})=\gamma_{k}+\mu\tau_{0}\sqrt{\gamma_{k}\gamma_{0}}$.
Next, we use induction to show $\tfrac{\sigma_{k}}{\tau_{k}}=\gamma_{k}\geq\tfrac{\mu^{2}\tau_{0}^{2}\gamma_{0}}{9}k^{2}+\gamma_{0}$.
It is easy to see that the induction holds for $k=0$ and 1. Suppose
it holds for $k=0,\ldots,K$, then we have 
\begin{equation*}
\begin{split}\gamma_{K+1} & =\gamma_{K}+\mu\tau_{0}\sqrt{\gamma_{k}\gamma_{0}}\\
 & \geq\tfrac{\mu^{2}\tau_{0}^{2}\gamma_{0}}{9}K^{2}+\gamma_{0}+\mu\tau_{0}\gamma_{0}\brbra{\tfrac{\mu\tau_{0}}{3}K}\\
 & \geq\tfrac{\mu^{2}\tau_{0}^{2}\gamma_{0}}{9}(K+1)^{2}+\gamma_{0},
\end{split}
\end{equation*}
which completes our induction. Then combining the induction result
and the relation among $T_{k},t_{k},\sigma_{k},\tau_{k}$ that
\begin{align*}
T_{k} & =\sum_{s=0}^{k-1}t_{s}=\sum_{s=0}^{k-1}\sqrt{\tfrac{\gamma_{s}}{\gamma_{0}}}\geq1+\sum_{s=1}^{k-1}\sqrt{\tfrac{\mu^{2}\tau_{0}^{2}}{9}s^{2}}\\
 & \geq1+\sum_{s=1}^{k-1}\tfrac{\mu\tau_{0}}{3}s=1+\tfrac{\mu\tau_{0}(k-1)k}{6}.\nonumber 
\end{align*}

Next, we use induction to show that $\gamma_{k}\leq\gamma_{0}(k+1)^{2}$
if $\mu\tau_{0}\leq2$. It is obvious that the inequality holds for
$k=0$. Assume the inequality holds for all $k=0,\cdots,K$, then
we have 
\[
\begin{split}\gamma_{K+1} & =\gamma_{K}+\mu\tau_{0}\sqrt{\gamma_{0}\gamma_{K}}\\
 & \leq\gamma_{0}(K+1)^{2}+\mu\tau_{0}\gamma_{0}(K+1)\\
 & \leq\gamma_{0}(K^{2}+(2+\mu\tau_{0})K+\mu\tau_{0}+1)\\
 & \leq\gamma_{0}(K+2)^{2},
\end{split}
\]
where the last inequality is by $\rho\tau_{0}\leq2$. Hence, we have
\begin{equation*}
\begin{split}\tfrac{t_{k+1}}{\sqrt{\tau_{k}}} & =\sqrt{\tfrac{\gamma_{k+1}}{\sigma_{0}/\tau_{0}}}\cdot\tfrac{\gamma_{k}^{1/4}}{\sqrt{\tau_{0}}\gamma_{0}^{1/4}}\leq\tfrac{\gamma_{k+1}^{3/4}}{\sqrt{\sigma_{0}}\gamma_{0}^{1/4}}\leq\tfrac{(k+2)^{3/2}}{\sqrt{\tau_{0}}}.
\end{split}
\end{equation*}
\end{proof}

\begin{lem}
\label{lem:stochastic_error}Suppose $\delta_{k+1}=\nu_{k+1}=\tfrac{\tau_{0}}{(k+2)^{7}}$
for $\mu>0$ and $\delta_{k+1}=\nu_{k+1}=\tfrac{\tau_{0}}{(k+2)^{4}}$
for $\mu=0$, then we have
\begin{equation*}
\begin{split}\sum_{k=0}^{K-1}t_{k}(\delta_{k+1}(\mu+\tfrac{1}{\tau_{k}}))^{\tfrac{1}{2}}\leq1,\ \  & \sum_{k=0}^{K-1}t_{k}(\nu_{k+1}(\mu+\tfrac{1}{\tau_{k}}))^{\tfrac{1}{2}}\leq1.\end{split}
\end{equation*}
\end{lem}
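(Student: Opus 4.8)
The plan is to bound the $k$-th summand of each sum by $(k+2)^{-2}$ and then conclude from $\sum_{j\geq 2} j^{-2} = \pi^2/6 - 1 < 1$. Since $\nu_{k+1}=\delta_{k+1}$ in both parameter regimes, it suffices to bound the $\delta$-sum; the $\nu$-sum is identical.

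First I would collect the recursion identities already derived in the proof of the previous lemma: $\tau_k = \tau_0\sqrt{\gamma_0/\gamma_k}$, $t_k = \sigma_k/\sigma_0 = \sqrt{\gamma_k/\gamma_0}$, and — crucially — from $\gamma_{k+1}=\gamma_k(1+\mu\tau_k)$ the identity $\mu + \tau_k^{-1} = \gamma_{k+1}/(\gamma_k\tau_k)$. Multiplying these gives the exact formula $t_k^2(\mu+\tau_k^{-1}) = \gamma_{k+1}\gamma_k^{1/2}/(\tau_0\gamma_0^{3/2})$, i.e. $t_k(\mu+\tau_k^{-1})^{1/2} = \gamma_{k+1}^{1/2}\gamma_k^{1/4}/(\sqrt{\tau_0}\,\gamma_0^{3/4})$. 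Working with this identity, rather than a crude estimate such as $\mu+\tau_k^{-1}\leq 3\tau_k^{-1}$, is what keeps the final constant strictly below $1$.

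For $\mu>0$ I would then insert the bound $\gamma_k\leq\gamma_0(k+1)^2$ (hence $\gamma_{k+1}\leq\gamma_0(k+2)^2$) established in the proof of the previous lemma under $\mu\tau_0\leq 2$ — a condition we may assume without loss of generality since $\tau_0$ is a free input of {\apd}. This yields $t_k(\mu+\tau_k^{-1})^{1/2}\leq (k+2)(k+1)^{1/2}/\sqrt{\tau_0}\leq (k+2)^{3/2}/\sqrt{\tau_0}$, and with $\delta_{k+1}^{1/2}=\sqrt{\tau_0}\,(k+2)^{-7/2}$ the $k$-th term is at most $(k+2)^{-2}$; summing over $k=0,\dots,K-1$ gives at most $\pi^2/6-1<1$. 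For $\mu=0$ the recursion is stationary: $\gamma_k\equiv\gamma_0$, $\tau_k\equiv\tau_0$, $t_k\equiv 1$, so $t_k(\mu+\tau_k^{-1})^{1/2}=\tau_0^{-1/2}$ while $\delta_{k+1}^{1/2}=\sqrt{\tau_0}\,(k+2)^{-2}$, again making each term exactly $(k+2)^{-2}$ and the sum $<1$. Replacing $\delta_{k+1}$ by $\nu_{k+1}$ throughout proves the second inequality.

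The only real work is the bookkeeping of exponents in $\gamma_{k+1}^{1/2}\gamma_k^{1/4}\gamma_0^{-3/4}$, so that the powers of $\tau_0$ cancel exactly and the leftover power of $(k+2)$ is precisely $-2$ (ensuring the tail of $\sum j^{-2}$ stays below $1$); I do not anticipate any genuine analytic obstacle beyond this.
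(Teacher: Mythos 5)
Your proposal is correct and follows essentially the same route as the paper: both reduce the summand to $t_k(\mu+\tau_k^{-1})^{1/2}\sqrt{\delta_{k+1}}$, identify this quantity with $t_{k+1}\sqrt{\delta_{k+1}}/\sqrt{\tau_k}$ (your $\gamma$-identity is exactly the paper's $t_k(\mu+\tau_k^{-1})^{1/2}=t_{k+1}/\sqrt{\tau_k}$), invoke the previous lemma's bound $\gamma_k\leq\gamma_0(k+1)^2$ under $\mu\tau_0\leq 2$ to get $(k+2)^{3/2}/\sqrt{\tau_0}$, and then sum $(k+2)^{-2}$ below $1$ (the paper via an integral comparison, you via $\pi^2/6-1<1$ --- an immaterial difference). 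Your explicit flagging of the implicit hypothesis $\mu\tau_0\leq2$ is a small point of care the paper leaves tacit, but the argument is otherwise identical.
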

\begin{proof}
It follows from the updating rule in {\apd}
that 
\begin{equation*}
\tfrac{\tau_{k+1}}{\tau_{k}}=\sqrt{\tfrac{\gamma_{k}}{\gamma_{k+1}}}=\sqrt{\tfrac{\sigma_{k}}{\tau_{k}}\tfrac{\tau_{k+1}}{\sigma_{k+1}}}=\tfrac{1}{\sqrt{1+\mu\tau_{k}}}.
\end{equation*}
Then we have $\tfrac{\tau_{k+1}}{\tau_{k}}=\tfrac{\sigma_{k}}{\sigma_{k+1}}$,
which implies $t_{k}(\mu+\tfrac{1}{\tau_{k}})^{\tfrac{1}{2}}=\tfrac{t_{k+1}}{\sqrt{\tau_{k}}}$.
Then we have
\begin{equation}
\begin{split}\sum_{k=0}^{K-1}\tfrac{t_{k+1}}{\sqrt{\tau_{k}}}\sqrt{\delta_{k+1}} & \leq\sum_{k=0}^{K-1}\tfrac{1}{(k+2)^{2}}=\sum_{k=2}^{K+1}\tfrac{1}{k^{2}}\leq\int_{1}^{K+1}\tfrac{1}{s^{2}}ds\leq1.
\end{split}
\label{eq:stochastic_rate}
\end{equation}

If $\mu=0$, then $\tau_{k}=\tau_{0},t_{k}=1,\forall k\geq0$. Similarly,
we have the same conclusion like~\eqref{eq:stochastic_rate}. Moreover,
we have the same properties for sequence $\nu_{k}$. Then we complete
our proof.
\end{proof}

\section{Proof of Theorem~\ref{thm:inexact_thm_expected_appendix}}

\begin{proof}
    Take $\text{\ensuremath{\inprod{-\zbf_{k}}{\cdot}}}$ in Lemma~\ref{lem:three_point},
then we have
\begin{equation}
\begin{split} \inprod{\ybf-\ybf_{k+1}}{\zbf_{k}}
\leq  \tfrac{1}{2\sigma_{k}}\brbra{\norm{\ybf-\ybf_{k}}^{2}-\norm{\ybf-\ybf_{k+1}}^{2}-\norm{\ybf_{k+1}-\ybf_{k}}^{2}},\forall\ybf\in\mcal Y.
\end{split}
\label{eq:three_point_y-1}
\end{equation}
It follows from~\eqref{eq:Approximate_sol} that 
\begin{equation}
\begin{split} &\mcal L(\xbf_{k+1},\ybf)-\mcal L(\xbf,\ybf_{k+1})\\
= & f(\xbf_{k+1})+\inprod{\ybf-\ybf_{k+1}}{\gbf(\xbf_{k+1})}-f(\xbf)+\inprod{\ybf_{k+1}}{\gbf(\xbf_{k+1})-\gbf(\xbf)}\\
\leq & \tfrac{1}{\tau_{k}}\brbra{\mathbf{D}_{\Wbf}(\xbf,\xbf_{k})-\mathbf{D}_{\Wbf}(\xbf_{k+1},\xbf_{k})}\\
 & -(\tfrac{1}{\tau_{k}}+\mu)\mathbf{D}_{\Wbf}(\xbf,\xbf_{k+1})+2\sqrt{\tfrac{1}{\tau_{k}}+\mu}\sqrt{\mathbf{D}_{\Wbf}(\xbf,\xbf_{k+1})}\sqrt{\delta_{k+1}\mathbf{D}_{\Wbf}(\hat{\xbf}_{k+1},\xbf_{k})+\nu_{k+1}}\\
 & +\delta_{k+1}\mathbf{D}_{\Wbf}(\hat{\xbf}_{k+1},\xbf_{k})+\nu_{k+1}+\inprod{\ybf-\ybf_{k+1}}{\gbf(\xbf_{k+1})}.
\end{split}
\label{eq:conver-01-1}
\end{equation}
 By the definition $\zbf_{k}$, we have $
\zbf_{k}=\gbf(\xbf_{k+1})+\gbf(\xbf_{k})-\gbf(\xbf_{k+1})+\theta_{k}(\gbf(\xbf_{k})-\gbf(\xbf_{k-1}))$.
Let us denote $\qbf_{k}=\gbf(\xbf_{k})-\gbf(\xbf_{k-1})$ for brevity.
Then we have
\begin{equation}
\begin{split} & \inprod{\ybf-\ybf_{k+1}}{\gbf(\xbf_{k+1})}\\
= & \inprod{\ybf-\ybf_{k+1}}{\zbf_{k}}+\inprod{\ybf-\ybf_{k+1}}{\qbf_{k+1}}-\theta_{k}\inprod{\ybf-\ybf_{k}}{\qbf_{k}}-\theta_{k}\inprod{\ybf_{k}-\ybf_{k+1}}{\qbf_{k}}.
\end{split}
\label{eq:conver-02-1}
\end{equation}
Putting~\eqref{eq:three_point_y-1},~\eqref{eq:conver-01-1} and~\eqref{eq:conver-02-1}
together, we have
\begin{equation*}
\begin{split} & \mcal L(\xbf_{k+1},\ybf)-\mcal L(\xbf,\ybf_{k+1})\\
\leq & \tfrac{1}{\tau_{k}}\brbra{\mathbf{D}_{\Wbf}(\xbf,\xbf_{k})-\mathbf{D}_{\Wbf}(\xbf_{k+1},\xbf_{k})}\\
 & -(\tfrac{1}{\tau_{k}}+\mu)\mathbf{D}_{\Wbf}(\xbf,\xbf_{k+1})+2\sqrt{\tfrac{1}{\tau_{k}}+\mu}\sqrt{\mathbf{D}_{\Wbf}(\xbf,\xbf_{k+1})}\sqrt{\delta_{k+1}\mathbf{D}_{\Wbf}(\hat{\xbf}_{k+1},\xbf_{k})+\nu_{k+1}}\\
 & +\delta_{k+1}\mathbf{D}_{\Wbf}(\hat{\xbf}_{k+1},\xbf_{k})+\nu_{k+1}\\
 & +\inprod{\ybf-\ybf_{k+1}}{\qbf_{k+1}}-\theta_{k}\inprod{\ybf-\ybf_{k}}{\qbf_{k}}-\theta_{k}\inprod{\ybf_{k}-\ybf_{k+1}}{\qbf_{k}}\\
 & +\tfrac{1}{2\sigma_{k}}\brbra{\norm{\ybf-\ybf_{k}}^{2}-\norm{\ybf-\ybf_{k+1}}^{2}-\norm{\ybf_{k+1}-\ybf_{k}}^{2}}].
\end{split}
\end{equation*}

It follows from Young's inequality that 
\begin{equation}\label{eq:conver-10}
\begin{split}\inprod{\ybf_{k+1}-\ybf_{k}}{\qbf_{k}} & \leq\tfrac{\delta/\sigma_{k-1}}{2}\norm{\ybf_{k+1}-\ybf_{k}}^{2}+\tfrac{1}{2\delta/\sigma_{k-1}}\norm{\qbf_{k}}^{2}.\end{split}
\end{equation}
By $\tfrac{1}{2\delta/\sigma_{k}}\norm{\qbf_{k+1}}^{2}\leq\tfrac{L_{g}}{\delta/\sigma_{k}}\mathbf{D}_{\Wbf}(\xbf_{k+1},\xbf_{k})$ and~\eqref{eq:conver-10}, we have 
\begin{equation}
\begin{split} & \mcal L(\xbf_{k+1},\ybf)-\mcal L(\xbf,\ybf_{k+1})\\
\leq & \tfrac{1}{\tau_{k}}\mathbf{D}_{\Wbf}(\xbf,\xbf_{k})-\brbra{\tfrac{1}{\tau_{k}}+\mu}\mathbf{D}_{\Wbf}(\xbf,\xbf_{k+1})+(\tfrac{L_{g}}{\delta/\sigma_{k}}-\tfrac{1}{\tau_{k}})\mathbf{D}_{\Wbf}(\xbf_{k+1},\xbf_{k})\\
 & +\inprod{\ybf-\ybf_{k+1}}{\qbf_{k+1}}-\theta_{k}\inprod{\ybf-\ybf_{k}}{\qbf_{k}}+\brbra{\tfrac{\theta_{k}\delta/\sigma_{k-1}}{2}-\tfrac{1}{2\sigma_{k}}}\norm{\ybf_{k+1}-\ybf_{k}}^{2}\\
 & +\tfrac{1}{2\sigma_{k}}\brbra{\norm{\ybf-\ybf_{k}}^{2}-\norm{\ybf-\ybf_{k+1}}^{2}}+\tfrac{\theta_{k}}{2\delta/\sigma_{k-1}}\norm{\qbf_{k}}^{2}-\tfrac{1}{2\delta/\sigma_{k}}\norm{\qbf_{k+1}}^{2}+\eta_{k}.
\end{split}
\label{eq:conver-03-1}
\end{equation}
Multiply $t_{k}$ on both sides of~\eqref{eq:conver-03-1} and sum
up the result for $k=0,\ldots,K-1$. In view of the relation~\eqref{eq:parameter_setting},
we have
\begin{equation}
\begin{split} & \sum_{k=0}^{K-1}t_{k}[\mcal L(\xbf_{k+1},\ybf)-\mcal L(\xbf,\ybf_{k+1})]\\
\leq & t_{0}\tau_{0}^{-1}\mathbf{D}_{\Wbf}(\xbf,\xbf_{0})-t_{K-1}(\tau_{K-1}^{-1}+\mu)\mathbf{D}_{\Wbf}(\xbf,\xbf_{K})-\tfrac{t_{K-1}}{2\delta/\sigma_{K-1}}\norm{\qbf_{K}}^{2}\\
 & +\tfrac{t_{0}\sigma_{0}^{-1}}{2}\norm{\ybf-\ybf_{0}}^{2}-\tfrac{t_{K-1}\sigma_{K-1}^{-1}}{2}\norm{\ybf-\ybf_{K}}^{2}\\
 & +t_{K-1}\inprod{\ybf-\ybf_{K}}{\qbf_{K}}-t_{0}\theta_{0}\inprod{\ybf-\ybf_{0}}{\qbf_{0}}+\sum_{k=0}^{K-1}t_{k}\eta_{k}\\
\overset{(a)}{\leq} & t_{0}\tau_{0}^{-1}\mathbf{D}_{\Wbf}(\xbf,\xbf_{0})+\tfrac{t_{0}\sigma_{0}^{-1}}{2}\norm{\ybf-\ybf_{0}}^{2}-t_{K-1}(\tau_{K-1}^{-1}+\mu)\mathbf{D}_{\Wbf}(\xbf,\xbf_{K})+\sum_{k=0}^{K-1}t_{k}\eta_{k},
\end{split}
\label{eq:conver-04-1}
\end{equation}
where $(a)$ holds by $t_{K-1}\inprod{\ybf-\ybf_{K}}{\qbf_{K}}\leq t_{K-1}(\tfrac{1}{2\delta/\sigma_{K-1}}\norm{\qbf_{K}}^{2}+\tfrac{\delta/\sigma_{K-1}}{2}\norm{\ybf-\ybf_{K}}^{2})$
and $\qbf_{0}=\zerobf$. Since $\mcal L(\xbf,\ybf)$ is convex with
respect to $\xbf$ and linear in $\ybf$, then we have
\begin{equation}
T_{K}[\mcal L(\bar{\xbf}_{K,}\ybf)-\mcal L(\xbf,\bar{\ybf}_{K})]\leq\sum_{k=0}^{K-1}t_{k}[\mcal L(\xbf_{k+1},\ybf)-\mcal L(\xbf,\ybf_{k+1})].\label{eq:conver-05-1}
\end{equation}
Combining~\eqref{eq:conver-04-1} and~\eqref{eq:conver-05-1}, we
obtain~\eqref{eq:thm1_converge}.

\end{proof}

\section{Proof of Corollary~\ref{cor:inexact_cor_expected_appendix}}
\begin{proof}
First we need to verify all relations in~\eqref{eq:parameter_setting}
holds. It follows from the relation among $\tau_{k},\sigma_{k}$ that
\begin{equation*}
\tfrac{\tau_{k+1}}{\tau_{k}}=\tfrac{\sigma_{k}}{\sigma_{k+1}}\Rightarrow\tau_{k}\sigma_{k}=\tau_{0}\sigma_{0},\forall k\geq0.
\end{equation*}
Combining the above result and $\tau_{0}\sigma_{0}\leq\delta/L_{g}$,
we have the $L_{g}\sigma_{k}/\delta\leq(\tau_{k})^{-1}$. Furthermore,
the rest of~\eqref{eq:parameter_setting} can be easily verified
using the parameters updating rule in {\apd}.
Take $\xbf=\xbf^{*},\ybf=\ybf^{+}:=\frac{1}{\sqrt{m}}(\|\ybf^{*}\|_{1}+1)\frac{[\mathbf{g}(\bar{\xbf}_{K})]_{+}}{\|[\mathbf{g}(\bar{\xbf}_{K})]_{+}\|}$.
Note that $\mcal L(\bar{\xbf}_{K},\ybf^{*})-\mcal L(\xbf^{*},\ybf^{*})\geq0$,
which implies that $f(\bar{\xbf}_{K})+\inprod{\ybf^{*}}{\gbf(\bar{\xbf}_{K})}-f(\xbf^{*})\geq0$.
It follows from $\inprod{\ybf^{*}}{\gbf(\bar{\xbf}_{K})}\leq\norm{\ybf^{*}}\norm{[\gbf(\bar{\xbf}_{K})]_{+}}$
that 
\begin{equation}
f(\bar{\xbf}_{K})+\|\ybf^{*}\|\|[\mathbf{g}(\bar{\xbf}_{K})]_{+}\|-f(\xbf^{*})\geq0.\label{eq:cor_temp01}
\end{equation}

Moreover, we have 
\begin{equation*}
\begin{split} & \mathcal{L}(\bar{\xbf}_{K},\ybf^{+})-\mathcal{L}(\xbf^{*},\bar{\ybf}_{K})\\
\geq{} & \mathcal{L}(\bar{\xbf}_{K},\ybf^{+})-\mathcal{L}(\xbf^{*},\ybf^{*})\\
={} & f(\bar{\xbf}_{K})+\binner{\left(\|\ybf^{*}\|_{1}+1\right)\frac{[\mathbf{g}(\bar{\xbf}_{K})]_{+}}{\|[\mathbf{g}(\bar{\xbf}_{K})]_{+}\|}}{\mathbf{g}(\bar{\xbf}_{K})}-f(\xbf^{*})\\
\geq{} & f(\bar{\xbf}_{K})+\left(\|\ybf^{*}\|_{2}+1\right)\|[\mathbf{g}(\bar{\xbf}_{K})]_{+}\|-f(\xbf^{*})\\
\geq{} & \max\{f(\bar{\xbf}_{K})-f(\xbf^{*}),\|[\mathbf{g}(\bar{\xbf}_{K})]_{+}\|\},
\end{split}
\end{equation*}
where the second inequality is by $\|\ybf^{*}\|_{1}\geq\|\ybf^{*}\|_{2}$,
and $\left\langle [\mathbf{g}(\bar{\xbf}_{K})]_{+},\mathbf{g}(\bar{\xbf}_{K})\right\rangle =\|[\mathbf{g}(\bar{\xbf}_{K})]_{+}\|^{2}$,
and the last inequality is by~\eqref{eq:cor_temp01}. Hence, it follows
from~\eqref{eq:thm1_converge} that~\eqref{eq:cor_01_appendix} holds. 

Since $\sqrt{a+b}\leq\sqrt{a}+\sqrt{b}$ for $ab\geq0$ with $a=\delta_{k+1}\mathbf{D}_{\Wbf}(\hat{\xbf}_{k+1},\xbf_{k}),b=\nu_{k+1}$,
we have 
\begin{equation*}
\begin{split}\sum_{k=0}^{K-1}t_{k}\eta_{k} & \leq\sum_{k=0}^{K-1}\sqrt{2}D_{X}t_{k}\sqrt{\tfrac{1}{\tau_{k}}+\mu}(\tfrac{1}{\sqrt{2}}D_{X}\sqrt{\delta_{k+1}}+\sqrt{\nu_{k+1}})\end{split}
,
\end{equation*}
where $D_{X}:=\max_{\xbf_{1},\xbf_{2}}\sqrt{2\mathbf{D}_{\Wbf}(\xbf_{1},\xbf_{2})}$.
It follows from Lemma~\ref{lem:stochastic_error} that $\sum_{k=0}^{K-1}t_{k}\eta_{k}\leq D_{X}^{2}+\sqrt{2}D_{X}$.
Hence, we can get $\tfrac{\Delta_{*+}+D_{X}^{2}+\sqrt{2}D_{X}}{T_{k}}\leq\vep$
when $K\geq\sqrt{\tfrac{6(\mu\tau_{0})^{-1}(\Delta_{*+}+D_{X}^{2}+\sqrt{2}D_{X})}{\vep}-6(\mu\tau_{0})^{-1}}+1$.
If $\mu=0$, then we have $T_{k}=K$, then we can obtain $\vep$-optimal
solution when $K\geq\tfrac{\Delta_{*+}+D_{X}^{2}+\sqrt{2}D_{X}}{\vep}$.
\end{proof}

\section{Proof of Theorem~\ref{thm:3.2_appendix}}

\begin{lem}
\label{lem:2.5} Let $\hat{\xbf}^{t+1}$ be the optimal solution for
the sub-problem~\eqref{eq:lcp_subprob_appendix} with $L>\max\bcbra{\rho,{\sigma_{\max}(\mathbf{W}^{\top} \mathbf{W})}^{-1}}$,
where $\rho=\max_{i}\bcbra{L_{h_{i}}}$. Then we have $\xbf^{t}$
is a $(\vep,\vep)$-FJ point if $\norm{\hat{\xbf}^{t+1}-\xbf^{t}}\leq\tfrac{\vep}{L\sigma_{\max}(\mathbf{W}^{\top} \mathbf{W})}$.
\end{lem}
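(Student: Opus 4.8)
The plan is to certify that the exact subproblem minimizer $\hat{\xbf}^{t+1}$ is itself an $\vep$-FJ point of~\eqref{eq:fair_prob_appendix} in the sense of Definition~\ref{def:fjpoint}, after which the claim is immediate from the definition of an $(\vep,\vep)$-FJ point: the hypothesis $\norm{\hat{\xbf}^{t+1}-\xbf^{t}}\le\vep/\brbra{L\sigma_{\max}(\mathbf{W}^{\top}\mathbf{W})}$ combined with $L\sigma_{\max}(\mathbf{W}^{\top}\mathbf{W})>1$ (which is exactly the standing assumption $L>\sigma_{\max}(\mathbf{W}^{\top}\mathbf{W})^{-1}$) yields $\norm{\xbf^{t}-\hat{\xbf}^{t+1}}\le\vep$, so $\xbf^{t}$ is an $(\vep,\vep)$-FJ point witnessed by $\hat{\xbf}^{t+1}$.

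First I would record that subproblem~\eqref{eq:lcp_subprob_appendix} is a convex program over the convex compact set $\mcal X$: $\phi_t^0=f+L\mathbf{D}_{\mathbf{W}}(\cdot,\xbf^{t})$ is convex, and since $h_i$ is $L_{h_i}$-$\mathbf{D}_{\mathbf{W}}$ relatively smooth and $L>\rho\ge L_{h_i}$, the function $L\mathbf{D}_{\mathbf{W}}(\cdot,\xbf^{t})-h_i$ is convex, so every $g_i-h_i-\eta_i+L\mathbf{D}_{\mathbf{W}}(\cdot,\xbf^{t})$ is convex and $\bar{\phi}_t=\max_{i}\{\cdot\}$ is convex. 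For such a problem the Fritz--John conditions hold at the minimizer $\hat{\xbf}^{t+1}$ \emph{without any constraint qualification}; moreover they cannot be upgraded to KKT in general because Slater may fail (e.g.\ $\bar{\phi}_t(\xbf^{t})=\bar{\phi}(\xbf^{t})$ may equal $0$). So there are multipliers $\hat y_0\ge0$, $\hat\ybf=(\hat y_1,\dots,\hat y_m)\ge 0$, not all zero, which after rescaling satisfy $\hat y_0+\sum_{i=1}^{m}\hat y_i=1$, the complementary slackness
\[
\hat y_i\brbra{g_i(\hat{\xbf}^{t+1})-h_i(\hat{\xbf}^{t+1})-\eta_i+L\mathbf{D}_{\mathbf{W}}(\hat{\xbf}^{t+1},\xbf^{t})}=0,\qquad i\in[m],
\]
and, using $\nabla_{\xbf}\mathbf{D}_{\mathbf{W}}(\xbf,\xbf^{t})=\mathbf{W}^{\top}\mathbf{W}(\xbf-\xbf^{t})$ together with $\hat y_0+\sum_i\hat y_i=1$ to merge every proximal-gradient contribution into one term, a stationarity inclusion
\[
\hat y_0 d_{\phi_0}+\sum_{i=1}^{m}\hat y_i d_{\phi_i}+L\mathbf{W}^{\top}\mathbf{W}(\hat{\xbf}^{t+1}-\xbf^{t})\in-\mcal N_{\mcal X}(\hat{\xbf}^{t+1}),
\]
for suitable $d_{\phi_0}\in\partial f(\hat{\xbf}^{t+1})$ and $d_{\phi_i}\in\partial(g_i-h_i)(\hat{\xbf}^{t+1})$.

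It then remains to verify the three ingredients of an $\vep$-FJ point. Feasibility: $\bar{\phi}(\hat{\xbf}^{t+1})\le\bar{\phi}_t(\hat{\xbf}^{t+1})\le0$ since $L\mathbf{D}_{\mathbf{W}}(\hat{\xbf}^{t+1},\xbf^{t})\ge 0$. Stationarity residual: the inclusion above gives $\dist\brbra{\hat y_0 d_{\phi_0}+\sum_i\hat y_i d_{\phi_i},-\mcal N_{\mcal X}(\hat{\xbf}^{t+1})}\le\norm{L\mathbf{W}^{\top}\mathbf{W}(\hat{\xbf}^{t+1}-\xbf^{t})}\le L\sigma_{\max}(\mathbf{W}^{\top}\mathbf{W})\norm{\hat{\xbf}^{t+1}-\xbf^{t}}\le\vep$. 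Approximate complementary-slackness residual: for every $i$ with $\hat y_i>0$ the displayed identity forces $g_i(\hat{\xbf}^{t+1})-h_i(\hat{\xbf}^{t+1})-\eta_i=-L\mathbf{D}_{\mathbf{W}}(\hat{\xbf}^{t+1},\xbf^{t})$, hence
\[
\babs{\hat y_i\brbra{g_i(\hat{\xbf}^{t+1})-h_i(\hat{\xbf}^{t+1})-\eta_i}}\le L\mathbf{D}_{\mathbf{W}}(\hat{\xbf}^{t+1},\xbf^{t})\le\tfrac{L}{2}\sigma_{\max}(\mathbf{W}^{\top}\mathbf{W})\norm{\hat{\xbf}^{t+1}-\xbf^{t}}^{2}\le\tfrac{\vep^{2}}{2L\sigma_{\max}(\mathbf{W}^{\top}\mathbf{W})}\le\vep^{2},
\]
and this term is $0$ when $\hat y_i=0$; taking square roots gives the bound $\le\vep$ for all $i$ required in Definition~\ref{def:fjpoint}. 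Combined with the reduction in the first paragraph, this proves the lemma.

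I expect the only genuinely delicate step to be the middle one: one must invoke Fritz--John rather than KKT for the convex subproblem, keep the normalization $\sum_{i=0}^{m}\hat y_i=1$ so that $\sum_i\hat y_i\,\mathbf{W}^{\top}\mathbf{W}(\hat{\xbf}^{t+1}-\xbf^{t})=(1-\hat y_0)\mathbf{W}^{\top}\mathbf{W}(\hat{\xbf}^{t+1}-\xbf^{t})$ merges cleanly with the $\hat y_0$-term, and use $\mathbf{D}_{\mathbf{W}}(\xbf,\xbf^{+})\le\tfrac12\sigma_{\max}(\mathbf{W}^{\top}\mathbf{W})\norm{\xbf-\xbf^{+}}^{2}$; everything else is a one-line norm estimate driven by the standing bound on $\norm{\hat{\xbf}^{t+1}-\xbf^{t}}$.
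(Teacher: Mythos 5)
Your proposal is correct and follows essentially the same route as the paper's proof: certify that the exact subproblem minimizer $\hat{\xbf}^{t+1}$ is an $\vep$-FJ point by reading off the FJ stationarity and complementary-slackness conditions of the proximally regularized subproblem, bound the residuals by $\norm{L\mathbf{W}^{\top}\mathbf{W}(\hat{\xbf}^{t+1}-\xbf^{t})}\le L\sigma_{\max}(\mathbf{W}^{\top}\mathbf{W})\norm{\hat{\xbf}^{t+1}-\xbf^{t}}$ and $L\mathbf{D}_{\mathbf{W}}(\hat{\xbf}^{t+1},\xbf^{t})\le\vep^{2}/(2L\sigma_{\max}(\mathbf{W}^{\top}\mathbf{W}))$, and then transfer to $\xbf^{t}$ via $L>\sigma_{\max}(\mathbf{W}^{\top}\mathbf{W})^{-1}$. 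The only cosmetic difference is that you carry $m$ separate constraint multipliers normalized to sum to one, whereas the paper works with a single multiplier for the aggregated max-constraint $\bar{\phi}_t$; the two are interchangeable and your convexity justification of the FJ conditions (via $L>\rho$) is, if anything, slightly more explicit than the paper's.
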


\begin{proof}
It follows from the definition of $\hat{\xbf}^{t+1}$ that $\hat{\xbf}^{t+1}$
satisfying FJ condition~\eqref{eq:fj_point_1_appendix} and~\eqref{eq:fj_point_2_appendix}.
Then there exists $y_{t0}\geq0,y_{t}\geq0,y_{t0}+y_{t}=1,\nu\in\mcal N_{\mcal X}(\hat{\xbf}^{t+1}), d_f(\hat{\xbf}^{t+1})\in \partial f(\hat{\xbf}^{t+1}), d_{\bar{\phi}}(\hat{\xbf}^{t+1})\in \partial \bar{\phi}(\hat{\xbf}^{t+1})$
such that 
\begin{equation*}
y_{t0}(d_f(\hat{\xbf}^{t+1})+L\mathbf{W}^{\top} \mathbf{W}(\hat{\xbf}^{t+1}-\xbf^{t}))+y_{t}(d_{\bar{\phi}}(\hat{\xbf}^{t+1})+L\mathbf{W}^{\top} \mathbf{W}(\hat{\xbf}^{t+1}-\xbf^{t}))=-\nu.
\end{equation*}
When $\norm{\hat{\xbf}^{t+1}-\xbf^{t}}\leq\tfrac{\vep}{L\sigma_{\max}(\mathbf{W}^{\top} \mathbf{W})}$,
we have
\begin{equation}
\norm{y_{t0}(d_f(\hat{\xbf}^{t+1})+y_{t}(d_{\bar{\phi}}(\hat{\xbf}^{t+1})+\nu}=\norm{L\mathbf{W}^{\top} \mathbf{W}(\hat{\xbf}^{t+1}-\xbf^{t})}\leq\vep.\label{eq:fj2}
\end{equation}
When $y_{t}=0,$ it is easy to obtain $\abs{y_{k}\bar{\phi}(\hat{\xbf}^{t+1})}=0$,
thus we only consider the case that $y_{k}$ is positive. If $y_{k}\geq0$,
then we have $\phi_{t}(\hat{\xbf}^{t+1})=0$, then we have
\begin{equation*}
0\geq\bar{\phi}(\hat{\xbf}^{t+1})=-L\mathbf{D}_{\Wbf}(\hat{\xbf}^{t+1},\xbf^{t})\geq-\tfrac{1}{2}L\sigma_{\max}(\mathbf{W}^{\top} \mathbf{W})\norm{\hat{\xbf}^{t+1}-\xbf^{t}}^{2}\geq-\tfrac{\vep^{2}}{2L\sigma_{\max}(\mathbf{W}^{\top} \mathbf{W})}.
\end{equation*}
 Therefore, we can obtain
\begin{equation}
\abs{y_{k}\bar{\phi}(\hat{\xbf}^{t+1})}\leq\abs{\bar{\phi}(\hat{\xbf}^{t+1})}\leq\tfrac{\vep^{2}}{2L\sigma_{\max}(\mathbf{W}^{\top} \mathbf{W})}\leq\vep^{2}.\label{eq:fj1}
\end{equation}
Combining~\eqref{eq:fj1} and~\eqref{eq:fj2}, we have $\hat{\xbf}^{t+1}$
is an $\vep$-FJ point of problem~\eqref{eq:fair_prob_appendix}. Due to $\norm{\hat{\xbf}^{t+1}-\xbf^{t}}\leq\tfrac{\vep}{L\sigma_{\max}(\mathbf{W}^{\top} \mathbf{W})}\leq\vep$,
$\xbf^{t}$ is an $(\vep,\vep)$-FJ point for problem~\eqref{eq:fair_prob_appendix}.
\end{proof}

\begin{lem}
\label{lem:3.4} Suppose $\vep_{3}=\vep_{4}=\tfrac{(L-\rho)\vep^{2}\sigma_{\min}(\mathbf{W}^{\top} \mathbf{W})}{4L^{2}\sigma_{\max}(\mathbf{W}^{\top} \mathbf{W})^{2}}$
in {\ilcp}, then we have {\ilcp}
has $\bar{\phi}(\xbf^{t})\leq0$ before $\xbf^{t}$ becomes an $(\vep,\vep)$-FJ
point.
\end{lem}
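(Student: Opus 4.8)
The plan is to prove, by induction on the outer counter $t$, that $\bar{\phi}(\xbf^t)\le0$ for all $t\le t^\star$, where $t^\star$ is the first index at which $\xbf^{t^\star}$ is an $(\vep,\vep)$-FJ point (and $t^\star=\infty$ if no such index exists); this is exactly the statement that {\ilcp} keeps its iterates feasible for~\eqref{eq:fair_prob_appendix} up until it first hits an $(\vep,\vep)$-FJ point. The base case is immediate because the input $\xbf^0$ to {\ilcp} is required to be feasible, so $\bar{\phi}(\xbf^0)\le0$. For the inductive step I would take $t<t^\star$ with $\bar{\phi}(\xbf^t)\le0$; then $\xbf^t$ is not an $(\vep,\vep)$-FJ point, so the contrapositive of Lemma~\ref{lem:2.5} gives $\norm{\hat{\xbf}^{t+1}-\xbf^t}>\tfrac{\vep}{L\sigma_{\max}(\mathbf{W}^\top\mathbf{W})}$, where $\hat{\xbf}^{t+1}$ is the exact minimizer of sub-problem~\eqref{eq:lcp_subprob_appendix}. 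Observe that $\hat{\xbf}^{t+1}$ is well defined precisely because $\bar{\phi}(\xbf^t)\le0$ makes $\xbf^t$ feasible for~\eqref{eq:lcp_subprob_appendix} (the feasible set is nonempty, $\mcal X$ is compact, and the data are lower semicontinuous); this dependence on the previous iterate's feasibility is what forces the statement to be an induction rather than a one-step claim.

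With that in hand I would argue the step by contradiction: suppose $\bar{\phi}(\xbf^{t+1})>0$ and derive that $\xbf^t$ must after all be an $(\vep,\vep)$-FJ point. From the $(\vep_3,\vep_4)$-optimality of $\xbf^{t+1}$ (Definition~\ref{defn:optimal_appendix}), $\bar{\phi}_t(\xbf^{t+1})=\bar{\phi}(\xbf^{t+1})+L\mathbf{D}_{\mathbf{W}}(\xbf^{t+1},\xbf^t)\le\vep_4$, so $\bar{\phi}(\xbf^{t+1})>0$ forces $\mathbf{D}_{\mathbf{W}}(\xbf^{t+1},\xbf^t)<\vep_4/L$, and converting via $\mathbf{D}_{\mathbf{W}}(\ubf,\vbf)\ge\tfrac12\sigma_{\min}(\mathbf{W}^\top\mathbf{W})\norm{\ubf-\vbf}^2$ gives $\norm{\xbf^{t+1}-\xbf^t}^2<\tfrac{2\vep_4}{L\sigma_{\min}(\mathbf{W}^\top\mathbf{W})}$. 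Separately I would bound $\norm{\xbf^{t+1}-\hat{\xbf}^{t+1}}$ using that $\phi_t^0=f+L\mathbf{D}_{\mathbf{W}}(\cdot,\xbf^t)$ is $L$-$\mathbf{D}_{\mathbf{W}}$ relatively strongly convex (it is a sum of a convex function and a $\mathbf{D}_{\mathbf{W}}$-quadratic, cf.\ Definition~\ref{defn:rel_strong_convex}): the quadratic-growth inequality at the minimizer together with $\phi_t^0(\xbf^{t+1})-\phi_t^0(\hat{\xbf}^{t+1})\le\vep_3$ yields $\mathbf{D}_{\mathbf{W}}(\xbf^{t+1},\hat{\xbf}^{t+1})\le C(\vep_3+\vep_4)/L$ for an absolute constant $C$, hence $\norm{\xbf^{t+1}-\hat{\xbf}^{t+1}}^2\le\tfrac{2C(\vep_3+\vep_4)}{L\sigma_{\min}(\mathbf{W}^\top\mathbf{W})}$ (here one must be careful that $\xbf^{t+1}$ is only $\vep_4$-feasible for the sub-problem, which is why the $\vep_4$ term appears). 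The triangle inequality then bounds $\norm{\hat{\xbf}^{t+1}-\xbf^t}$ from above in terms of $\vep_3,\vep_4,L$ and the singular values of $\mathbf{W}^\top\mathbf{W}$.

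The final step is numerology: substitute the prescribed $\vep_3=\vep_4=\tfrac{(L-\rho)\vep^2\sigma_{\min}(\mathbf{W}^\top\mathbf{W})}{4L^2\sigma_{\max}(\mathbf{W}^\top\mathbf{W})^2}$ and verify that the resulting upper bound on $\norm{\hat{\xbf}^{t+1}-\xbf^t}$ is at most $\tfrac{\vep}{L\sigma_{\max}(\mathbf{W}^\top\mathbf{W})}$, directly contradicting the lower bound that came from $\xbf^t$ not being $(\vep,\vep)$-FJ. That contradiction forces $\bar{\phi}(\xbf^{t+1})\le0$, completing the induction. I expect the main obstacle to be exactly this bookkeeping of constants: the value of $\vep_3=\vep_4$ in the statement is reverse-engineered from the two $\mathbf{D}_{\mathbf{W}}$-to-norm conversions, the triangle inequality, and the threshold of Lemma~\ref{lem:2.5}, so one has to track carefully where $\sigma_{\min}$ versus $\sigma_{\max}$ enter and that the objective-side modulus is $L$ (it is the constraint side whose relative-strong-convexity modulus is $L-\rho$); taking $\vep_3=\vep_4$ symmetrizes the chain so a single prescribed value suffices, and one also has to keep the approximate-feasibility slack of $\xbf^{t+1}$ under control. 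This lemma is what legitimizes the telescoping bound $f(\xbf^{t+1})\le f(\xbf^t)-L\mathbf{D}_{\mathbf{W}}(\xbf^{t+1},\xbf^t)+\vep_3$ used in the proof of Theorem~\ref{thm:3.2_appendix}, since it guarantees $\xbf^t$ is feasible for the sub-problem and hence $\phi_t^0(\hat{\xbf}^{t+1})\le\phi_t^0(\xbf^t)=f(\xbf^t)$.
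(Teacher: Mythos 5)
Your proposal follows essentially the same route as the paper's proof: induction on feasibility of $\xbf^t$, the lower bound $\norm{\hat{\xbf}^{t+1}-\xbf^{t}}\geq\vep/(L\sigma_{\max}(\mathbf{W}^{\top}\mathbf{W}))$ obtained from Lemma~\ref{lem:2.5} before an $(\vep,\vep)$-FJ point is reached, a bound on $\mathbf{D}_{\Wbf}(\xbf^{t+1},\hat{\xbf}^{t+1})$ from approximate optimality, and a triangle-type inequality to conclude that $\mathbf{D}_{\Wbf}(\xbf^{t+1},\xbf^{t})$ is large. The paper writes the step directly, as $\bar{\phi}(\xbf^{t+1})=\bar{\phi}_{t}(\xbf^{t+1})-L\mathbf{D}_{\Wbf}(\xbf^{t+1},\xbf^{t})\leq\vep_{4}-L\cdot(\text{lower bound})<0$, which is your contradiction read backwards, so the framing difference is cosmetic.

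The one step you must tighten is the claim that ``the quadratic-growth inequality at the minimizer together with $\phi_{t}^{0}(\xbf^{t+1})-\phi_{t}^{0}(\hat{\xbf}^{t+1})\leq\vep_{3}$ yields $\mathbf{D}_{\Wbf}(\xbf^{t+1},\hat{\xbf}^{t+1})\leq C(\vep_{3}+\vep_{4})/L$.'' As literally stated this fails: $\hat{\xbf}^{t+1}$ minimizes $\phi_{t}^{0}$ only over the subproblem's feasible set $\{\bar{\phi}_{t}\leq0\}\cap\mcal X$, not over $\mcal X$, so quadratic growth of $\phi_{t}^{0}$ alone does not hold at $\hat{\xbf}^{t+1}$ for a comparison point like $\xbf^{t+1}$ that may violate the constraint. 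You correctly anticipate that the $\vep_{4}$-slack must enter, but the mechanism that makes it rigorous is the one the paper uses: take the FJ multipliers $(y_{t0},y_{t})$ of sub-problem~\eqref{eq:lcp_subprob_appendix} at $\hat{\xbf}^{t+1}$, use stationarity (the normal-cone inequality) and complementary slackness $y_{t}\bar{\phi}_{t}(\hat{\xbf}^{t+1})=0$, and apply relative strong convexity to the combination $y_{t0}\phi_{t}^{0}+y_{t}\bar{\phi}_{t}$, whose modulus is $L-\rho$ (the constraint side, as you note at the end), to obtain $(L-\rho)\mathbf{D}_{\Wbf}(\xbf^{t+1},\hat{\xbf}^{t+1})\leq y_{t0}\vep_{3}+y_{t}\vep_{4}\leq\max\{\vep_{3},\vep_{4}\}$ since $y_{t0}+y_{t}=1$. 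With that substitution (denominator $L-\rho$, not $L$), your constant chase reproduces the paper's bound and the prescribed value of $\vep_{3}=\vep_{4}$.
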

\begin{proof}
We show the iterative sequence $\{\xbf^{t}\}$ is feasible, i.e.,
$\bar{\phi}(\xbf^{t})\leq0$ before algorithm reaches $(\vep,\vep)$-FJ
point by induction. Assume $\phi_{t}(\xbf^{t})=\bar{\phi}(\xbf^{t})\leq0$.
Then there exists $d_{\phi_t^0}(\hat{\xbf}^{t+1})\in \partial \phi_t^{0}(\hat{\xbf}^{t+1})$ and $d_{\bar{\phi}_t}(\hat{\xbf}^{t+1})\in \partial \bar{\phi}_t(\hat{\xbf}^{t+1})$ such that
\begin{equation}
\begin{split}y_{t0}\phi_{t}^{0}(\xbf^{t+1})+y_{t}\bar{\phi}_{t}(\xbf^{t+1}) & \geq\inprod{y_{t0}d_{\phi_t^0}(\hat{\xbf}^{t+1})+y_{k}d_{\bar{\phi}_t}(\hat{\xbf}^{t+1})}{\xbf^{t+1}-\hat{\xbf}^{t+1}}\\
 & \ \ +y_{t0}\phi_{t}^{0}(\hat{\xbf}^{t+1})+y_{t}\bar{\phi}_{t}(\hat{\xbf}^{t+1})+(L-\rho)\mathbf{D}_{\Wbf}(\xbf^{t+1},\hat{\xbf}^{t+1})
\end{split}
\label{eq:temp01}
\end{equation}
Since $y_{t0}d_{\phi_t^0}(\hat{\xbf}^{t+1})+y_{t}d_{\bar{\phi}_t}(\hat{\xbf}^{t+1})\in-\mcal N_{\mcal X}(\hat{\xbf}^{t+1})$
by FJ condition, and due to $\xbf^{t+1}\in\mcal X$, we have
\begin{equation}
(y_{t0}d_{\phi_t^0}(\hat{\xbf}^{t+1})+y_{t}d_{\bar{\phi}_t}(\hat{\xbf}^{t+1}))^{T}(\xbf^{t+1}-\hat{\xbf}^{t+1})\geq0.\label{eq:temp02}
\end{equation}
Furthermore, by $y_{t}\bar{\phi}_{t}(\hat{\xbf}^{t+1})=0$ by FJ conditions,
and $\xbf^{t+1}$ being an $(\vep_{3},\vep_{4})$-optimal solution
for the subproblem yields $\phi_{t}^{0}(\xbf^{t+1})-\phi_{t}^{0}(\hat{\xbf}^{t+1})\leq\vep_{3}$
and $\bar{\phi}_{t}(\xbf^{t+1})\leq\vep_{4}$, then combining~\eqref{eq:temp01}
and~\eqref{eq:temp02}, we have
\begin{equation*}
(L-\rho)\mathbf{D}_{\Wbf}(\xbf^{t+1},\hat{\xbf}^{t+1})\leq y_{t0}\vep_{3}+y_{t}\vep_{4}.
\end{equation*}
It follows from Lemma~\ref{lem:2.5} that before reaching the $(\vep,\vep)$-
FJ point, we have $\norm{\hat{\xbf}^{t+1}-\xbf^{t}}\geq\tfrac{\vep}{L\sigma_{\max}(\mathbf{W}^{\top} \mathbf{W})},$
which implies ${\mathbf{D}_{\Wbf}(\hat{\xbf}^{t+1},\xbf^{t})\geq\tfrac{\sigma_{\min}(\mathbf{W}^{\top} \mathbf{W})\vep^{2}}{L^{2}\sigma_{\max}(\mathbf{W}^{\top} \mathbf{W})^{2}}}$.
Therefore, we can obtain
\begin{equation*}
\mathbf{D}_{\Wbf}(\xbf^{t+1},\xbf^{t})\geq\tfrac{1}{2}\mathbf{D}_{\Wbf}(\hat{\xbf}^{t+1},\xbf^{t})-\mathbf{D}_{\Wbf}(\hat{\xbf}^{t+1},\xbf^{t+1})\geq\tfrac{1}{2}\tfrac{\sigma_{\min}(\mathbf{W}^{\top} \mathbf{W})\vep^{2}}{L^{2}\sigma_{\max}(\mathbf{W}^{\top} \mathbf{W})^{2}}-\tfrac{y_{t0}\vep_{3}+y_{t}\vep_{4}}{(L-\rho)}
\end{equation*}

By the definition of $\vep_3$ and $\vep_4$, we have
\begin{equation*}
\mathbf{D}_{\Wbf}(\xbf^{t+1},\xbf^{t})>\tfrac{\sigma_{\min}(\mathbf{W}^{\top} \mathbf{W})\vep^{2}}{2L^{2}\sigma_{\max}(\mathbf{W}^{\top} \mathbf{W})^{2}}-\tfrac{(L-\rho)\sigma_{\min}(\mathbf{W}^{\top} \mathbf{W})\vep^{2}}{4L^{2}\sigma_{\max}(\mathbf{W}^{\top} \mathbf{W})^{2}}\cdot\tfrac{1}{(L-\rho)}=\tfrac{\sigma_{\min}(\mathbf{W}^{\top} \mathbf{W})\vep^{2}}{4L^{2}\sigma_{\max}(\mathbf{W}^{\top} \mathbf{W})^{2}},
\end{equation*}
which implies that 
\begin{align*}
\bar{\phi}(\xbf^{t+1}) & =\bar{\phi}_{t}(\xbf^{t+1})-L\mathbf{D}_{\Wbf}(\xbf^{t+1},\xbf^{t})\\
 & \leq\tfrac{(L-\rho)\sigma_{\min}(\mathbf{W}^{\top} \mathbf{W})\vep^{2}}{4L^{2}\sigma_{\max}(\mathbf{W}^{\top} \mathbf{W})}-\tfrac{\sigma_{\min}(\mathbf{W}^{\top} \mathbf{W})\vep^{2}}{4L\sigma_{\max}(\mathbf{W}^{\top} \mathbf{W})^{2}}=-\tfrac{\sigma_{\min}(\mathbf{W}^{\top} \mathbf{W})\rho\vep^{2}}{4L^{2}\sigma_{\max}(\mathbf{W}^{\top} \mathbf{W})}<0.\nonumber 
\end{align*}
Now, we complete our induction.
\end{proof}

\begin{proof}[Main Proof of Theorem~\ref{thm:3.2_appendix}]
    It follows from Lemma~\ref{lem:3.4} that the iterative sequence
$\{\xbf^{t}\}$ are always feasible, i.e., $\bar{\phi}(\xbf^{t})\leq0$
for the problem~\eqref{eq:fair_prob_appendix} before we reach an $(\vep,\vep)$-FJ
point. For any $\hat{\xbf}^{t+1},y_{t0},y_{t}$ satisfy FJ condition
of sub-problem~\eqref{eq:lcp_subprob_appendix}, we have there exist $d_{\phi_t^0}(\hat{\xbf}^{t+1})\in \partial \phi_t^{0}(\hat{\xbf}^{t+1})$ and $d_{\bar{\phi}_t}(\hat{\xbf}^{t+1})\in \partial \bar{\phi}_t(\hat{\xbf}^{t+1})$ such that 
\begin{align}
y_{t0}d_{\phi_t^0}(\hat{\xbf}^{t+1})+y_{t}d_{\bar{\phi}_t}(\hat{\xbf}^{t+1}) & \in-\mcal N_{\mcal X}(\hat{\xbf}^{t+1}),\label{eq:optimal_sol}\\
y_{t}\bar{\phi}_{t}(\hat{\xbf}^{t+1}) & =0.\label{eq:cross0}
\end{align}
It follows from~\eqref{eq:optimal_sol} and $\xbf^{t}\in\mcal X$
that 
\begin{equation}
\brbra{y_{t0}d_{\phi_t^0}(\hat{\xbf}^{t+1})+y_{t}d_{\bar{\phi}_t}(\hat{\xbf}^{t+1})}^{T}(\xbf^{t}-\hat{\xbf}^{t+1})\geq0.\label{eq:opt_sol}
\end{equation}
 Note the following inequality
\begin{equation}
\begin{split}y_{t0}\phi_{t}^{0}(\xbf^{t})+y_{t}\bar{\phi}_{t}(\xbf^{t}) & \geq\inprod{y_{t0}d_{\phi_t^0}(\hat{\xbf}^{t+1})+y_{t}d_{\bar{\phi}_t}(\hat{\xbf}^{t+1})}{\xbf^{t}-\hat{\xbf}^{t+1}}\\
 & \ \ +y_{t0}\phi_{t}^{0}(\hat{\xbf}^{t+1})+y_{t}\bar{\phi}_{t}(\hat{\xbf}^{t+1})+(L-\rho)\mathbf{D}_{\Wbf}(\xbf^{t},\hat{\xbf}^{t+1}),
\end{split}
\label{eq:temp01-1}
\end{equation}
holds by $y_{t0}\phi_{t}^{0}(\xbf)+y_{t}\bar{\phi}_{t}(\xbf)$ is
$L-\rho$ relative strongly convex, where $\hat{\xbf}^{t+1}$ be the
exact solution for sub-problem~\eqref{eq:lcp_subprob_appendix}. Since $\bar{\phi}_{t}(\xbf^{t})=\bar{\phi}(\xbf^{t})\leq0$
from Lemma~\ref{lem:3.4},~\eqref{eq:opt_sol} and~\eqref{eq:cross0},
we have
\begin{equation}
y_{t0}\phi_{t}^{0}(\xbf^{t})\geq y_{t0}\phi_{t}^{0}(\hat{\xbf}^{t+1})+(L-\rho)\mathbf{D}_{\Wbf}(\xbf^{t},\hat{\xbf}^{t+1}).\label{eq:temp03}
\end{equation}
Since $\xbf^{t+1}$ is a $(\vep_{3},\vep_{4})$- optimal solution
for sub-problem~\eqref{eq:lcp_subprob_appendix}, we have $\phi_{t}^{0}(\xbf^{t+1})-\phi_{t}^{0}(\hat{\xbf}^{t+1})\leq\vep_{3}$.
Then~\eqref{eq:temp03} can be further written as follows:
\begin{equation*}
y_{t0}f(\xbf^{t})\geq y_{t0}(f(\xbf^{t+1})-\vep_{3})+(L-\rho)\mathbf{D}_{\Wbf}(\xbf^{t},\hat{\xbf}^{t+1}).
\end{equation*}
Rearrange the above inequality, we have 
\begin{equation}
y_{t0}(f(\xbf^{t})-f(\xbf^{t+1}))\geq(L-\rho)\mathbf{D}_{\Wbf}(\xbf^{t},\hat{\xbf}^{t+1})-y_{t0}\vep_{3}.\label{eq:temp04}
\end{equation}

When $y_{t0}=0$, then $\mathbf{D}_{\Wbf}(\xbf^{t},\hat{\xbf}^{t+1})=0$ and
we attain an exact stationary point $\xbf^{t}$ for problem~\eqref{eq:fair_prob_appendix}.
Now, for the case $y_{t0}>0$, it follows from Lemma~\ref{lem:2.5} that before reaching the $(\vep,\vep)$-FJ
point $\hat{\xbf}^{t+1}$, we have
\begin{equation*}
{\mathbf{D}_{\Wbf}(\hat{\xbf}^{t+1},\xbf^{t})\geq\sigma_{\min}(\mathbf{W}^{\top} \mathbf{W})}\brbra{\tfrac{\vep}{L\sigma_{\max}(\mathbf{W}^{\top} \mathbf{W})}}^{2}.
\end{equation*}
Thus, before reaching the $(\vep,\vep)$-FJ point $\xbf^{t}$,
it follows from~\eqref{eq:temp04} that 
\begin{equation}\label{eq:key_ineq}
\begin{split}f(\xbf^{t})-f(\xbf^{t+1}) & \geq\tfrac{(L-\rho)\mathbf{D}_{\Wbf}(\xbf^{t},\hat{\xbf}^{t+1})-y_{t0}\vep_{3}}{y_{t0}}\\
 & \geq\tfrac{(L-\rho)\mathbf{D}_{\Wbf}(\xbf^{t},\hat{\xbf}^{t+1})}{y_{t0}}-\vep_{3}\\
 & \geq(L-\rho)\mathbf{D}_{\Wbf}(\xbf^{t},\hat{\xbf}^{t+1})-\vep_{3}\\
 & \geq\tfrac{(L-\rho)\sigma_{\min}(\mathbf{W}^{\top} \mathbf{W})\vep^{2}}{L^{2}\sigma_{\max}(\mathbf{W}^{\top} \mathbf{W})^{2}}-\tfrac{(L-\rho)\sigma_{\min}(\mathbf{W}^{\top} \mathbf{W})\vep^{2}}{4L^{2}\sigma_{\max}(\mathbf{W}^{\top} \mathbf{W})^{2}}\\
 & =\tfrac{3(L-\rho)\sigma_{\min}(\mathbf{W}^{\top} \mathbf{W})\vep^{2}}{4L^{2}\sigma_{\max}(\mathbf{W}^{\top} \mathbf{W})^{2}},
\end{split}
\end{equation}
Suppose $\xbf^T$ is the $(\vep,\vep)$-FJ point, then we have~\eqref{eq:key_ineq} holds for any $t=0,\ldots,T$. Summing~\eqref{eq:key_ineq} over  $0,\ldots,T$, we have
\begin{equation*}
T\leq\tfrac{4L^{2}\sigma_{\max}(\mathbf{W}^{\top} \mathbf{W})^{2}(f(\xbf^{0})-\inf_{\xbf\in\mcal X}f(\xbf))}{3(L-\rho)\sigma_{\min}(\mathbf{W}^{\top} \mathbf{W})\vep^{2}}-1.
\end{equation*}
Then, combining $\mu = L>0$ and the complexity result in Corollary~\ref{cor:inexact_cor_expected_appendix}, we complete our proof.
\end{proof}

\section{\label{sec:Heuristic-Mechansim}Heuristic Mechanism}

In order to bridge the gap between the original NP problem and its 
convex relaxation, we add a hyper-parameter to control the influence
of outliers. Specifically, consider that there exists a constraint
$\text{Pr}_{\abf\mid b=1}(\phi(\abf)\neq1)\leq0.01$ for the NPC problem.
Then the corresponding constraint of cross-entropy loss optimization
in our setting is $\sum_{i}^{n}\mathbb{I}(b{}_{i}=1)\log(F(\abf_{i}))\leq-n_{1}\log(1-0.01)\approx0.01n_{1}$.
And we assume $n_{1}=300$ for illustration. In most cases, this expectations
on the training set can help us get a good training error. However,
if there exists an outlier $\{\abf_{j},b_{j}=1\}$ such that $F(\abf_{j})=0.01$.
Hence, an outlier causes a loss term $\log(F(\abf_{j}))\approx4.6\geq0.01\cdot300$.
This makes the convex relaxation problem infeasible while the original
NP problem is feasible. So we artificially set a threshold to shrink
the predictions to a value that deviates too much. Experimental results
show that this heuristic can significantly improve the performance
of our algorithm.

\end{document}